\def\titlename{Safe and Stabilizing Distributed Multi-Path Cellular Flows}
\newcommand{\sayan}[1]{}
\newcommand{\taylor}[1]{}
\newcommand{\num}[1]{\relax\ifmmode \mathbb #1\else $\mathbb #1$\fi}
\newcommand{\nnnum}[1]{\relax\ifmmode 
  {\mathbb #1}_{\geq 0} \else ${\mathbb #1}_{\geq 0}$
  \fi}
\newcommand{\npnum}[1]{\relax\ifmmode 
  {\mathbb #1}_{\leq 0} \else ${\mathbb #1}_{\leq 0}$
  \fi}
\newcommand{\pnum}[1]{\relax\ifmmode 
  {\mathbb #1}_{> 0} \else ${\mathbb #1}_{> 0}$
  \fi}
\newcommand{\nnum}[1]{\relax\ifmmode 
  {\mathbb #1}_{< 0} \else ${\mathbb #1}_{< 0}$
  \fi}
\newcommand{\plnum}[1]{\relax\ifmmode 
  {\mathbb #1}_{+} \else ${\mathbb #1}_{+}$
  \fi}
\newcommand{\nenum}[1]{\relax\ifmmode 
  {\mathbb #1}_{-} \else ${\mathbb #1}_{-}$
  \fi}
\newcommand{\reals}{{\num R}}                    
\newcommand{\naturals}{{\num N}}                      
\newcommand{\extb}[1]{\relax\ifmmode {\sf ExtBeh}_{#1} \else ${\sf ExtBeh}_{#1}$\fi} 
\newcommand{\tdists}[1]{\relax\ifmmode {\sf Tdists}_{#1} \else ${\sf Tdists}_{#1}$\fi} 
\newcommand{\exec}[1]{\relax\ifmmode {\sf Execs}_{#1} \else ${\sf Exec}_{#1}$\fi} 
\newcommand{\execf}[1]{\relax\ifmmode {\sf Execs}^*_{#1} \else ${\sf Exec}^*_{#1}$\fi} 
\newcommand{\execi}[1]{\relax\ifmmode {\sf Execs}^\omega_{#1} \else ${\sf Exec}^\omega_{#1}$\fi} 
\newcommand{\ctrace}[1]{\relax\ifmmode {\sf Ctraces}_{#1} \else ${\sf Ctraces}_{#1}$\fi} 
\newcommand{\trace}[1]{\relax\ifmmode {\sf Traces}_{#1} \else ${\sf Traces}_{#1}$\fi} 
\newcommand{\tracef}[1]{\relax\ifmmode {\sf Traces}^*_{#1} \else ${\sf Traces}^*_{#1}$\fi} 
\newcommand{\tracei}[1]{\relax\ifmmode {\sf Traces}^\omega_{#1} \else ${\sf Traces}^\omega_{#1}$\fi} 
\newcommand{\frag}[1]{\relax\ifmmode {\sf Frags}_{#1} \else ${\sf Frags}_{#1}$\fi} 
\newcommand{\fragf}[1]{\relax\ifmmode {\sf Frags}^*_{#1} \else ${\sf Frags}^*_{#1}$\fi} 
\newcommand{\fragi}[1]{\relax\ifmmode {\sf Frags}^\omega_{#1} \else ${\sf Frags}^\omega_{#1}$\fi} 
\newcommand{\reach}[1]{\relax\ifmmode {\sf Reach}_{#1} \else ${\sf Reach}_{#1}$\fi}
\newcounter{theorem}
\newtheorem{rtheorem}{Theorem}
\newtheorem{lemma}[theorem]{Lemma}
\newtheorem{corollary}[theorem]{Corollary}
\newtheorem{inv}[theorem]{Invariant}
\newcounter{assump}
\newtheorem{assumption}[assump]{Assumption}
\def\A{{\cal A}} 
\def\E{{\cal E}} 
\def\I{{\cal I}} 
\def\R{{\cal R}} 
\def\T{{\cal T}} 
\def\U{{\cal U}} 
\newcommand{\col}[1]{\relax\ifmmode \mathscr #1\else $\mathscr #1$\fi}
\newcommand{\SC}[2]{\relax\ifmmode {\tt Scount}(#1,#2) \else ${\tt Scount}(#1,#2)$\fi} 
\newcommand{\SCM}[2]{\relax\ifmmode {\tt Smin}(#1,#2) \else ${\tt Smin}(#1,#2)$\fi} 
\newcommand{\Aut}[1]{\relax\ifmmode {\tt Aut}(#1) \else ${\tt Aut}(#1)$\fi} 
\newcommand{\auto}[1]{{\operatorname{\mathsf{#1}}}}
\newcommand{\act}[1]{{\operatorname{\mathsf{#1}}}}
\newcommand{\deq}{\overset{\scriptscriptstyle\triangle}{=}}
\newcommand{\seclabel}[1]{\label{sec:#1}}
\newcommand{\lemlabel}[1]{\label{lemma:#1}}
\newcommand{\corlabel}[1]{\label{cor:#1}}
\newcommand{\secref}[1]{Section~\ref{sec:#1}}
\newcommand{\sslabel}[1]{\label{ss:#1}}
\newcommand{\ssref}[1]{Subsection~\ref{ss:#1}}
\newcommand{\secrefthree}[3]{Sections~\ref{sec:#1},~\ref{sec:#2},~and~\ref{sec:#3}}
\newcommand{\figlabel}[1]{\label{fig:#1}}
\newcommand{\figref}[1]{Figure~\ref{fig:#1}}
\newcommand{\figreftwo}[2]{Figures~\ref{fig:#1}~and~\ref{fig:#2}}
\newcommand{\thmlabel}[1]{\label{thm:#1}}
\newcommand{\invlabel}[1]{\label{inv:#1}}
\newcommand{\lnsref}[2]{lines~\ref{line:#1}--\ref{line:#2}\xspace}
\newcommand{\lnref}[1]{line~\ref{line:#1}\xspace}
\newcommand{\lnreftwo}[2]{lines~\ref{line:#1}~and~~\ref{line:#2}\xspace}
\newcommand{\thmref}[1]{Theorem~\ref{thm:#1}\xspace}
\newcommand{\lmref}[1]{Lemma~\ref{lemma:#1}\xspace}
\newcommand{\lemref}[1]{Lemma~\ref{lemma:#1}\xspace}
\newcommand{\invref}[1]{Invariant~\ref{inv:#1}\xspace}
\newcommand{\assref}[1]{Assumption~\ref{ass:#1}\xspace}
\newcommand{\asslabel}[1]{\label{ass:#1}}
\newcommand{\assreftwo}[2]{Assumptions~\ref{ass:#1}~and~\ref{ass:#2}\xspace}
\newcommand{\eqlabel}[1]{\label{eq:#1}}
\newcommand{\remove}[1]{}
\newcommand{\salg}[1]{\relax\ifmmode {\mathcal F}_{#1}\else ${\mathcal F}_{#1}$\fi} 
\newcommand{\msp}[1]{\relax\ifmmode (#1, \salg{#1}) \else $(#1, \salg{#1})$\fi} 
\newcommand{\msprod}[2]{\relax\ifmmode ( #1 \times #2, \salg{#1} \otimes \salg{#2}) \else $(#1 \times #2, \salg{#1} \otimes \salg{#2})$\fi} 
\newcommand{\dist}[1]{\relax\ifmmode {\mathcal P}\msp{#1}
  \else ${\mathcal P}\msp{#1}$\fi} 
\newcommand{\subdist}[1]{\relax\ifmmode {\mathcal S}{\mathcal P}\msp{#1} 
  \else ${\mathcal S}{\mathcal P}\msp{#1}$\fi} 
\newcommand{\disc}[1]{\relax\ifmmode {\sf Disc}(#1)
  \else ${\sf Disc}(#1)$\fi} 
\newcommand{\Trajeq}{\relax\ifmmode {\mathcal R}_\T \else ${\mathcal R}_\T$\fi} 
\newcommand{\Acteq}{\relax\ifmmode {\mathcal R}_A \else ${\mathcal R}_A$\fi} 
\newcommand{\noop}{\relax\ifmmode \lambda \else $\lambda$\fi} 
\newcommand{\close}[1]{\relax\ifmmode \overline{#1} \else $\overline{#1}$\fi}
\newcommand{\pc}{{\operatorname{\mathsf {counter}}}}
\newcommand{\mybox}[3]{
  \framebox[#1][l]
  {
    \parbox{#2}
    {
      #3
    }
  }
}
\newcommand{\twosep}[4]{
  \parbox{\columnwidth}{\vspace{1pt} \vfill
    \parbox[t]{#1\columnwidth}{#3}%
   	\vrule width 0.2pt
    \parbox[t]{#2\columnwidth}{#4}%
  }}
\newcommand{\tup}[1]
           {
             \relax\ifmmode
             \langle #1 \rangle
             \else $\langle$ #1 $\rangle$ \fi
           }
\newcommand{\lit}[1]{ \relax\ifmmode
                \mathord{\mathcode`\-="702D\sf #1\mathcode`\-="2200}
                \else {\it #1} \fi }
\newcommand{\figuresize}{\scriptsize}
\lstdefinelanguage{ioa}{
  basicstyle=\figuresize,
  keywordstyle=\bf \figuresize,
  identifierstyle=\it \figuresize,
  emphstyle=\tt \figuresize,
  mathescape=true,
  tabsize=20,
  sensitive=false,
  columns=fullflexible,
  keepspaces=false,
  flexiblecolumns=true,
  basewidth=0.05em,
  escapeinside={(*@}{@*)},
  moredelim=[il][\rm]{//},
  moredelim=[is][\sf \figuresize]{!}{!},
  moredelim=[is][\bf \figuresize]{*}{*},
  keywords={automaton,and, 
  	 choose,const,continue, components,
  	 discrete, do,
  	 each,eff, external,else, elseif, evolve, end,
  	 fi,for, forward, from,
  	 hidden,
  	 in,input,internal,if,invariant, init,initially, imports,
     let,
     or, output, operators, od, of,
     pre,
     return,
     such,satisfies, stop, signature, simulation, 
     trajectories,trajdef, transitions, that,then, type, types, to, tasks,
     variables, vocabulary, 
     when,where, with,while},
  emph={set, seq, tuple, map, array, enumeration},   
   literate=
        {(}{{$($}}1
        {)}{{$)$}}1
        {\\in}{{$\in\ $}}1
        {\\preceq}{{$\preceq\ $}}1
        {\\subset}{{$\subset\ $}}1
        {\\subseteq}{{$\subseteq\ $}}1
        {\\supset}{{$\supset\ $}}1
        {\\supseteq}{{$\supseteq\ $}}1
        {\\forall}{{$\forall$}}1
        {\\le}{{$\le\ $}}1
        {\\ge}{{$\ge\ $}}1
        {\\gets}{{$\gets\ $}}1
        {\\cup}{{$\cup\ $}}1
        {\\cap}{{$\cap\ $}}1
        {\\langle}{{$\langle$}}1
        {\\rangle}{{$\rangle$}}1
        {\\exists}{{$\exists\ $}}1
        {\\bot}{{$\bot$}}1
        {\\rip}{{$\rip$}}1
        {\\emptyset}{{$\emptyset$}}1
        {\\notin}{{$\notin\ $}}1
        {\\not\\exists}{{$\not\exists\ $}}1
        {\\ne}{{$\ne\ $}}1
        {\\to}{{$\to\ $}}1
        {\\implies}{{$\implies\ $}}1
        {<}{{$<\ $}}1
        {>}{{$>\ $}}1
        {=}{{$=\ $}}1
        {~}{{$\neg\ $}}1
        {|}{{$\mid$}}1
        {'}{{$^\prime$}}1
        {\\A}{{$\forall\ $}}1
        {\\E}{{$\exists\ $}}1
        {\\nE}{{$\nexists\ $}}1
        {\\/}{{$\vee\,$}}1
        {\\vee}{{$\vee\,$}}1
        {/\\}{{$\wedge\,$}}1
        {\\wedge}{{$\wedge\,$}}1
        {=>}{{$\Rightarrow\ $}}1
        {->}{{$\rightarrow\ $}}1
        {<=}{{$\Leftarrow\ $}}1
        {<-}{{$\leftarrow\ $}}1
        {~=}{{$\neq\ $}}1
        {\\U}{{$\cup\ $}}1
        {\\I}{{$\cap\ $}}1
        {|-}{{$\vdash\ $}}1
        {-|}{{$\dashv\ $}}1
        {<<}{{$\ll\ $}}2
        {>>}{{$\gg\ $}}2
        {||}{{$\|$}}1
        {[}{{$[$}}1
        {]}{{$\,]$}}1
        {[[}{{$\langle$}}1
        {]]]}{{$]\rangle$}}1
        {]]}{{$\rangle$}}1
        {<=>}{{$\Leftrightarrow\ $}}2
        {<->}{{$\leftrightarrow\ $}}2
        {(+)}{{$\oplus\ $}}1
        {(-)}{{$\ominus\ $}}1
        {_i}{{$_{i}$}}1
        {_j}{{$_{j}$}}1
        {_{i,j}}{{$_{i,j}$}}3
        {_{j,i}}{{$_{j,i}$}}3
        {_0}{{$_0$}}1
        {_1}{{$_1$}}1
        {_2}{{$_2$}}1
        {_n}{{$_n$}}1
        {_p}{{$_p$}}1
        {_k}{{$_n$}}1
        {-}{{$\ms{-}$}}1
        {@}{{}}0
        {\\delta}{{$\delta$}}1
        {\\R}{{$\R$}}1
        {\\Rplus}{{$\Rplus$}}1
        {\\N}{{$\N$}}1
        {\\times}{{$\times\ $}}1
        {\\tau}{{$\tau$}}1
        {\\alpha}{{$\alpha$}}1
        {\\beta}{{$\beta$}}1
        {\\gamma}{{$\gamma$}}1
        {\\ell}{{$\ell\ $}}1
        {--}{{$-\ $}}1
        {\\TT}{{\hspace{1.5em}}}3        
      }
\lstdefinelanguage{ioaNums}[]{ioa}
{
  numbers=left,
  numberstyle=\tiny,
  stepnumber=2,
  numbersep=4pt
}
\lstdefinelanguage{ioaNumsRight}[]{ioa}
{
  numbers=right,
  numberstyle=\tiny,
  stepnumber=2,
  numbersep=4pt
}
\newcommand{\linefigioa}[9]{

}
\lstdefinelanguage{ioaLang}{%
  basicstyle=\ttfamily\small,
  keywordstyle=\rmfamily\bfseries\small,
  identifierstyle=\small,
  keywords={assumes,automaton,axioms,backward,bounds,by,case,choose,components,const,d,det,discrete,do,eff,else,elseif,ensuring,enumeration,evolve,fi,fire,follow,for,forward,from,hidden,if,in,%
    input,init,initially,internal,invariant,let, local,od,of,output,pre,schedule,signature,so,%
    simulation,states,variables, tasks, stop,tasks,that,then,to,trajdef,trajectory,trajectories,transitions,tuple,type,union,urgent,uses,when,where,while,yield},
  literate=
        {\\in}{{$\in$}}1
        {\\preceq}{{$\preceq$}}1
        {\\subset}{{$\subset$}}1
        {\\subseteq}{{$\subseteq$}}1
        {\\supset}{{$\supset$}}1
        {\\supseteq}{{$\supseteq$}}1
        {\\rho}{{$\rho$}}1
        {\\infty}{{$\infty$}}1
        {<}{{$<$}}1
        {>}{{$>$}}1
        {=}{{$=$}}1
        {~}{{$\neg$}}1 
        {|}{{$\mid$}}1
        {'}{{$^\prime$}}1
        {\\A}{{$\forall$}}1 {\\E}{{$\exists$}}1
        {\\/}{{$\vee$}}1 {/\\}{{$\wedge$}}1 
        {=>}{{$\Rightarrow$}}1 
        {->}{{$\rightarrow$}}1 
        {<=}{{$\leq$}}1 {>=}{{$\geq$}}1 {~=}{{$\neq$}}1
        {\\U}{{$\cup$}}1 {\\I}{{$\cap$}}1
        {|-}{{$\vdash$}}1 {-|}{{$\dashv$}}1
        {<<}{{$\ll$}}2 {>>}{{$\gg$}}2
        {||}{{$\|$}}1
        {<=>}{{$\Leftrightarrow$}}2 
        {<->}{{$\leftrightarrow$}}2
        {(+)}{{$\oplus$}}1
        {(-)}{{$\ominus$}}1
}
\lstdefinelanguage{bigIOALang}{%
  basicstyle=\ttfamily,
  keywordstyle=\rmfamily\bfseries,
  identifierstyle=,
  keywords={assumes,automaton,axioms,backward,by,case,choose,components,const,%
    d,det,discrete,do,eff,else,elseif,ensuring,enumeration,evolve,fi,for,forward,from,hidden,if,in%
    input,initially,internal,invariant,local,od,of,output,pre,schedule,signature,so,%
    tasks, simulation,states,stop,tasks,that,then,to,trajdef,trajectories,transitions,tuple,type,union,urgent,uses,when,where,yield},
  literate=
        {\\in}{{$\in$}}1
        {\\preceq}{{$\preceq$}}1
        {\\subset}{{$\subset$}}1
        {\\subseteq}{{$\subseteq$}}1
        {\\supset}{{$\supset$}}1
        {\\supseteq}{{$\supseteq$}}1
        {<}{{$<$}}1
        {>}{{$>$}}1
        {=}{{$=$}}1
        {~}{{$\neg$}}1 
        {|}{{$\mid$}}1
        {'}{{$^\prime$}}1
        {\\A}{{$\forall$}}1 {\\E}{{$\exists$}}1
        {\\/}{{$\vee$}}1 {/\\}{{$\wedge$}}1 
        {=>}{{$\Rightarrow$}}1 
        {->}{{$\rightarrow$}}1 
        {<=}{{$\leq$}}1 {>=}{{$\geq$}}1 {~=}{{$\neq$}}1
        {\\U}{{$\cup$}}1 {\\I}{{$\cap$}}1
        {|-}{{$\vdash$}}1 {-|}{{$\dashv$}}1
        {<<}{{$\ll$}}2 {>>}{{$\gg$}}2
        {||}{{$\|$}}1
        {<=>}{{$\Leftrightarrow$}}2 
        {<->}{{$\leftrightarrow$}}2
        {(+)}{{$\oplus$}}1
        {(-)}{{$\ominus$}}1
}
\endcsname\vspace{-4pt}\noindent}
\newcommand{\true}{\relax\ifmmode \mathit{true} \else \emph{true} \/\fi}
\newcommand{\false}{\relax\ifmmode \mathit{false} \else \emph{false} \/\fi}
\newlength{\bracklen}
\newcommand{\tri}[3]{\ensuremath{\mathit{#1}^\mathit{#2}_\mathit{#3}}}
\newcommand{\sugLocalVars}[2]{\ifthenelse{\equal{}{#2}}%
                             {\tri{localVars}{#1}{desug}}%
                             {\tri{localVars}{#1}{#2,desug}}}
\newcommand{\sugVars}[2]{\ifthenelse{\equal{}{#2}}%
                        {\tri{vars}{#1}{desug}}%
                        {\tri{vars}{#1}{#2,desug}}}
\newenvironment{subSyntax}{\begin{array}{l}}{\end{array}}
\newcommand{\ms}[1]{\ifmmode%
\mathord{\mathcode`-="702D\it #1\mathcode`\-="2200}\else%
$\mathord{\mathcode`-="702D\it #1\mathcode`\-="2200}$\fi}
\def\A{{\cal A}} 
\def\T{{\cal T}} 
\newcommand{\vx}{{\bf x}}
\newcommand{\vy}{{\bf y}}
\newcommand{\arrow}[1]{\mathrel{\stackrel{#1}{\rightarrow}}}
\lstdefinelanguage{pvs}{
  basicstyle=\tt \figuresize,
  keywordstyle=\sc \figuresize,
  identifierstyle=\it \figuresize,
  emphstyle=\tt \figuresize,
  mathescape=true,
  tabsize=20,
  sensitive=false,
  columns=fullflexible,
  keepspaces=false,
  flexiblecolumns=true,
  basewidth=0.05em,
  moredelim=[il][\rm]{//},
  moredelim=[is][\sf \figuresize]{!}{!},
  moredelim=[is][\bf \figuresize]{*}{*},
  keywords={and, 
  	 begin,
  	 cases, const,
  	 do,
  	 external, else, exists, end, endcases, endif,
  	 fi,for, forall, from,
  	 hidden,
  	 in, if, importing,
     let, lambda, lemma,
     measure, 
     not,
     or, of,
     return, recursive,
     stop, 
     theory, that,then, type, types, type+, to, theorem,
     var,
     with,while},
  emph={nat, setof, sequence, eq, tuple, map, array, enumeration, bool, real, exp, nnreal, posreal},   
   literate=
        {(}{{$($}}1
        {)}{{$)$}}1
        {\\in}{{$\in\ $}}1
        {\\mapsto}{{$\rightarrow\ $}}1
        {\\preceq}{{$\preceq\ $}}1
        {\\subset}{{$\subset\ $}}1
        {\\subseteq}{{$\subseteq\ $}}1
        {\\supset}{{$\supset\ $}}1
        {\\supseteq}{{$\supseteq\ $}}1
        {\\forall}{{$\forall$}}1
        {\\le}{{$\le\ $}}1
        {\\ge}{{$\ge\ $}}1
        {\\gets}{{$\gets\ $}}1
        {\\cup}{{$\cup\ $}}1
        {\\cap}{{$\cap\ $}}1
        {\\langle}{{$\langle$}}1
        {\\rangle}{{$\rangle$}}1
        {\\exists}{{$\exists\ $}}1
        {\\bot}{{$\bot$}}1
        {\\rip}{{$\rip$}}1
        {\\emptyset}{{$\emptyset$}}1
        {\\notin}{{$\notin\ $}}1
        {\\not\\exists}{{$\not\exists\ $}}1
        {\\ne}{{$\ne\ $}}1
        {\\to}{{$\to\ $}}1
        {\\implies}{{$\implies\ $}}1
        {<}{{$<\ $}}1
        {>}{{$>\ $}}1
        {=}{{$=\ $}}1
        {~}{{$\neg\ $}}1
        {|}{{$\mid$}}1
        {'}{{$^\prime$}}1
        {\\A}{{$\forall\ $}}1
        {\\E}{{$\exists\ $}}1
        {\\/}{{$\vee\,$}}1
        {\\vee}{{$\vee\,$}}1
        {/\\}{{$\wedge\,$}}1
        {\\wedge}{{$\wedge\,$}}1
        {->}{{$\rightarrow\ $}}1
        {=>}{{$\Rightarrow\ $}}1
        {->}{{$\rightarrow\ $}}1
        {<=}{{$\Leftarrow\ $}}1
        {<-}{{$\leftarrow\ $}}1
        {~=}{{$\neq\ $}}1
        {\\U}{{$\cup\ $}}1
        {\\I}{{$\cap\ $}}1
        {|-}{{$\vdash\ $}}1
        {-|}{{$\dashv\ $}}1
        {<<}{{$\ll\ $}}2
        {>>}{{$\gg\ $}}2
        {||}{{$\|$}}1
        {[}{{$[$}}1
        {]}{{$\,]$}}1
        {[[}{{$\langle$}}1
        {]]]}{{$]\rangle$}}1
        {]]}{{$\rangle$}}1
        {<=>}{{$\Leftrightarrow\ $}}2
        {<->}{{$\leftrightarrow\ $}}2
        {(+)}{{$\oplus\ $}}1
        {(-)}{{$\ominus\ $}}1
        {_i}{{$_{i}$}}1
        {_j}{{$_{j}$}}1
        {_{i,j}}{{$_{i,j}$}}3
        {_{j,i}}{{$_{j,i}$}}3
        {_0}{{$_0$}}1
        {_1}{{$_1$}}1
        {_2}{{$_2$}}1
        {_n}{{$_n$}}1
        {_p}{{$_p$}}1
        {_k}{{$_n$}}1
        {-}{{$\ms{-}$}}1
        {@}{{}}0
        {\\delta}{{$\delta$}}1
        {\\R}{{$\R$}}1
        {\\Rplus}{{$\Rplus$}}1
        {\\N}{{$\N$}}1
        {\\times}{{$\times\ $}}1
        {\\tau}{{$\tau$}}1
        {\\alpha}{{$\alpha$}}1
        {\\beta}{{$\beta$}}1
        {\\gamma}{{$\gamma$}}1
        {\\ell}{{$\ell\ $}}1
        {--}{{$-\ $}}1
        {\\TT}{{\hspace{1.5em}}}3        
      }
\lstdefinelanguage{BigPVS}{
  basicstyle=\tt,
  keywordstyle=\sc,
  identifierstyle=\it,
  emphstyle=\tt ,
  mathescape=true,
  tabsize=20,
  sensitive=false,
  columns=fullflexible,
  keepspaces=false,
  flexiblecolumns=true,
  basewidth=0.05em,
  moredelim=[il][\rm]{//},
  moredelim=[is][\sf \figuresize]{!}{!},
  moredelim=[is][\bf \figuresize]{*}{*},
  keywords={and, 
  	 begin,
  	 cases, const,
  	 do, datatype,
  	 external, else, exists, end, endif, endcases,
  	 fi,for, forall, from,
  	 hidden,
  	 in, if, importing,
     let, lambda, lemma,
     measure,
     not,
     or, of,
     return, recursive,
     stop, 
     theory, that,then, type, types, type+, to, theorem,
     var,
     with,while},
  emph={nat, setof, sequence, eq, tuple, map, array, first, rest, add, enumeration, bool, real, posreal, nnreal},   
   literate=
        {(}{{$($}}1
        {)}{{$)$}}1
        {\\in}{{$\in\ $}}1
        {\\mapsto}{{$\rightarrow\ $}}1
        {\\preceq}{{$\preceq\ $}}1
        {\\subset}{{$\subset\ $}}1
        {\\subseteq}{{$\subseteq\ $}}1
        {\\supset}{{$\supset\ $}}1
        {\\supseteq}{{$\supseteq\ $}}1
        {\\forall}{{$\forall$}}1
        {\\le}{{$\le\ $}}1
        {\\ge}{{$\ge\ $}}1
        {\\gets}{{$\gets\ $}}1
        {\\cup}{{$\cup\ $}}1
        {\\cap}{{$\cap\ $}}1
        {\\langle}{{$\langle$}}1
        {\\rangle}{{$\rangle$}}1
        {\\exists}{{$\exists\ $}}1
        {\\bot}{{$\bot$}}1
        {\\rip}{{$\rip$}}1
        {\\emptyset}{{$\emptyset$}}1
        {\\notin}{{$\notin\ $}}1
        {\\not\\exists}{{$\not\exists\ $}}1
        {\\ne}{{$\ne\ $}}1
        {\\to}{{$\to\ $}}1
        {\\implies}{{$\implies\ $}}1
        {<}{{$<\ $}}1
        {>}{{$>\ $}}1
        {=}{{$=\ $}}1
        {~}{{$\neg\ $}}1
        {|}{{$\mid$}}1
        {'}{{$^\prime$}}1
        {\\A}{{$\forall\ $}}1
        {\\E}{{$\exists\ $}}1
        {\\/}{{$\vee\,$}}1
        {\\vee}{{$\vee\,$}}1
        {/\\}{{$\wedge\,$}}1
        {\\wedge}{{$\wedge\,$}}1
        {->}{{$\rightarrow\ $}}1
        {=>}{{$\Rightarrow\ $}}1
        {->}{{$\rightarrow\ $}}1
        {<=}{{$\Leftarrow\ $}}1
        {<-}{{$\leftarrow\ $}}1
        {~=}{{$\neq\ $}}1
        {\\U}{{$\cup\ $}}1
        {\\I}{{$\cap\ $}}1
        {|-}{{$\vdash\ $}}1
        {-|}{{$\dashv\ $}}1
        {<<}{{$\ll\ $}}2
        {>>}{{$\gg\ $}}2
        {||}{{$\|$}}1
        {[}{{$[$}}1
        {]}{{$\,]$}}1
        {[[}{{$\langle$}}1
        {]]]}{{$]\rangle$}}1
        {]]}{{$\rangle$}}1
        {<=>}{{$\Leftrightarrow\ $}}2
        {<->}{{$\leftrightarrow\ $}}2
        {(+)}{{$\oplus\ $}}1
        {(-)}{{$\ominus\ $}}1
        {_i}{{$_{i}$}}1
        {_j}{{$_{j}$}}1
        {_{i,j}}{{$_{i,j}$}}3
        {_{j,i}}{{$_{j,i}$}}3
        {_0}{{$_0$}}1
        {_1}{{$_1$}}1
        {_2}{{$_2$}}1
        {_n}{{$_n$}}1
        {_p}{{$_p$}}1
        {_k}{{$_n$}}1
        {-}{{$\ms{-}$}}1
        {@}{{}}0
        {\\delta}{{$\delta$}}1
        {\\R}{{$\R$}}1
        {\\Rplus}{{$\Rplus$}}1
        {\\N}{{$\N$}}1
        {\\times}{{$\times\ $}}1
        {\\tau}{{$\tau$}}1
        {\\alpha}{{$\alpha$}}1
        {\\beta}{{$\beta$}}1
        {\\gamma}{{$\gamma$}}1
        {\\ell}{{$\ell\ $}}1
        {--}{{$-\ $}}1
        {\\TT}{{\hspace{1.5em}}}3        
      }
\lstdefinelanguage{pvsNums}[]{pvs}
{
  numbers=left,
  numberstyle=\tiny,
  stepnumber=2,
  numbersep=4pt
}
\lstdefinelanguage{pvsNumsRight}[]{pvs}
{
  numbers=right,
  numberstyle=\tiny,
  stepnumber=2,
  numbersep=4pt
}
\newcommand{\linefigpvs}[9]{

}
\lstdefinelanguage{pvsproof}{
  basicstyle=\tt \figuresize,
  mathescape=true,
  tabsize=4,
  sensitive=false,
  columns=fullflexible,
  keepspaces=false,
  flexiblecolumns=true,
  basewidth=0.05em,
}
\newcommand{\abs}[1]{\left\lvert#1\right\rvert}
\newcommand{\pair}[1]{\left\langle#1\right\rangle}
\newcommand{\norm}[1]{\left\lvert\left\lvert#1\right\rvert\right\rvert}
\newcommand{\minel}[2]{\underset{#2}{\operatorname{min}} #1}
\def\colorc{\mathit{[c]}}
\def\colord{\mathit{[d]}}
\def\NEPrev{{\mathit{NEPrev}}}
\def\NEPrevi{\NEPrev_i}
\def\NEPrevs{\NEPrev_s}
\def\Members{{\mathit{Entities}}}
\def\Membersi{\Members_i}
\def\Membersj{\Members_j}
\def\signal{{\mathit{signal}}}
\def\signali{\signal_i}
\def\signalj{\signal_j}
\def\signals{\signal_s}
\def\token{{\mathit{token}}}
\def\tokeni{\token_i}
\def\next{{\mathit{next}}}
\def\nexti{\next_i}
\def\nextic{\nexti\colorc}
\def\nextj{\next_j}
\def\nextjc{\nextj\colorc}
\def\dist{{\mathit{dist}}}
\def\disti{\dist_i}
\def\distj{\dist_j}
\def\distic{\disti\colorc}
\def\distjc{\distj\colorc}
\def\failed{{\mathit{failed}}} 
\def\failedi{\failed_i}
\def\Nbrs{{\mathit{Nbrs}}} 
\def\Nbrsi{\Nbrs_i}
\def\Nbrsj{\Nbrs_j}
\def\Nbrss{\Nbrs_s}
\def\ID{\mathit{ID}}
\def\SID{\mathit{ID}_S}
\def\TID{\mathit{ID}_T}
\def\tid{\mathit{tid}}
\def\tidc{\tid_c}
\def\sid{\mathit{sid}}
\def\sidc{\sid_c}
\def\NF{\mathit{NF}}
\def\TC{\mathit{TC}}
\def\GR{\mathit{G_R}}
\def\ER{\mathit{E_R}} 
\def\VR{\mathit{V_R}} 
\def\CSC{\mathit{CSC}} 
\def\SC{\mathit{SC}} 
\def\GE{\mathit{G_E}} 
\def\EE{\mathit{E_E}}
\def\VE{\mathit{V_E}}
\def\etype{\mathit{color}} 
\newcommand{\etypeCommand}[1]{\etype(#1)} 
\def\etypep{\etypeCommand{p}}
\def\etypeq{\etypeCommand{q}}
\def\etypei{\etype_i}
\def\etypej{\etype_j}
\def\Route{\mathit{Route}}
\def\Lock{\mathit{Lock}}
\def\Signal{\mathit{Signal}}
\def\Move{\mathit{Move}}
\def\rhoc{\rho_c}
\def\path{\mathit{path}}
\def\pathi{\path_i}
\def\pathic{\pathi\colorc}
\def\pathid{\pathi\colord}
\def\pint{\mathit{pint}}
\def\pinti{\pint_i}
\def\pintic{\pinti\colorc}
\def\lock{\mathit{lock}}
\def\locki{\lock_i}
\def\lockic{\locki\colorc}
\def\lockid{\locki\colord}
\def\lockcolors{\mathit{lcs}}
\def\lockcolorsi{\lockcolors_i}
\def\Safe{\mathit{Safe}}
\def\Safei{\Safe_i}
\def\Cell{{\auto{Cell}}}
\def\Celli{\Cell_i}
\def\Cellj{\Cell_j}
\def\System{{\auto{System}}}
\def\FaceRegion{\mathit{FR}}
\def\SideNormal{n}
\def\SideNormalij{\SideNormal(i,j)}
\def\MoveVector{u}
\def\MoveVectorij{\MoveVector(i,j)}
\def\Side{\mathit{Side}}
\def\Sideij{\Side(i,j)}
\newcommand{\bdry}[1]{\partial #1}
\def\EntityIdx{{\mathit{I}}}
\def\partition{{\mathit{P}}}
\def\partitioni{\partition_i}
\def\partitionj{\partition_j}
\newcommand{\EntityDisc}[2]{\mathit{B(#1, #2)}}
\def\EntityDiscpl{\EntityDisc{p}{\l}}
\def\EntityDiscpd{\EntityDisc{p}{\d}}
\newcommand{\ns}[1]{\mathit{ns}(#1)}
\def\nsi{\ns{i}}
\def\transferRegion{\mathit{TR}}
\def\innerTransferRegion{\mathit{ITR}}
\def\transferRegioni{\transferRegion_i}
\def\innerTransferRegioni{\innerTransferRegion_i}
\def\innerTransferRegionj{\innerTransferRegion_j}
\def\transferRegionis{\transferRegioni(s)}
\def\innerTransferRegionis{\innerTransferRegion_i(s)}
\def\safetyRegion{\mathit{SR}}
\def\safetyRegioni{\safetyRegion_i}
\def\safetyRegionis{\safetyRegioni(s)}
\def\bdryi{\bdry{\partitioni}}
\def\bdryj{\bdry{\partitionj}}
\def\Env{\mathit{Env}}
\def\Vertices{\mathit{V}}
\def\Verticesi{\Vertices_i}
\def\ResetEntity{\mathit{ResetEntity}}
\def\l{{\mathit{l}}}
\def\v{{\mathit{v}}}
\def\d{{\mathit{d}}}
\def\rs{{\mathit{r_s}}}
\def\nextc{\next\colorc}
\def\pintc{\pint\colorc}
\def\nextic{\nexti\colorc}
\def\nextjc{\nextj\colorc}
\def\distc{\dist\colorc}
\def\distic{\disti\colorc}
\def\pathic{\pathi\colorc}
\def\lockic{\locki\colorc}
\def\lockcolorsic{\lockcolorsi\colorc}
\def\pc{\overline{p}}
\def\qc{\overline{q}}
\journal{Theoretical Computer Science A}
\begin{document}

\begin{frontmatter}



\title{\titlename}


\author{Taylor~T.~Johnson\corref{cor1}}
 \ead{taylor.johnson@acm.org}
  
\author{Sayan~Mitra}
 \ead{mitras@illinois.edu}
 
 \cortext[cor1]{Corresponding author}
  
 \address{Coordinated Science Laboratory, University of Illinois at Urbana-Champaign,Urbana, IL 61801, USA}


\begin{abstract}
We study the problem of distributed traffic control in the partitioned plane, where the movement of all entities (robots, vehicles, etc.) within each partition (cell) is coupled.  Establishing liveness in such systems is challenging, but such analysis will be necessary to apply such distributed traffic control algorithms in applications like coordinating robot swarms and the intelligent highway system.  We present a formal model of a distributed traffic control protocol that guarantees minimum separation between entities, even as some cells fail.  Once new failures cease occurring, in the case of a single target, the protocol is guaranteed to self-stabilize and the entities with feasible paths to the target cell make progress towards it.  For multiple targets, failures may cause deadlocks in the system, so we identify a class of non-deadlocking failures where all entities are able to make progress to their respective targets.  The algorithm relies on two general principles: temporary blocking for maintenance of safety and local geographical routing for guaranteeing progress.  Our assertional proofs may serve as a template for the analysis of other distributed traffic control protocols.  We present simulation results that provide estimates of throughput as a function of entity velocity, safety separation, single-target path complexity, failure-recovery rates, and multi-target path complexity.
\end{abstract}

\begin{keyword}
distributed systems \sep swarm robotics \sep formal methods
\end{keyword}

\end{frontmatter}



\section{Introduction}
\seclabel{intro}

%
Highway and air traffic flows are nonlinear switched dynamical systems that give rise to complex phenomena such as abrupt phase transitions from fast to sluggish flow~\cite{helbing1998,kerner1998,daganzo1999}.
Our ability to monitor, predict, and avoid such phenomena can have a significant impact on the reliability and capacity of physical traffic networks. 
Traditional traffic protocols, such as those implemented for air traffic control are {\em centralized\/}~\cite{nolan1994book}---a {\em coordinator\/} periodically collects information from the vehicles, decides and disseminates waypoints, and subsequently the vehicles try to blindly follow a path to the waypoint.
Wireless vehicular networks~\cite{borgonovo2003mac,karpiriski2006,manvi2009,azimi2011} and autonomous vehicles~\cite{thrun2007,urmson2008} present new opportunities for {\em distributed\/} traffic monitoring~\cite{yang2004mobi,misener2005,hoh2008mobisys} and control~\cite{girard2001cdc,mamei2003isads,abbott2004tr,kelly2007,kowshik2008hscc,dresner2008jair}.
While these protocols may still rely on some centralized coordination, they should scale and be less vulnerable to failures compared to their centralized counterparts.
In this paper, we propose a fault-tolerant distributed traffic control protocol, formally model it, and formally prove its correctness.

A \emph{traffic control protocol} is a set of rules that determines the routing and movement of certain physical \emph{entities}, such as vehicles, robots, or packages, over an underlying {\em graph\/}, such as a road network, air-traffic network, or warehouse conveyor system.
Any traffic control protocol should guarantee:
\begin{inparaenum}[(a)]
\item ({\em safety\/}) that the entities always maintain some minimum physical separation, and
\item ({\em progress\/}) that the entities eventually arrive at a given a destination (or target) vertex.
\end{inparaenum}
In a distributed traffic control protocol, each entity determines its own next-waypoint, or each vertex in the underlying graph determines the next-waypoints for the entities in an appropriately defined neighborhood.

In this paper, we study the problem of distributed traffic control in a partitioned plane where the motions of entities within a partition are {\em coupled\/}.  
The problem can be described as follows (refer to~\figreftwo{exampleSystemTriangular}{exampleSystemOneLane}).
The \emph{environment}---the geographical space of interest---is partitioned into regions or \emph{cells}.
Each entity is assigned a certain type or {\em color\/}.
For each color, there is one \emph{source cell} and one \emph{target cell} of the same color.
The source cells produce entities of some color, and the target cells only consume entities of a particular color, so the goal is to move entities of color $c$ to the target of color $c$.
The motion of all entities within a cell are coupled, in the sense that they all either move identically, or they all remain stationary (we discuss the motivation for this below).
If some entities within some cell $i$ touch the boundary of a neighboring cell $j$, those entities are transferred to $j$.
Thus, the role of the distributed traffic control protocol is to control the motion of the cells so that the entities 
\begin{inparaenum}[(a)]
\item always have the required safe separation, and 
\item reach their respective targets, when feasible.
\end{inparaenum}

The coupling mentioned above that requires entities within a cell to move identically may appear strong at first sight.
After all, under low traffic conditions, individual drivers control the movement of their cars within a particular region of the highway, somewhat independently of the other drivers in that region.
However, on highways under high-traffic, high-velocity conditions, it is known that coupling may emerge spontaneously, causing the vehicles to form a fixed lattice structure and move with near-zero relative speed~\cite{helbing1998,weiss1999}.
In other scenarios, coupling arises because passive entities are moved around by active cells.
For example, this occurs with packages being routed on a grid of multi-directional conveyors~\cite{omniwheel2008,an2011iccps}, and molecules moving on a medium according to some controlled chemical gradient.
Finally, even where the entities are active and cells are not, the entities can cooperate to emulate a virtual active cell expressly for the purposes of distributed coordination.  
This idea has been explored for mobile robot coordination in~\cite{gilbert2009taas} using a cooperation strategy called \emph{virtual stationary automata}~\cite{dolev2005podc,nolte2007icdcs}.
%
%

%
In this paper, we present a distributed traffic control protocol that guarantees \emph{safety at all times}, even when some cells fail permanently by crashing.
The protocol also guarantees \emph{eventual progress} of entities toward their targets, provided (a) that there exists a path through non-faulty cells to the entities' respective targets, and (b) failures have not introduced unrecoverable deadlocks.
Specifically, the protocol is \emph{self-stabilizing}~\cite{arora1993,dolev2000book}, in that once new failures stop occurring, the composed system automatically returns to a state from which progress can be made.
%
%
The algorithm relies on the following four mechanisms.
\begin{enumerate}[(a)]
\item There is a \emph{routing} rule to maintain local routing tables to each target at each non-faulty cell.
This routing protocol is self-stabilizing and allows our protocol to tolerate crash failures of cells.
\item There is a \emph{mutual exclusion} and scheduling mechanism to ensure moving entities over distinctly colored overlapping paths do not introduce deadlocks.
The locking and scheduling mechanism ensures one-way traffic can make progress over shared routes (traffic intersections).
\item There is a \emph{signaling} rule between neighbors that guarantees safety while preventing deadlocks.
Roughly speaking, the signaling mechanism at some cell fairly chooses among its neighboring cells that contain entities, indicating if it is safe for one of these cells to apply a movement in the direction of the cell doing the signaling.
This permission-to-move policy turns out to be necessary, because movement of neighboring cells may otherwise result in a violation of safety in the signaling cell, if entity transfers occur.
\item The \emph{movement} policy causes all entities on a cell to either move with the same constant velocity in the direction of their destination, or remain stationary to ensure safety.
This policy abstracts more complex motion modeling.
\end{enumerate}

\begin{figure}[t]
	\centering
	\begin{minipage}[t]{0.4125\linewidth}
		\vspace{0pt} 
		\centering
		\includegraphics[width=\columnwidth]{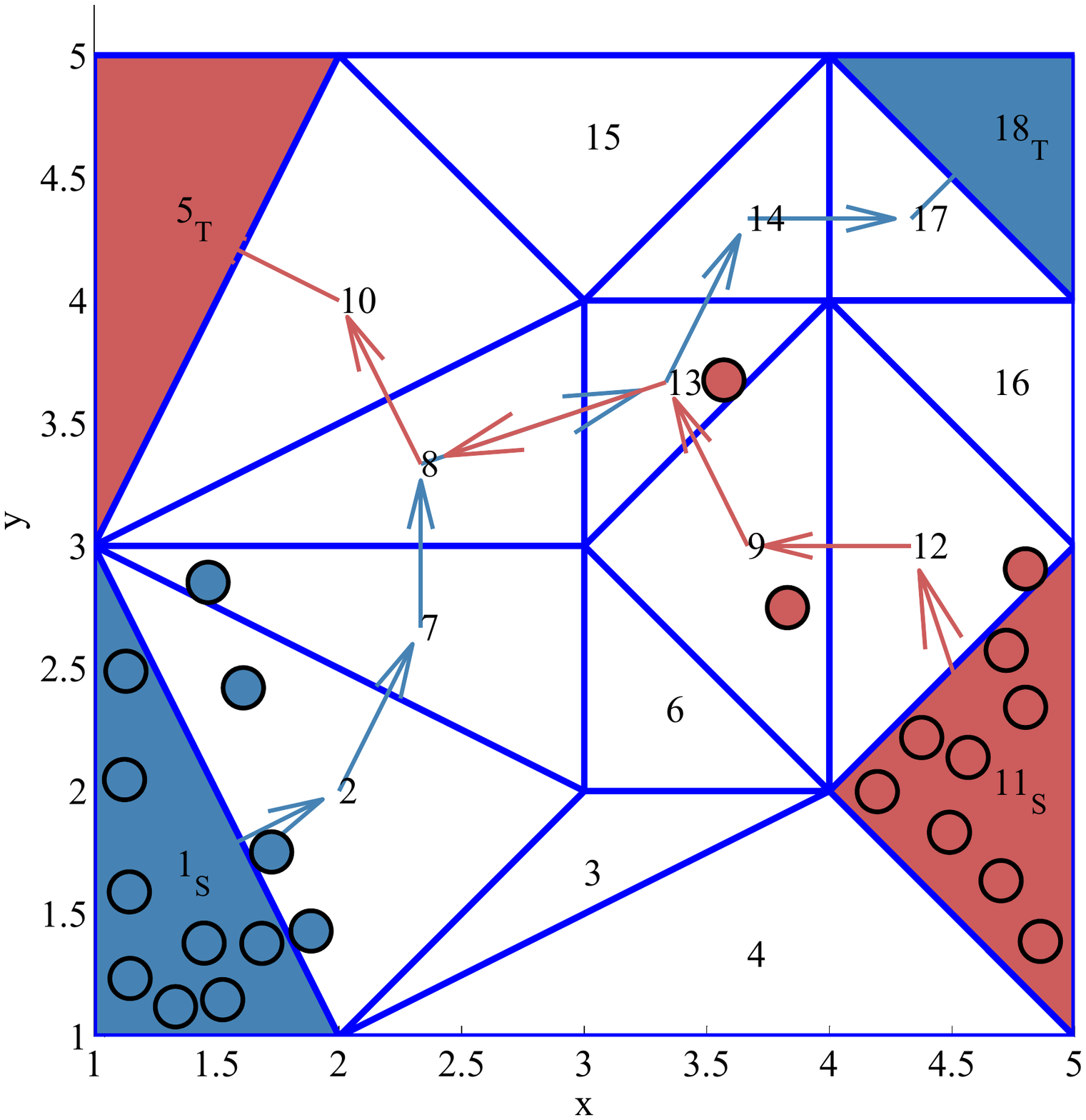}
		\caption{Source cells ($1$ and $11$) produce entities that flow toward the target cell ($18$ and $5$) of the appropriate color.  Source-to-target paths overlap at cells $8$ and $13$.  In this execution, the blue entity on cell $7$ is waiting for the red entities to leave the overlapping cells.}
		\figlabel{exampleSystemTriangular}
	\end{minipage}
	\hspace{0.01\linewidth}
	\begin{minipage}[t]{0.25\linewidth}
		\vspace{0pt} 
		\centering
		\includegraphics[width=\columnwidth]{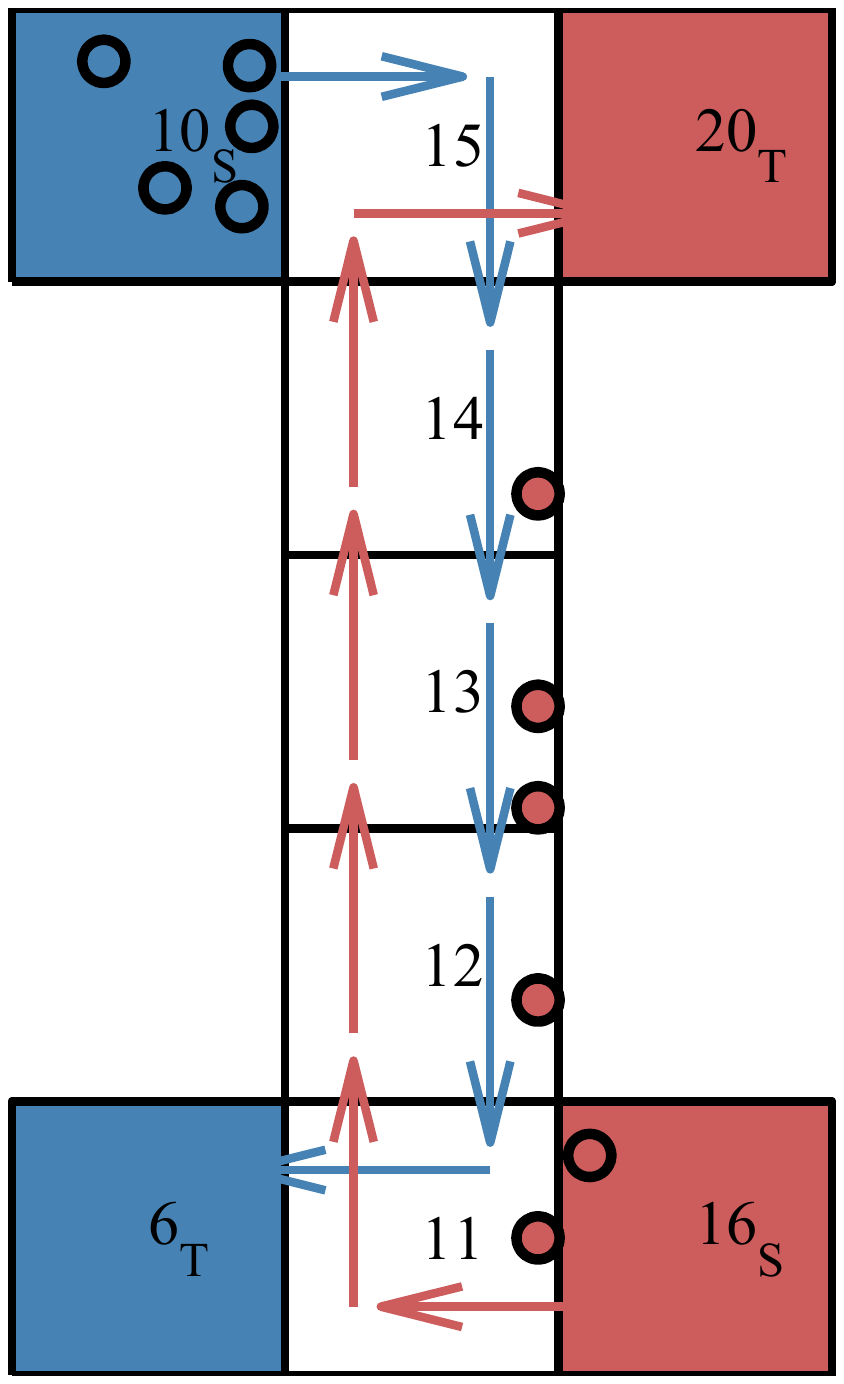}
		\caption{If cell $10$ moved its blue entities onto the shared one-lane ``bridge'' ($11$, $12$, $13$, $14$, $15$), then all entities would be deadlocked.}
		\figlabel{exampleSystemOneLane}
	\end{minipage}
\end{figure}

We establish these safety and progress properties through systematic assertional reasoning.
For safety properties, we establish inductive invariants and for stabilization we use global ranking functions.
To show that all entities reach their destinations (when feasible), we use a combination of ranking functions and fairness-based reasoning on infinite executions.
These proof techniques may serve as a template for the analysis of other distributed traffic control protocols.
Our analysis is generally independent of the size of the environment, number of cells, and number of entities.
Additionally, only neighboring cells communicate with one another and the communication topology is fixed (aside from failures).
For these reasons, this problem can serve as a case study in automatic parameterized verification of distributed cyber-physical systems~\cite{loos2011fm,platzer2011hscc,johnson2012iccps,johnson2012forte}.

%

We present simulation results that illustrate the influence (or the lack thereof) of several factors on throughput.
\begin{inparaenum}[(a)]
\item Throughput decreases exponentially with path length until saturation, as which point it decreases roughly linearly with path length.
\item Throughput decreases roughly linearly with required safety separation and cell velocity.
\item Throughput decreases roughly exponentially until it saturates as a function of path complexity measured in number of turns along a path.
\item Throughput decreases roughly exponentially with failure rate, and increases linearly with recovery rates, under a model where crash failures are not permanent and cells may recover from crashing.
\item Throughput decreases roughly exponentially until it saturates as a function of the percentage of overlapping cells between different colored targets.
\end{inparaenum}

\paragraph*{Contributions over Previous Work}
In previous work~\cite{johnson2010icdcs}, we analyzed a similar problem, but have significantly generalized our results in this paper.
\begin{enumerate}[(A)]
\item We consider general tessellations (including triangulations) that define the partitioning, while we considered uniform square partitions in~\cite{johnson2010icdcs}.
We also present results on partitioning schemes that cannot work for our formulation of the problem.
\item We allow entities of multiple colors, each flowing to a different target, while in~\cite{johnson2010icdcs}, we only allowed entities of one color, all of which flowed to the nearest target.
This generalization lets source-to-target paths of different colors overlap, creating intersections, and requires several changes to the algorithm, including adding a mutual exclusion and scheduling mechanisms used to control traffic intersections.
This generalization is significant because it makes the problem applicable to a much wider class of systems.
\item We extended our simulation results to allow for these generalizations, and characterized the cost on throughput due to the extra coordination required to allow multiple colors.
\end{enumerate}

\paragraph*{Paper Organization}
The rest of the paper is organized as follows.
First,~\secref{model} introduces the model of the physical system.
Next in~\secref{algorithm}, we present the distributed traffic control algorithm.
Then in~\secref{analysis}, we define and prove the safety and progress properties.
\ssref{safety} establishes safety.
Subsequently, we establish a progress property that shows entities eventually reach their targets in spite of failures (when possible).
First in~\ssref{routing}, it is shown that the routing protocol to find any target from any cell with a physical path through non-faulty cells to that target is self-stabilizing.
Then in~\ssref{lock}, we show how overlapping paths to different targets (traffic intersections) can be scheduled.
Finally, in~\ssref{movement}, it is shown that entities on any cell with a feasible physical path to their target eventually reach their target.
Simulation results and interpretation are presented in~\secref{sim}, followed by a brief discussion of related work and further extensions, and a conclusion in~\secrefthree{rw}{discussion}{conclusion}.

\section{Physical System Model}
\seclabel{model}
We describe the physical system in this section.
For a set $K$, we define $K_{\bot} \deq K \cup \{\bot\}$ and $K_{\infty} \deq K \cup \{\infty\}$.
For $N \in \naturals$, let $[N] \deq \{1,\ldots, N\}$.
The $\norm{\cdot}$ brackets are used for the Euclidean norm of a vector.

\paragraph*{Partitioning}
The system consists of $N$ convex polygonal \emph{cells} partitioning a polygonal environment.
Let $\ID \deq [N]$ be the set of unique identifiers for all cells in the system.
The planar environment $\Env$ is some given simply connected polygon.
A partition $\partition$ of $\Env$ is a set of closed, convex polygonal cells $\{\partitioni\}_{i \in \ID}$ such that:
\begin{enumerate}[(a)]
\item 
the interiors of the cells are pairwise disjoint,
\item 
the union of the cells is the original polygonal environment, and
\item 
cells only touch one another at a point or along an entire side.
\end{enumerate}
The first two conditions are the standard definition of a partition, while the third restricts any cell from being adjacent along one of its sides to more than one other cell.
%
%
Thus, cell $i$ occupies a convex polygon $\partitioni$ in the Euclidean plane.
The boundary of cell $i$ is denoted by $\bdryi$.
%
We denote the vertices (extreme points) of $\partitioni$ as $\Verticesi$.
We denote the number of sides of $\partitioni$ as $\nsi$.
Let $\Sideij \deq \bdryi \cap \bdryj$ be the common side of adjacent cells $i$ and $j$---we will refer to $\Sideij$ as both an index and a line segment (set of points).

\paragraph*{Communications}
Cell $i$ is said to be a \emph{neighbor} of cell $j$ if the boundaries of the cells share a common side.
The set of identifiers of all neighbors of cell $i$ is denoted by $\Nbrsi$.
This definition of neighbors can naturally be represented as a graph, so let $\Delta$ be the worst-case diameter of such a neighbor communication graph\footnote{The diameter of this graph is not static, it may change due to failures, but the worst case is always a path graph, so $\Delta = N$.}.
For each cell $i \in \ID$ and each neighboring cell $j \in \Nbrsi$, let the \emph{side normal vector} from $i$ to $j$, denoted $\SideNormalij$, be the unit vector orthogonal to $\Sideij$ and pointing into cell $j$ from the common side $\Sideij$.

Each cell is controlled by software that implements the distributed traffic control algorithm described in the next section.
We consider synchronous protocols that operate in rounds.
At each round, each cell exchanges messages bearing state information with its neighbors.
Then, each cell updates its software state and decides the (possibly zero) velocity with which to move any entities on it.
Until the beginning of the next round, the cells continue to operate according to this velocity, which may lead to entity transfers.

\paragraph*{Entities}
Each cell may contain a number of {\em entities\/}.
Each entity occupies a circular area and represents a physical object (or overapproximation of) such as an aircraft, car, robot, or package.
Every entity that may ever be in the system has a unique identifier drawn from an index set $\EntityIdx$.
This assumption is for presentation only, and the algorithm does not rely on knowing entity identifiers.
For an entity $p \in \EntityIdx$, we denote the coordinates of its center by $\pc \deq (p_x, p_y) \in \reals^2$.

The open circular area (disc) centered at $\pc$ of radius $r$ representing entity $p$ is denoted $\EntityDiscpl$.
The radius of an entity is $\l$, and $\rs$ is the minimum required inter-entity safety gap.
We define the total safety spacing radius as $\d \deq \rs + \l$.
For simplicity of presentation, we work with uniform entity radii $\l$ and safety gaps $\rs$.
If they differ, we would take $\l$ and $\rs$ to be the maximums over all entities.
We instantiate $\EntityDiscpl$, which represents the physical space occupied by entity $p$, and we also instantiate $\EntityDiscpd$, which is entity $p$'s total safety area.

\paragraph*{Entity Colors, Source Cells, and Target Cells}
There are $\abs{C}$ types (or \emph{colors}) of entities, where $C$ is some finite, ordered set.
The color of some entity $p \in \EntityIdx$ is denoted as $\etypep$.
For each $c \in C$, there is a \emph{source cell} $\sidc$ and a \emph{target cell} $\tidc$.
All other cells are \emph{ordinary cells}.
For simplicity of presentation, we assume there is a unique source and target, but the algorithms and the results generalize for when $\sidc$ and $\tidc$ are sets.

Entity $p$'s color $\etypep$ designates the target cell entity $p$ should eventually reach.
The source $\sidc$ produces entities of color $c$ and the target $\tidc$ consumes entities of color $c$.
The sets of target and source identifiers are denoted $\TID \subseteq \ID$ and $\SID \subseteq \ID$, respectively.
%

\paragraph*{Entity Movement}
All the entities within a cell move identically---either they remain stationary or they move with some constant velocity $0 < \v < \l$ in the direction of one of the sides of the cell.
Thus $\v$ is the maximum cell velocity, or the greatest distance traveled by any entity over one synchronous round.
We require $\v > 0$ to ensure progress.
We require that $\v < \l$ to ensure entities do not collide when transfers occur.
%
Cell velocity may differ in each cell so long as each is upper bounded by $\v$.
This movement is determined by the algorithm controlling each cell.
When a moving entity touches a side of a cell, it is instantaneously transferred to the neighboring cell beyond that side, so that the entity is entirely contained in the new cell.

\paragraph*{Safety and Transfer Regions}
\taylor{check all geometry with $\l$ and $\d$}
The safety region on side $s$ of a cell is the area within the cell where (the centers of) new entities entering the cell from side $s$ can be placed.
For a side $s$ of some cell $i$, the \emph{safety region on side $s$} $\safetyRegionis$ is the area on $\partitioni$ at most $3\d$ distance measured orthogonally from side $s$.
Analogously, the transfer region on side $s$ of a cell is the area within a cell where (the centers of) entities reside when those entities will be transfered to the neighboring cell on that side.
The \emph{transfer region on side $s$} $\transferRegionis$ is the region in the partition $\partitioni$ at most $\l$ distance measured orthogonally from side $s$.
%
%
For a cell $i$, the \emph{transfer region} $\transferRegioni$ and \emph{safety region} $\safetyRegioni$ are respectively the unions of $\transferRegionis$ and $\safetyRegionis$ for each side $s$ of $\partitioni$.
We refer to the \emph{inner side(s)} of $\transferRegioni$, $\transferRegionis$, $\safetyRegioni$, or $\safetyRegionis$, as the side(s) touching the inside of the annulus, and denote them by $\innerTransferRegioni$, $\innerTransferRegionis$, etc.

For example, in~\figref{transferSafety}, the transfer region for the square cell $3$ is the square annulus between the smaller cyan square and the larger blue square (the boundary $\bdry{\partition_3}$ of cell $3$).
Similarly, for the triangular cell $1$ in~\figref{transferSafety}, the transfer region is the triangular annulus between the smaller cyan triangle and the larger blue triangle.
Thus, the distance measured orthogonally between the sides of the cyan polygons representing the boundary of the transfer region, and the sides of the blue polygons is always $\l$.
In~\figref{transferSafety}, for the square cell $3$, the safety region is the square annulus between the smaller red square and the larger blue square.
%

%
%

\begin{figure}[t]
	\centering%
	\begin{minipage}[t]{0.5\linewidth}
		\vspace{0pt} 
		\centering
		\includegraphics[width=\columnwidth]{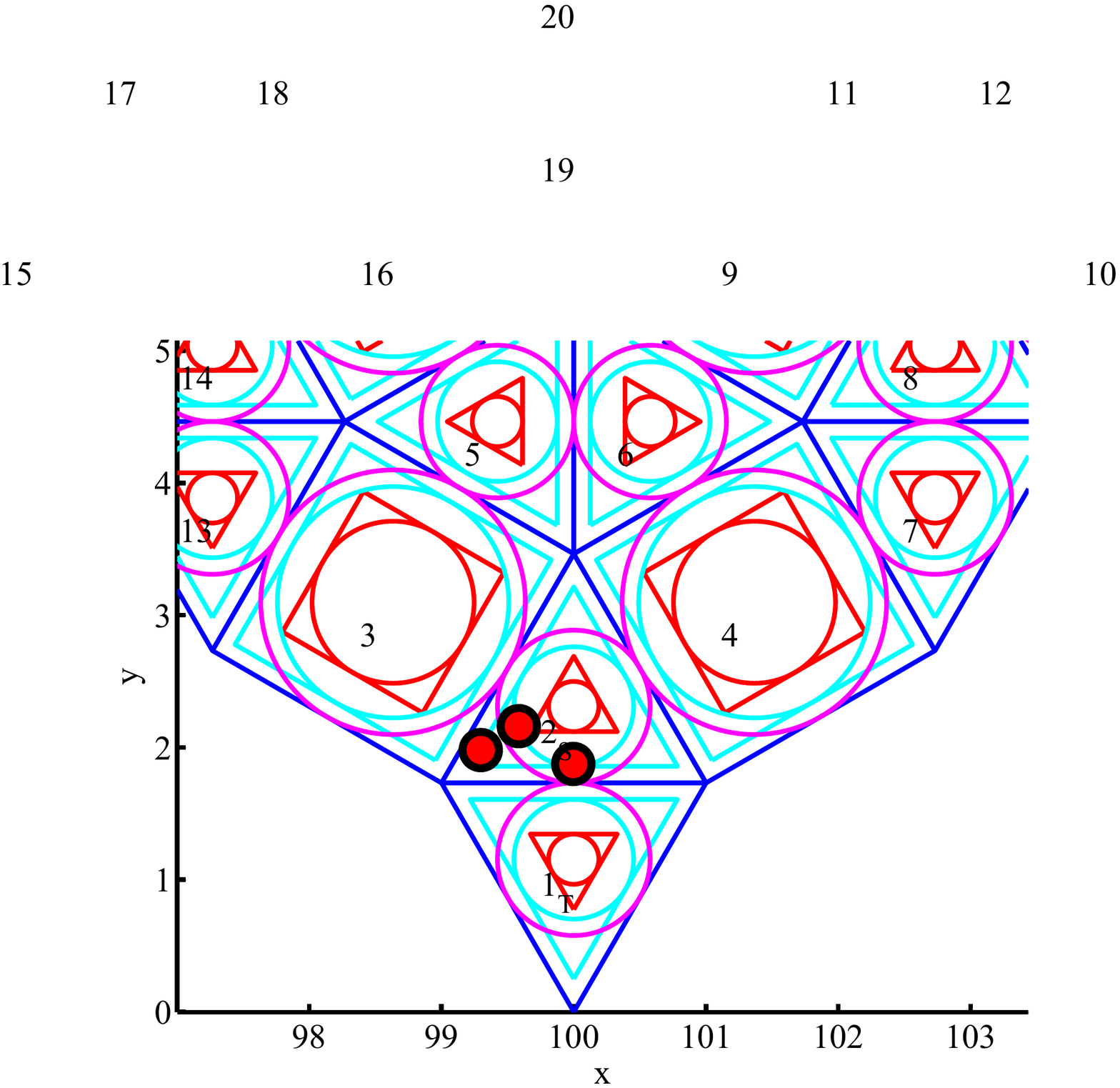}
		\caption{Safety regions (areas between red and blue) and transfer regions (areas between cyan and blue) for the squares and triangles composing the snub square tiling tessellation.}
		\figlabel{transferSafety}
	\end{minipage}
	\hspace{0.01\linewidth}
	\begin{minipage}[t]{0.4\linewidth}
		\vspace{0pt} 
		\centering
		\includegraphics[width=\columnwidth]{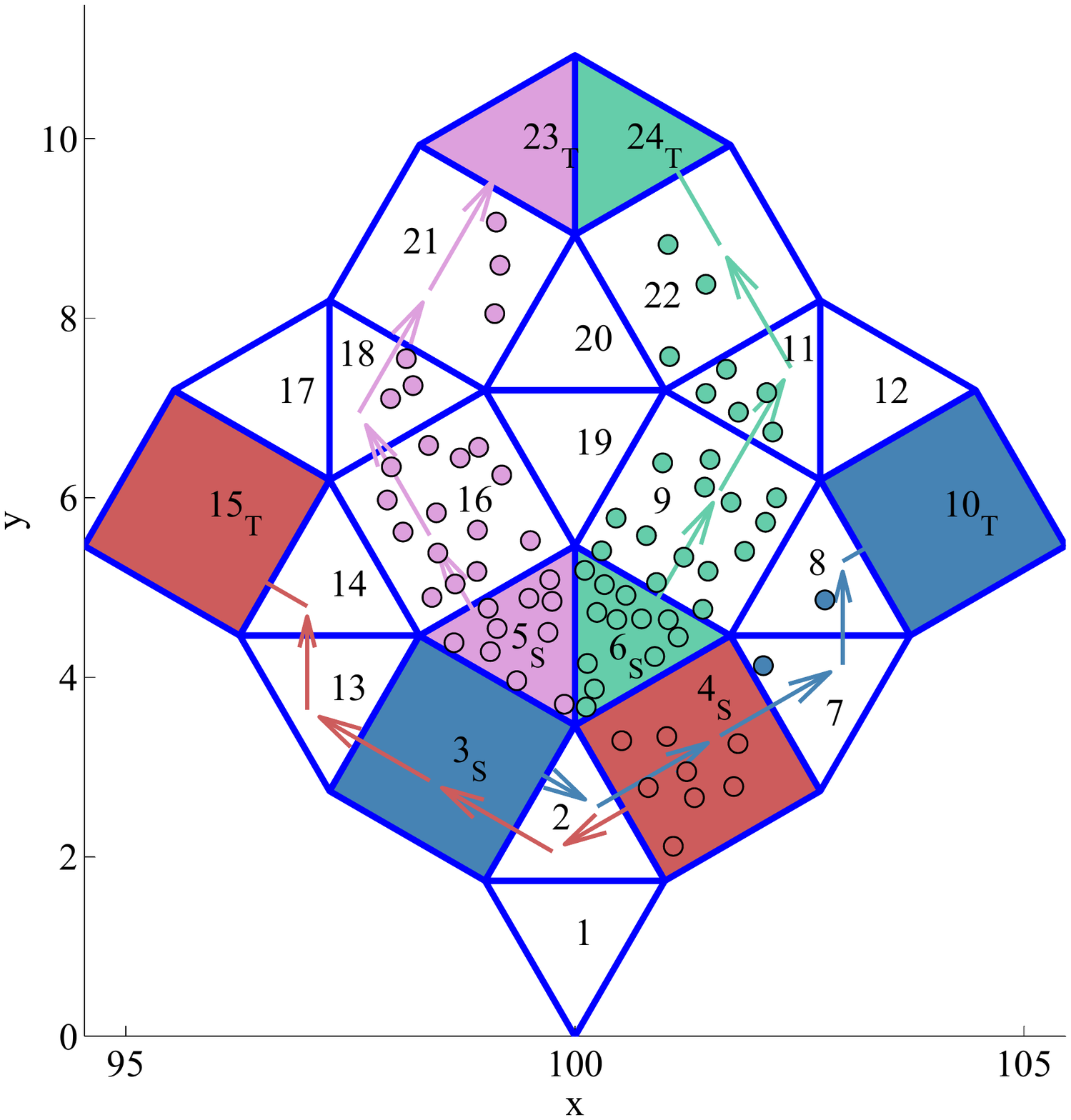}
		\caption{Blue and red paths overlap at cells $2$, $3$, and $4$.  Blue entities on cells $7$ and $8$ have traversed the intersection and then the red source ($4$) produces entities.  Red and blue sources producing entities simultaneously would cause a deadlock.}
		\figlabel{exampleSystemSnubSquare}
	\end{minipage}
\end{figure}

\paragraph*{Geometric Assumptions}
We assume that the polygonal environment $\Env$ and its partition $\partition$ have shapes and sizes such that each cell in the partition is large enough for an entity to lie completely on it.
Particularly, we require for each cell $i \in \ID$ that the transfer region $\transferRegioni$ is nonempty.
\taylor{previous: really mean to say cell minus transfer region is nonempty, i.e., there is at least one point in the cell not on the transfer region: also may want to actually say this for safety, e.g., the cell minus safety region is nonempty (otherwise cell sizes are problematic on receiving from multiple sides, etc....)}
%
We also assume the following assumptions to ensure transferring entities between cells is well-defined.
%
%
\begin{assumption}
(Projection Property): For each $i \in \ID$, for each side $s$ of $\partitioni$, there exists a constant vector field over $\partitioni$ that drives every point in $\partitioni$ to some point on side $s$ without exiting $\partitioni$.
%
%
\asslabel{projectionProperty}
\end{assumption}

\sayan{$\ast$ next paragraph and assumption}
By definition, the cells form a partition.
However, partially because there is ``empty space'' between the transfer regions of the cells, the transfer regions do not form a partition.
Even if we remove this empty space by translating the transfer regions so the sides of transfer regions of neighboring cells coincide, they still may not form a partition (see~\figref{transferSafety} for an example where the transfer regions cannot form a partition).
This is because, for the shared side $s$ of neighboring cells $i$ and $j$, the inner sides of the transfer regions on $\partitioni$ and $\partitionj$ may have different lengths, even though the shared side $s$ obviously had the same length for $\partitioni$ and $\partitionj$.
%
%
\begin{assumption}
(Transfer Feasibility): For any $i \in \ID$ and any $j \in \Nbrsi$, consider their common side $\Sideij$.
The length of the inner side $\innerTransferRegioni(\Sideij)$ line segment equals the length of the inner side $\innerTransferRegionj(\Sideij)$ line segment.
%
%
%
\asslabel{transferFeasibility}
\end{assumption}

\section{Distributed Traffic Control Algorithm}
\seclabel{algorithm}
Next, we describe the \emph{discrete transition system} $\Celli$ that specifies the software controlling an individual cell $\partitioni$ of the partition $\partition$.

\paragraph*{Preliminaries}
A {\em variable\/} is a name with an associate type.
For a variable $x$, its type is denoted by $\mathit{type(x)}$ and it is the set of values that $x$ can take.
A {\em valuation\/} (or state) for a set of variables $X$ is denoted by $\vx$, and is a function that maps each $x \in X$ to a point in $\mathit{type(x)}$.
Given a valuation $\vx$ for $X$, the valuation for a particular variable $v \in X$, denoted by $\vx.v$, is the restriction of $\vx$ to $\{v\}$.
The set of all possible valuations of $X$ is denoted by $val(X)$.
Many variables return cell identifiers that we use to access variables of other cells using subscripts, and if the valuation of these variables are restricted to the same state, we will drop the particular state on the subscripted variables for more concise notation.
For instance, suppose $\vx.\nexti \in \ID$, then $\vx.\next_{\vx.\nexti}$ would be written $\vx.\next_{\nexti}$.

\begin{figure}[t]
	\centering%
	\begin{minipage}[t]{0.475\linewidth}
		\vspace{0pt} 
		\centering
		\includegraphics[width=\columnwidth]{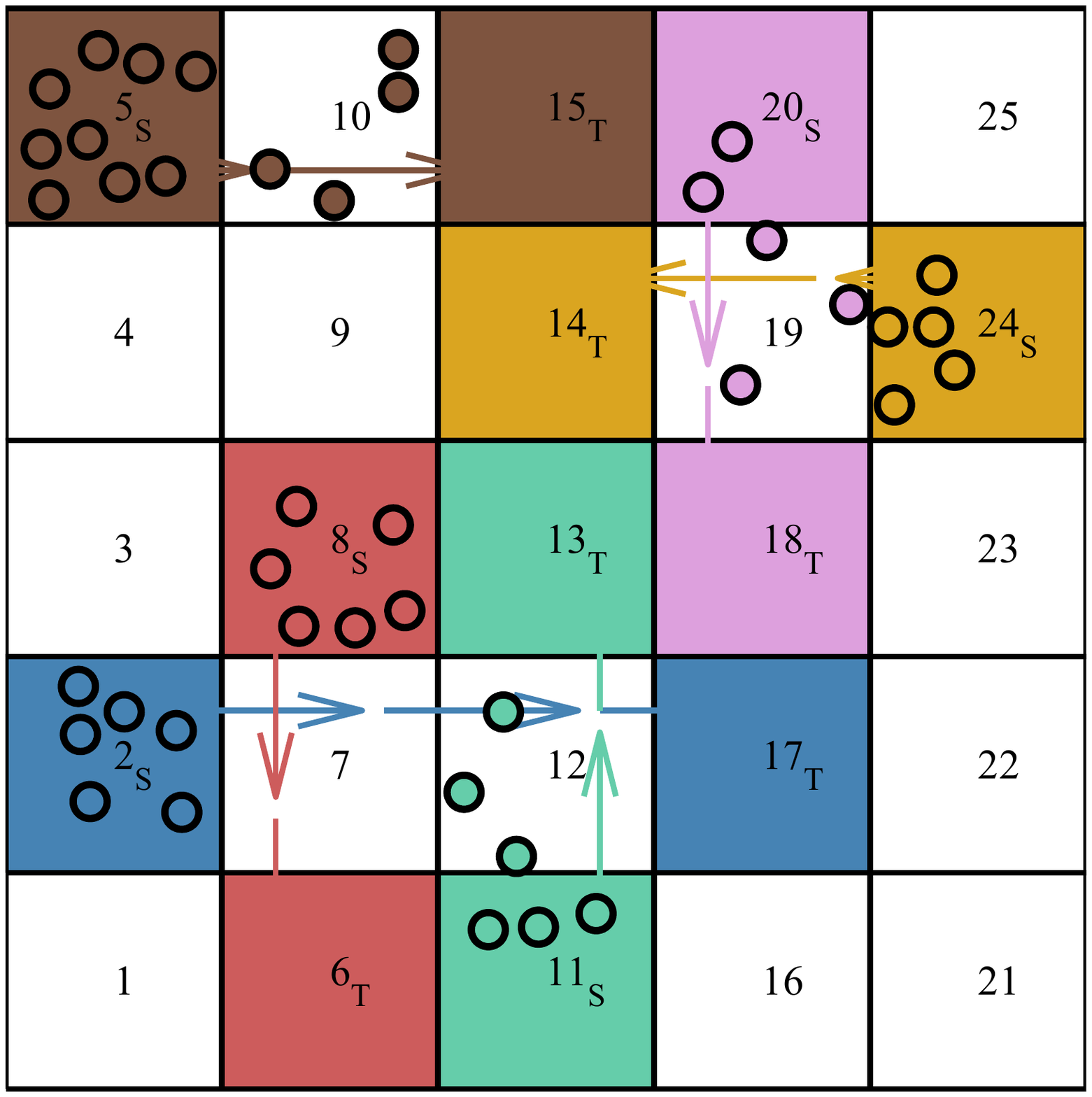}
		\caption{Example illustrating the computation of the color-shared cells and shared colors, stored in the $\pintc$ and $\lockcolorsic$ variables, respectively. The color-shared cells are any cells on overlapping paths, and $\lockcolorsic$ corresponds to the colors of each disjoint set of color-shared cells.}
		\figlabel{exampleLockcolors}
	\end{minipage}
	\hspace{0.01\linewidth}
	\begin{minipage}[t]{0.475\linewidth}
		\vspace{0pt} 
		\centering
		\includegraphics[width=\columnwidth]{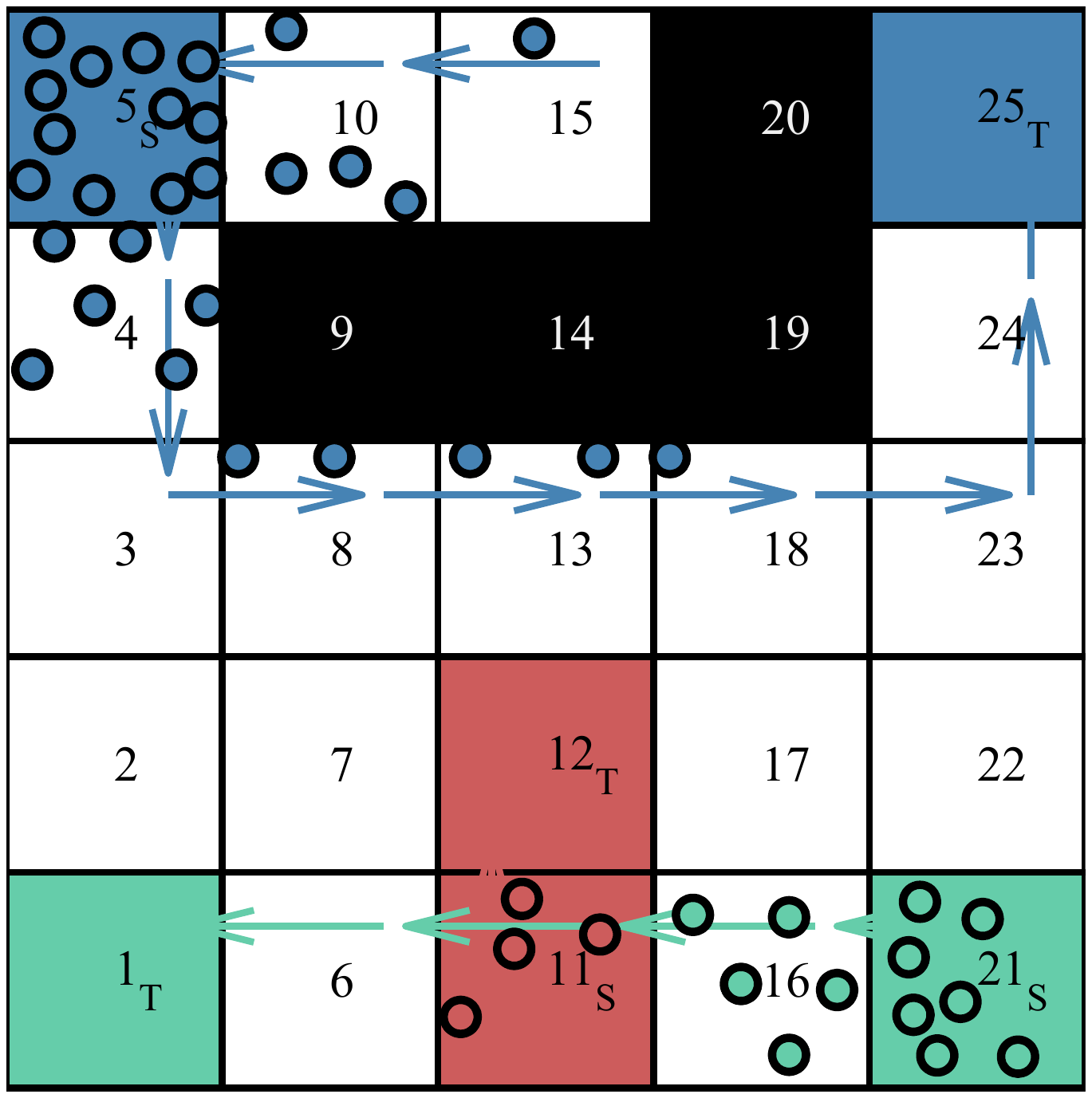}
		\caption{Example illustrating the two fairness requirements (\assref{fairness}) for proving liveness. Cells $9$, $14$, $19$, and $20$ failed, causing the original source-target path for blue to change from cells $5$, $10$, $15$, $20$, $25$.  If source cell $5$ does not place new entities fairly, then entities on cells $10$ and $15$ may never reach the target. A similar situation occurs with paths of multiple colors in the lower part of the image.}
		\figlabel{exampleFairness}
	\end{minipage}
\end{figure}

A {\em discrete transition system\/} $\A$ is a tuple $\langle X, Q_0, A, \rightarrow \rangle$, where:
\begin{enumerate}[(i)]
\item $X$ is a set of variables and $val(X)$ is called the set of {\em states\/}, 
\item $Q_0 \subseteq val(X)$ is the set of {\em start states\/},
\item $A$ is a set of {\em transition names\/}, and
\item $\rightarrow \subseteq val(X) \times A \times val(X)$ is a set of {\em discrete transitions\/}.
For $(\vx_k, a, \vx_{k+1}) \in \rightarrow$, we also use the notation $\vx_k \arrow{a} \vx_{k+1}$.
\end{enumerate}

An {\em execution fragment\/} of a discrete transition system $\A$ is a (possibly infinite) sequence of states $\alpha = \vx_0, \vx_1, \ldots$, such that for each index appearing in $\alpha$, $(\vx_k, a, \vx_{k+1}) \in \rightarrow$ for some $a \in A$.
An {\em execution\/} is an execution fragment with $\vx_0 \in Q_0$.
A state $\vx$ is said to be {\em reachable\/} if there exists a finite execution that ends in $\vx$.
$\A$ is said to be \emph{safe} with respect to a set $S \subseteq val(X)$ if all reachable states are contained in $S$.
A set $S$ is said to be {\em stable\/} if, for each $(\vx, a, \vx') \in \rightarrow$, $\vx \in S$ implies that $\vx' \in S$.
$\A$ is said to {\em stabilize} to $S$ if $S$ is stable and every execution fragment eventually enters $S$.

\paragraph*{Cells}
We assume messages are delivered within bounded time and computations are instantaneous.
Under these assumptions, the system can be modeled as a collection of discrete transition systems.
The overall system is obtained by composing the transition systems of the individual cells.
We first present the discrete transition system corresponding to each cell, and then describe the composition.

The variables associated with each $\Celli$ are as follows, with initial values of the variables shown in~\figref{celli} using the `:=' notation.
\begin{enumerate}[(a)]
\item $\Membersi$ is the set of identifiers for entities located on cell $i$.
Cell $i$ is said to be nonempty if $\Membersi \neq \emptyset$.
%
%
\item $\etypei$ designates the entity colors on the cell, or $\bot$ if there are none\footnote{It will be established that cells contain entities of only a single color, see~\invref{onecolor}.}.
\item $\failedi$ indicates whether or not $i$ has failed.
\item $\NEPrevi$ are the nonempty neighbors attempting to move entities (of any color) toward cell $i$.
\item $\tokeni$ is a token used for fairness to indicate which neighbor may move toward $i$.
\item $\signali$ is the identifier of a neighbor of $\Celli$ that has permission to move toward $\Celli$.
%
%
\end{enumerate}
Additionally, the following variables are defined as arrays for each color $c \in C$.
The notation $\nextic$ means the $c^{th}$ entry of the $\next$ variable of cell $i$, and so on for the other variables.
\begin{enumerate}[(a)]
\item $\nextic$ is the neighbor towards which $i$ attempts to move entities of color $c$.
\item $\distic$ is the estimated distance---the number of cells---to the nearest target cell consuming entities of color $c$.
\item $\lockic$ is a boolean variable for a lock of color $c$ that some cells require to be able to move entities.
%
%
\item $\pathic$ is the set of cell identifiers from any source of color $c$ (and any nonempty cell with entities of color $c$) to the target of color $c$.  This variable and the next two are local variables, but they are storing some global information.
\item $\pintic$ is the set of cell identifiers in traffic intersections with cells of color $c$ (where $\pathic$ and $\pathid$ have nonempty intersection for some $d \neq c$).
\item $\lockcolorsic$ is the set of colors that are involved in traffic intersections with the color $c$ path.
\taylor{check that all variables have been listed and informally defined}
\end{enumerate}
When clear from context, the subscripts in the names of the variables are dropped.
A state of $\Celli$ refers to a valuation of all these variables, i.e., a function that maps each variable to a value of the corresponding type.
The complete system is an automaton, called $\System$, consisting of the composition of all the cells. 
A state of $\System$ is a valuation of all the variables for all the cells.
We refer to states of $\System$ with bold letters $\vx$, $\vx'$, etc.
%

\begin{figure}[t]
	\centering
	\mybox{\columnwidth}{\columnwidth}{
		\twosep{.54}{.44}
		{\lstinputlisting[language=ioaNums,lastline=13,numbersep=-3pt]{code/celli.hioa}}
		{\lstinputlisting[language=ioaNumsRight,firstline=14,firstnumber=14,numbersep=-10pt]{code/celli.hioa}}
	\centering
	\makebox[\columnwidth][l]{\hrulefill~~~~}
	\centering
	\parbox{0.95\columnwidth}{
		\caption{Specification of $\Celli$ listing its variables, initial conditions, and transitions.  Subscripts are dropped for readability.}
		\figlabel{celli}
		}
	}
\end{figure}

\taylor{check to state all as shared or private}
Variables $\tokeni$, $\failedi$, $\locki$, and $\NEPrevi$ are private to $\Celli$, while $\Membersi$, $\disti$, $\nexti$, $\pathi$, $\etypei$, and $\signali$ can be read by neighboring cells of $\Celli$.
%
%
This has the following interpretation for an actual message-passing implementation.
At the beginning of each round, $\Celli$ broadcasts messages containing the values of these variables and receives similar values from its neighbors.
Then, the computation of this round updates the local variables for each cell based on the values collected from its neighbors.

Variable $\Membersi$ is a special variable because it can also be written to by the neighbors of $i$.
This is how we model transfer of entities between cells.
For a state $\vx$, for some $a \in A$ such that $\vx \arrow{a} \vx'$, for some $i \in \ID$, for some $j \in \Nbrsi$, for some entity $p \in \vx.\Membersi$, then entity $p$ transfers from cell $i$ to $j$ when $p \in \vx'.\Membersj$.
We use the notation $p'$ to denote the state of entity $p$ at $\vx'$ where $\vx \arrow{a} \vx'$ for some $a \in A$.
%
%

\paragraph*{Actions for the Composed System}
$\System$ is a discrete transition system modeling the composition of all the cells, and has two types actions: $\act{fail}$s and $\act{update}$s.
A $\act{fail}(i)$ transition models the crash failure of the $i^{th}$ cell and sets $\failedi$ to $\true$, $\distic$ to $\infty$ for each $c \in C$, and $\nextic$ to $\bot$ for each $c \in C$.
Cell $i$ is called \emph{faulty} if $\failedi$ is $\true$, otherwise it is called \emph{non-faulty}.
The set of identifiers of all faulty and non-faulty cells at a state $\vx$ is denoted by $F(\vx)$ and $\NF(\vx)$, respectively.
A faulty cell does nothing---it never moves and it never communicates\footnote{$\disti = \infty$ can be interpreted as $i$'s neighbors not receiving a timely response from $i$.}.

An $\act{update}$ transition models the evolution of all non-faulty cells over one synchronous round.
For readability, we describe the state-change caused by an $\act{update}$ transition as a sequence of four functions (subroutines), where for each non-faulty $i$, 
\begin{enumerate}[(a)]
\item $\Route$ computes the variables $\disti$ and $\nexti$,
\item $\Lock$ computes the variables $\pathi$, $\pinti$, $\lockcolorsi$, and $\locki$,
\item $\Signal$ computes (primarily) the variable $\signali$, and
\item $\Move$ computes the new positions of entities.
\end{enumerate}
We note that in the single-color case considered in~\cite{johnson2010icdcs}, the $\Lock$ subroutine is unnecessary.

The entire $\act{update}$ transition is atomic, so there is no possibility to interleave $\act{fail}$ transitions between the subroutines of $\act{update}$.
Thus, the state of $\System$ at (the beginning of) round $k+1$ is obtained by applying these four functions to the state at round $k$.
Now we proceed to describe the distributed traffic control algorithm that is implemented through these functions.

\paragraph*{$\Route$}
For each cell and each color, the $\Route$ function (\figref{route}) constructs a distance-based routing table to the target cell of that color.
This relies only on neighbors' estimates of distance to the target.
Recall that failed cells have $\distc$ set to $\infty$ for every color $c \in C$.
From a state $\vx$, for each $i \in \NF(\vx)$, the variable $\distic$ is updated as $1$ plus the minimum value of $\distjc$ for each neighbor $j$ of $i$.
If this results in $\distic$ being infinity, then $\nextic$ is set to $\bot$, but otherwise it is set to be the identifier with the minimum $\distc$ where ties are broken with neighbor identifiers.


\begin{figure}[t]
	\centering
	\mybox{\columnwidth}{\columnwidth}{
		{\lstinputlisting[language=ioaNums,numbersep=-3pt]{code/route.hioa}}
	\centering
	\makebox[\columnwidth][l]{\hrulefill~~~~}
	\centering
	\parbox{0.95\columnwidth}{
		\caption{$\Route$ function for $\Celli$.  This function computes a minimum distance vector routing spanning tree rooted composed of non-faulty cells for each color, rooted at each target.}
		\figlabel{route}
		}
	}
\end{figure}



Next, we introduce some definitions used to relate the system state to the variables used in the algorithm.
For a state $\vx$, we inductively define the \emph{color $c$ target distance} $\rhoc$ of a cell $i \in \ID$ as the smallest number of non-faulty cells between $i$ and $\tidc$:
\begin{align*}
 \rhoc(\vx,i) \deq & \left\{ \begin{array}{ll}
		\infty & \mbox{if} \ \vx.\failedi, \\
		0 & \mbox{if} \ i = \tidc \wedge \neg \vx.\failedi, \\
		1 + \min\limits_{j \in \vx.\Nbrsi} \rhoc(\vx, j) & \mbox{otherwise}.
	\end{array} \right.
\end{align*}
%
%
A cell is said to be \emph{target-connected to color $c$} if $\rhoc$ is finite.
We define 
\begin{align*}
\TC(\vx, c) & \deq \{ i \in \NF(\vx) \ | \ \rhoc(\vx, i) < \infty \}
\end{align*}
as the set of cells that are target-connected to $\tidc$.

For a state $\vx$ and a color $c \in C$, we define the \emph{routing graph} as $\GR(\vx, c) = (\VR(\vx, c), \ER(\vx, c))$, where the vertices and directed edges are, respectively, 
\begin{align*}
\VR(\vx, c) & \deq \NF(\vx) \ \mbox{and}	\\
\ER(\vx, c) & \deq \{ (i, j) \in \VR(\vx, c) : \rhoc(\vx,j) = \rhoc(\vx,i) + 1 \}.
\end{align*}
Under this definition, $\GR(\vx, c)$ is a spanning tree rooted at $\tidc$.
We will show that the graph induced by the $\nextic$ variables stabilizes to the routing graph $\GR(\vx, c)$ at some state $\vx$ (\corref{next}).
We previously introduced $\Delta$ as the worst-case diameter of the communication graph, and will refer to $\Delta(\vx)$ as the exact diameter at some state $\vx$.

\paragraph*{$\Lock$}
\sayan{$\ast$: rewrote a lot of this, moved definitions from analysis section to here to avoid repetition}
The $\Lock$ function (\figref{lock}) executes after $\Route$, and schedules traffic over intersections (the cells where source-to-target paths of different colors overlap).
To avoid deadlock scenarios, $\Lock$ maintains an invariant that entities of at most one color are on these intersections.

Moving entities over intersections requires some global coordination as illustrated by the following analogy.
Consider the policy used to coordinate cars going in opposite directions over a one-lane bridge (see~\figref{exampleSystemOneLane}), where there is a traffic signal on each side of the bridge.
%
%
The algorithm chooses one traffic light, allowing some cars to safely travel over the bridge in one direction.
After some time, the algorithm switches the lights (first turning green to red, and after the road is clear, turning red to green) allowing traffic to flow in the opposite direction.
Then this process repeats.

\begin{figure}[t]
	\centering
	\mybox{\columnwidth}{\columnwidth}{
		{\lstinputlisting[language=ioaNums,numbersep=-3pt]{code/lock.hioa}}
	\centering
	\makebox[\columnwidth][l]{\hrulefill~~~~}
	\centering
	\parbox{0.95\columnwidth}{
		\caption{$\Lock$ function for $\Celli$.  This function computes the color-shared cells---the cells in intersections---for each color, and then ensures liveness by giving a lock to only one color on each intersection.}
		\figlabel{lock}
		}
	}
\end{figure}

Two parts of the previous example require global coordination and are included in the $\Lock$ function.
The first is how to chose the direction in which cars are allowed to travel---this is accomplished through the use of a mutual exclusion algorithm.
The second is when to allow cars to travel in the opposite direction---this is accomplished by determining when the intersection is empty.
We now describe this global coordination more formally.

%
For defining the locking algorithm, we first define intersections.
For this we introduce the notion of an entity graph.
Cell $i$ is said to be in the \emph{entity graph} of some color $c$ at state $\vx$ if one of the following conditions hold:
\begin{inparaenum}[(a)]
\item $i$ is $\sidc$,
\item in state $\vx$, $i$ has entities of color $c$, or 
\item in state $\vx$, $i$ is the neighbor closest to $\tidc$ of a cell already in the entity graph.
%
\end{inparaenum}
Formally, we define the \emph{color $c$ entity graph} at state $\vx$ as $\GE(\vx, c) = (\VE(\vx, c), \EE(\vx, c))$, which is the following subgraph of the color $c$ routing graph $\GR(\vx, c)$.
The vertices of $\GE(\vx,c)$ are inductively defined as 
\begin{align*}
\VE(\vx, c) \deq \{ i \in \NF(\vx) : i = \sidc \vee \vx.\etypei = c \vee \left( \exists j \in \VE(\vx, c) . (i,j) \in \ER(\vx, c)  \right)  \}.
\end{align*}
%
The edges of $\GE(\vx,c)$ are $\EE(\vx, c) \deq \{ (i, j) \in \VE(\vx, c) \times \VE(\vx, c) : (i, j) \in \ER(\vx, c) \}$.
For example, if all cells are empty, then $\VE(\vx, c)$ is the sequence of cell identifiers defined by following the minimum distance (as defined by $\rhoc$) from the source to the target of color $c$.
That is, each $\GE(\vx, c)$ is a simple path graph from source to target\footnote{Once cells have failed, this may stabilize to be a tree from any cell with entities of color $c$ to the target of color $c$.}.

Now we describe how the entity graph of each color $c$ is computed by each cell $i$ as the $\pathic$ variable.
If $i$ is on the entity graph of color $c$, then we add $i$ and $i$'s $\next$ variable for color $c$ to the entity graph (see~\figref{lock},~\lnreftwo{pathif}{pathset}).
Once the $\nextic$ variables stabilize (\corref{next}) and after an additional order of diameter rounds, the variable $\pathic$ contains all the entity graphs since we gossip these graphs (\lnref{pathgossip}).
That is, the graph formed by the $\pathic$ variables stabilizes to equal $\GE(\vx,c)$, and contains the sequence of identifiers from any source or nonempty cell of color $c$ to the target of color $c$ (\corref{pathGlobal}).

Next, the variable $\pintic$ is computed to be the set of cell identifiers on the color $c$ entity graph that overlaps with any other colored entity graph (\lnref{pathint}).
The cells involved in such non-empty intersections represent physical traffic intersections, and are called \emph{color-shared cells}.
These cells require coordinated locking for traffic flow to progress.
Cell $i$ is in $\pintic$ if and only if it will need a lock for color $c$.

Formally, we define the \emph{$c$ color-shared cells}, for a state $\vx$, for any $c \in C$, as
\begin{align*}
\CSC(\vx, c) = \{ \VE(\vx, c) : \exists d \in C . c \neq d \wedge \VE(\vx, c) \cap \VE(\vx, d) \neq \emptyset \}.
\end{align*}
In~\figref{exampleSystemTriangular}, these are cells $8$ and $12$.
The $\pintic$ variables stabilize to equal $\CSC(\vx,c)$, at some state $\vx$, for any color $c$ (\corref{pintGlobal}).



Next, we need to determine the colors that will need to coordinate to schedule traffic through the color-shared cells.
Then, a mutual exclusion algorithm is initiated between all cells for each disjoint set of cell colors in $\pintic$.
Formally, we define the \emph{$c$ shared colors}, for a state $\vx$, for any $c \in C$, as
\begin{align*}
\SC(\vx, c) = \{ d \in C : c \neq d \wedge \CSC(\vx, d) = \CSC(\vx, c) \}.
\end{align*}
The $\lockcolorsic$ variables stabilize at some state $\vx$ to equal $\SC(\vx,c)$, for any color $c$.

In general, up to $\abs{C}$ colors could be involved in intersections, as well as all the smaller permutations.
For instance, consider~\figref{exampleLockcolors} with $6$ colors at some state $\vx$.
Here, the blue and red entity graphs overlap, green and blue entity graphs overlap, but red and green do not, and independently, the purple and yellow entity graphs overlap (that is, not with blue, red, nor green), but no colors overlap with brown.
Then $\SC(\vx, c)$ is $\{blue, red, green\}$ for $c$ equal to blue, red, or green, $\SC(\vx, c)$ is $\{yellow, purple\}$ for $c$ equal to yellow or purple, and $\SC(\vx, c)$ is empty for $c$ equal to brown.
Two mutual exclusion algorithms would be initiated, one with blue, red, and green as the input set of values, and another with yellow and purple as the input set.
Upon these two instances terminating, one element of the first set, say $green$, would be chosen and given a lock, and one element, say $yellow$, of the second set would also be given a lock.
The entities of these colors progress over the color-shared cells toward their intended targets.
Finally, once the color-shared cells are empty again, $green$ and $yellow$ would each be removed from the respective input sets for fairness, and another mutual exclusion algorithm is initiated.



\begin{figure}[t]
	\centering
	\mybox{\columnwidth}{\columnwidth}{
		\twosep{.49}{.49}
		{\lstinputlisting[language=ioaNums,lastline=10,numbersep=-3pt]{code/signal.hioa}}
		{\lstinputlisting[language=ioaNumsRight,firstline=11,firstnumber=11,numbersep=-10pt]{code/signal.hioa}}
	\centering
	\makebox[\columnwidth][l]{\hrulefill~~~~}
	\centering
	\parbox{0.95\columnwidth}{
		\caption{$\Signal$ function for $\Celli$.  Cell $i$ signals fairly to some neighbor $j$ if it is safe for $j$ to move its entities toward $i$.}
		\figlabel{signal}
		}
	}
\end{figure}

\paragraph*{$\Signal$}
The $\Signal$ function (\figref{signal}) executes after $\Lock$.
It is the key part of the protocol for maintaining safe entity separations, guaranteeing each cell has entities of only a single color, and ensuring progress of entities to the target.
Roughly, each cell implements this through the following policies:
\begin{inparaenum}[(a)]
\item only accept entities from a neighbor when it is safe to do so, 
\item only accept entities with the same color as the entities currently on the cell (or an arbitrary color if the cell is empty),
\item if a lock is needed, then only let entities move if it is acquired, and
\item ensure fairness by providing opportunities infinitely often for each nonempty neighbor to make progress.
\end{inparaenum}

First $i$ computes a temporary variable $cn$, which is the set of colors for any neighbor that has entities of some color, with the corresponding $\next$ variable set to cell $i$.
Next, cell $i$ picks a color $c$ from this set if it is empty, or the color of its own entities if it is nonempty, and will attempt to allow some cell with this chosen color to move toward itself.
Then, cell $i$ sets $\NEPrevi$ to be the subset of $\Nbrsi$ for which $\next$ has been set to $i$ \emph{and} $\Members$ is nonempty.
If $\tokeni$ is $\bot$, then it is set to some arbitrary value in $\NEPrevi$, but it continues to be $\bot$ if $\NEPrevi$ is empty.
Otherwise, $\tokeni = j$ for some neighbor $j$ of $i$ with nonempty $\Membersj$.
This is accomplished through the conditional in~\lnref{algo:gap} as a step in guaranteeing fairness.

It is then checked if there is any entity $p$ with center $\pc$ in the safety region of $\Celli$ on the side corresponding to $\tokeni$.
If there is such an entity, then $\signali$ is set to $\bot$, which blocks the neighboring cell with identifier $\tokeni$ from moving its entities in the direction of $i$, thus preventing entity transfers and ensuring safety.
Otherwise, if there is no entity with center in the safety region on side $\tokeni$, then $\signali$ is set to $\tokeni$ to allow $\tokeni$ to move its entities toward $i$.
Subsequently, $\tokeni$ is updated to a value in $\NEPrevi$ that is different from its previous value, if that is possible according to the rules just described (\lnsref{algo:token1}{algo:token4}).

\paragraph*{$\Move$}
Finally, the $\Move$ function (\figref{movement}) models the physical movement of all the entities on cell $i$ over a given round.
For cell $i$, let $j$ be $\nextic$, where $c$ is $\etypei$ (which may be $\bot$ if cell $i$ has no entities).
Every entity in $\Membersi$ moves in the direction of $j$ if and only if $\signalj$ is set to $i$.
The direction followed from cell $i$ to $j$ is $\MoveVectorij$, which is any vector satisfying~\assref{projectionProperty}.
%
%
For example, for a square (or rectangular) cell $i$, one choice for $\MoveVectorij$ is the unit vector orthogonal to $\Sideij$ and pointing into $j$.
%
In the case of an equilateral triangular cell $i$, one choice for $\MoveVectorij$ is also any orthogonal vector pointing into $j$.

The movement toward cell $j$ may lead to some entities crossing the boundary of $\Celli$ into $\Cellj$, in which case, they are removed from $\Membersi$.
If $j$ is not the target matching the transferred entities' color, then the removed entities are added to $\Membersj$.
In this case (\lnref{algo:reset}), any transferred entity $p$ is placed so that $D_{\l}(p)$ touches a single point of (is tangent to) $\Sideij$, the shared side of cells $i$ and $j$, and lies on the inner side of the transfer region of cell $j$ on side $\Sideij$.
%
%
Resetting entity positions is a conservative approximation to the actual physical movement of entities.
If $j$ is the target matching the transferred entities' color, then the removed entities are not added to any cell and thus no longer exist in $\System$.
%


\begin{figure}[t]
	\centering
	\mybox{\columnwidth}{\columnwidth}{
		{\lstinputlisting[language=ioaNums,numbersep=-3pt]{code/move.hioa}}
	\centering
	\makebox[\columnwidth][l]{\hrulefill~~~~}
	\centering
	\parbox{0.95\columnwidth}{
		\caption{$\Move$ function for $\Celli$.  If $i$ has received a signal to move from $j$, it updates the positions of all entities on it to move in $j$'s direction, which may lead to some entities transferring from cell $i$ to $j$.}
		\figlabel{movement}
		}
	}
\end{figure}

The source cells $i \in \SID$, in addition to the above, add a finite number of entities in each round to $\Membersi$, such that the addition of these entities does not violate the minimum gap between entities at $\Celli$.
In the remainder of the paper, we will analyze $\System$ to show that in spite of failures, it maintains safety and liveness properties to be introduced in the next section.
\section{Analysis of Distributed Traffic Control}
\seclabel{analysis}
In this section, we present an analysis of the safety and liveness properties of $\System$.
Roughly, the safety property requires that there is a minimum gap between entities on any cell, and the liveness property requires that all entities that reside on cells with feasible paths to the corresponding target eventually reach that target.
\subsection{Safety and Collision Avoidance}
\sslabel{safety}
A state is safe if, for every cell, the boundaries of all entities in the cell are separated by a distance of $\rs$.
For any state $\vx$ of $\System$, we define:
\begin{align*}
\Safei (\vx)  &\deq \forall p, q \in \vx.\Membersi . p \neq q \Rightarrow \norm{ \pc - \qc} \geq 2 \l + \rs, \ \mbox{and} \\
\Safe(\vx) & \deq  \forall i \in \ID, \Safei(\vx).
\end{align*}
This definition allows entities in different cells to be closer than $2\l + \rs$ apart, but their centers will be spaced by at least $2\l$.
We proceed by proving some preliminary properties of $\System$ that will be used for proving $\Safe$ is an invariant.

The first property asserts that entities' cannot come close enough to the sides of cells to reside on multiple cells.
This is because any entity whose boundary touches the side of a cell is transferred to the neighboring cell on that side (if one exists), and then the entity's position is reset to be completely within the new cell.
%
%
\assref{transferFeasibility} restricts the allowed partitions to ensure entity transfers are well-defined.
For instance, some of the cells in the snub square tiling in~\figref{transferSafety} do not satisfy~\assref{transferFeasibility}.
Consider an entity transfer from cell $3$ to cell $5$.
There is no constant vector connecting the transfer regions of cell $3$ to those of cell $5$.
This is because the side length of the transfer region of the triangular cell $5$ is shorter than the side length of the transfer region of the square cell $3$.
However, in a transfer from cell $1$ to cell $2$ or vice-versa, the side lengths are the same.
We also note that the assumption is only necessary for entity transfers from a cell with a longer transfer side length to a neighboring cell with smaller corresponding transfer side length.
For example, a transfer from cell $5$ to cell $3$ is feasible.

Under~\assref{transferFeasibility}, we have the following invariant, which states that the $\l$-ball around each entity in a cell is completely contained within the cell.
\begin{inv}
In any reachable state $\vx$, $\forall i \in \ID$, $\forall p \in \vx.\Membersi$, $D_{\l}(p) \setminus \partitioni = \emptyset$.
\invlabel{facetSpacing}
\end{inv}

The next invariant states that cells' $\Members$ sets are disjoint.
This is immediate from the $\Move$ function since entities are only added to one cell's $\Members$ upon being removed from a different cell's $\Members$.
\begin{inv}
In any reachable state $\vx$, for any $i, j \in \ID$, if $i \neq j$, then $\vx.\Membersi \cap \vx.\Membersj = \emptyset$.
\invlabel{disjoint}
\end{inv}

The following invariant states that cells contain entities of a single color in spite of failures.
This follows from the $\Signal$ routine in \figref{signal}, where \lnref{signalOneColor} requires that if some neighbor $j$ is attempting to move entities toward cell $i$, then the color of $i$ is either $\bot$ or equal to the color of $j$.
%
%
\begin{inv}
In any reachable state $\vx$, for all $i \in \ID$, for all $p, q \in \vx.\Membersi$, $\etypep = \etypeq$.
%
\invlabel{onecolor}
\end{inv}

Next, we define a predicate that states that if $\signali$ is set to the identifier of some neighbor $j \in \Nbrsi$, then there is a large enough area from the common side between $i$ and $j$ where no entities reside in $\Celli$.
Recall that $\Sideij$ is the line segment shared between neighboring cells $i$ and $j$.
For a state $\vx$, $H(\vx) \deq$ $\forall i \in \ID$, $\forall j \in \Nbrsi$, if $\vx.\signali = j$, then the following holds:
%
\taylor{restate next in terms of safety region on $j$?}
\begin{align*}
\forall p \in \vx.\Membersi, \minel{\norm{ \pc - x }}{x \in \Sideij} \geq 3d \eqlabel{lnt}.
\end{align*}
$H(\vx)$ is not an invariant property because once entities move the property may be violated.
However, for proving safety, all that needs to be established is that at the {\em point of computation of the $\signal$ variable\/} this property holds.
The next key lemma states this.
\begin{lemma}
For all reachable states $\vx$, $H(\vx) \Rightarrow H(\vx_S)$ where $\vx_S$ is the state obtained by applying the $\Route$, $\Lock$, and $\Signal$ functions to $\vx$.
\lemlabel{safeSignal}
\end{lemma}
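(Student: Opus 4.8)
The plan is to establish $H(\vx_S)$ by a direct inspection of the $\Signal$ function, exploiting the fact that nothing between $\vx$ and $\vx_S$ moves any entity. First I would observe that $\Route$, $\Lock$, and $\Signal$ write only to the routing, locking, signaling, and fairness variables; none of them inserts into, deletes from, or repositions anything in a $\Members$ set. Hence $\vx_S.\Membersi = \vx.\Membersi$ and $\vx_S.\pc = \vx.\pc$ for every cell $i$ and every entity $p$, so for each pair $(i,j)$ the quantity $\min_{x \in \Sideij}\norm{\pc - x}$ is identical at $\vx$ and $\vx_S$. This reduces the lemma to controlling the recomputed value of $\vx_S.\signali$.

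Next I would fix $i \in \ID$ and $j \in \Nbrsi$, assume the antecedent $\vx_S.\signali = j$, and trace how $\Signal$ (\figref{signal}) could have produced it. The function assigns $\signali$ a neighbor identifier only in the branch in which it has already tested the side corresponding to $\tokeni$ and found no entity center in the safety region there; in that branch the assigned value is exactly $\tokeni$, so $j = \tokeni$ and no $p \in \vx.\Membersi$ has its center in $\safetyRegioni(\Sideij)$. In any branch where instead $\signali$ keeps its value from $\vx$ (for instance when $\tokeni = \bot$), the required bound for the pair $(i,j)$ is inherited directly from the hypothesis $H(\vx)$ via the position preservation of the first paragraph; this is the only place the hypothesis $H(\vx)$ is actually used, and it explains why the lemma is phrased as an implication.

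It then remains to convert ``no entity center in the safety region on side $j$'' into the quantitative statement $\min_{x \in \Sideij}\norm{\pc - x} \geq 3\d$ for every $p \in \vx_S.\Membersi$. By definition $\safetyRegioni(\Sideij)$ is the set of points of $\partitioni$ within orthogonal distance $3\d$ of $\Sideij$, and for any point the distance to the segment $\Sideij$ is at least its orthogonal distance to the line carrying $\Sideij$. Thus an entity center lying outside the safety region has orthogonal distance exceeding $3\d$ and hence segment distance at least $3\d$, which is precisely the inequality demanded by $H$. Convexity of $\partitioni$, and for reachable states \invref{facetSpacing} (which keeps each entity's $\l$-ball inside its cell), are what I would invoke to discharge the corner case in which a center's orthogonal projection onto the line of $\Sideij$ lands beyond an endpoint of the segment.

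The main obstacle I anticipate is exactly this last geometric translation: $\Signal$ reasons with a safety region measured orthogonally from a side, whereas $H$ is stated with the Euclidean distance to the side \emph{segment}, so the argument must relate the two notions and exclude entity centers that are orthogonally far from the side's line yet geometrically close to one of its endpoints. Everything else reduces to bookkeeping---identifying which variables each subroutine writes and a short case split on how $\signali$ acquires the value $j$.
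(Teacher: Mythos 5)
Your proposal is correct and takes essentially the same approach as the paper's proof: both observe that $\Route$, $\Lock$, and $\Signal$ move no entities, reduce the claim to how $\signali$ is recomputed, and then discharge the case $\vx_S.\signali = j$ by combining the safety-region test of \figref{signal}, \lnref{algo:gap} (fresh assignment) with the hypothesis $H(\vx)$ (inherited value). Your third paragraph merely spells out the translation from the orthogonal safety-region test to the segment-distance inequality in $H$, a step the paper compresses into a single sentence.
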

\begin{proof}
Fix a reachable state $\vx$, an $i \in \ID$, and an $j \in \Nbrsi$ such that $\vx.\signali = j$.
Let $\vx_R$ be the state obtained by applying the $\Route$ function to $\vx$, $\vx_L$ be the state obtained by applying the $\Lock$ function to $\vx_R$, and $\vx_S$ be the state obtained by applying the $\Signal$ function to $\vx_L$.

First, observe that both $H(\vx_R)$ and $H(\vx_L)$ hold.
This is because the $\Route$ and $\Lock$ functions do not change any of the variables involved in the definition of $H(\cdot)$.
Next, we show that $H(\vx_L)$ implies $H(\vx_S)$.
If $\vx_S.\signali \neq j$ then the statement holds vacuously.
Otherwise, $\vx_S.\signali = j$, then since (a) $H(\vx_L)$ holds, and (b)~\figref{signal},~\lnref{algo:gap} is satisfied, we have that $H(\vx_S)$.
\end{proof}

The following lemma asserts that if there is a cycle of length two formed by the $\signal$ variables---which could occur due to failures---then entity transfers cannot occur between the involved cells in that round.
\begin{lemma}
Let $\vx$ be any reachable state and $\vx'$ be a state that is reached from $\vx$ after a single $\act{update}$ transition (round).
If $\vx.\signali = j$ and $\vx.\signalj = i$, then $\vx.\Membersi = \vx'.\Membersi$ and $\vx.\Membersj = \vx'.\Membersj$.
\lemlabel{safeV}
\end{lemma}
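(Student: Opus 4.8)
The plan is to show that each of the four conceivable entity transfers across the shared side in this round is blocked, so that neither $\Membersi$ nor $\Membersj$ can gain or lose a member. Since the subroutines $\Route$, $\Lock$, and $\Signal$ alter neither any $\Members$ set nor any entity position, the only step that can change $\Membersi$ or $\Membersj$ is $\Move$, and the signals in force when $\Move$ executes are $\signali = j$ and $\signalj = i$. I would extract from each of these two equalities both a \emph{routing} fact and a \emph{geometric} fact, and then combine them; these are exactly the two halves of the hypothesis and it is their interplay that makes the $2$-cycle harmless.

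For the routing facts, I would use the construction of $\Signal$: the assignment $\signali = j$ is made only after $j$ is selected from $\NEPrevi$, the set of nonempty neighbors whose routing variable currently points to $i$. Hence $\Membersj \neq \emptyset$ and $j$'s block of entities is routed toward $i$, i.e. $\nextj[\etypej] = i$ (using \invref{onecolor} so that $\etypej$ is the single well-defined color of $j$ and matches the color chosen at $i$). Symmetrically, $\signalj = i$ gives $\Membersi \neq \emptyset$ and $\nexti[\etypei] = j$. Two structural consequences follow for $\Move$: the entities of $i$ move, as one block, only toward $\Sideij$ (and likewise those of $j$); and $i$ can receive entities only from a neighbor $k$ with $\signali = k$, so since $\signali = j$ the only possible sender into $i$ is $j$ (and symmetrically the only possible sender into $j$ is $i$).

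For the geometric facts, note that the assignment $\signali = j$ passes the safety-region test of $\Signal$ (\figref{signal}, \lnref{algo:gap}); equivalently, predicate $H$ holds for the pair $(i,j)$ at the moment the signal is set, and by \lmref{safeSignal} this clearance is still in force when $\Move$ runs. Thus every entity $p \in \Membersi$ satisfies $\min_{x \in \Sideij} \norm{\pc - x} \geq 3\d$, and symmetrically every entity of $j$ lies at distance at least $3\d$ from $\Sideij$. By the model's transfer rule an entity crosses $\Sideij$ only when, after moving, its disc of radius $\l$ touches $\Sideij$, i.e. its center comes within $\l$ of $\Sideij$. Since each cell displaces its entities by at most $\v < \l$ and $3\d = 3(\rs + \l) > \l + \v$, an entity starting at distance at least $3\d$ from $\Sideij$ ends at distance greater than $3\d - \v > \l$, so it neither touches nor crosses $\Sideij$. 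Hence no entity of $i$ crosses into $j$ and no entity of $j$ crosses into $i$.

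Combining the two: no entity leaves $\Membersi$, because the only direction in which $i$'s entities move is toward $j$ yet none of them reach $\Sideij$; and no entity enters $\Membersi$, because the only neighbor permitted to push entities into $i$ is $j$ yet none of $j$'s entities reach $\Sideij$. Therefore $\vx.\Membersi = \vx'.\Membersi$, and the argument for $\vx.\Membersj = \vx'.\Membersj$ is identical with the roles of $i$ and $j$ exchanged. I expect the main difficulty to be bookkeeping rather than analysis: one must read off from the single pair of equalities $\signali = j$, $\signalj = i$ simultaneously that each cell routes its entities toward the shared side \emph{and} that each cell has cleared its own entities away from that side, and then rule out all four transfer directions; the non-crossing estimate itself is an elementary consequence of $\v < \l < 3\d$.
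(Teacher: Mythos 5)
Your proof is correct and takes essentially the same approach as the paper's: transfers into $i$ or $j$ from third cells are excluded because a cell accepts entities only from the neighbor it signals, and transfers between $i$ and $j$ are excluded because the safety-region clearance that underlies each signal, combined with $\v < \l$, prevents any entity of either cell from reaching $\Sideij$ in a single round. If anything, your bookkeeping is tighter than the paper's own proof, which (with some index slippage) attributes the clearance of $j$'s entities to $\signali = j$ rather than to $\signalj = i$, and which does not explicitly rule out entities leaving $i$ or $j$ toward third cells, a case you close correctly by extracting the $\next$ pointers from the two signal assignments.
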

\begin{proof}
No entities enter either $\vx'.\Membersi$ or $\vx'.\Membersj$ from any other $m \in \Nbrsi$ or $n \in \Nbrsj$ since $\vx.\signali = j$ and $\vx.\signalj = i$.
It remains to be established that $\nexists p \in \vx.\Membersj$ such that $p' \in \vx'.\Membersi$ where $p = p'$ or vice-versa.
Suppose such a transfer occurs.
For the transfer to have occurred, $\pc$ must be such that $\pc' = (p_x, p_y) + \v \MoveVectorij$ by~\figref{movement},~\lnref{algo:movement}.
But for $\vx.\signali = j$ to be satisfied, it must have been the case that $D_{\l}(p) \cap \partitioni = \emptyset$ by~\figref{signal},~\lnref{algo:gap} and since $\v < \l$, a contradiction is reached.
\end{proof}

Using the previous results, we now prove that $\System$ preserves safety even when some cells fail.
\begin{rtheorem}
In any reachable state $\vx$ of $\System$, $\Safe(\vx)$.
\thmlabel{safety}
\end{rtheorem}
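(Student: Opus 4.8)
The plan is to show that $\Safe$ is an inductive invariant, by induction on the length of a finite execution reaching $\vx$. In the initial state every $\Membersi$ is empty (\figref{celli}), so $\Safe$ holds vacuously. For the inductive step I fix a reachable $\vx$ with $\Safe(\vx)$ and a transition $\vx \arrow{a} \vx'$, and split on the action type. A $\act{fail}(i)$ transition only rewrites $\failedi$, $\distic$, and $\nextic$ and relocates no entities, so every $\Membersi$ is unchanged and $\Safe(\vx')$ is immediate. The entire difficulty therefore lies in the $\act{update}$ case.

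For an $\act{update}$ I would exploit the four-phase decomposition $\Route, \Lock, \Signal, \Move$. Only $\Move$ relocates entities, so I first pass to the state $\vx_S$ reached after $\Route, \Lock, \Signal$ and collect the facts available there: (i) by \lmref{safeSignal}, $H$ holds at $\vx_S$, so whenever $\vx_S.\signali = m$ every resident $p \in \vx_S.\Membersi$ satisfies $\min_{x \in \Side(i,m)}\norm{\pc - x} \ge 3\d$; (ii) since $\signali$ is single-valued, cell $i$ can admit entities from at most one neighbor $m$ this round; (iii) by \invref{onecolor} all entities on a cell share a color and, by $\Move$, translate by a single common vector of norm at most $\v$, so any two residents that both remain on $i$ keep their pairwise distance exactly; and (iv) by \lmref{safeV}, if $i$ and $m$ signal each other (a length-two cycle, possible after failures) then no entity crosses $\Side(i,m)$. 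I would additionally invoke \invref{facetSpacing} and \invref{disjoint} to ensure transfers are well-defined and that residents and arrivals of $i$ are genuinely distinct entities.

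Fixing a cell $i$, I would verify $\Safei(\vx')$ by a case analysis on an arbitrary pair $p \ne q \in \vx'.\Membersi$, classifying each as a \emph{resident} (in $\vx.\Membersi$ and not transferred out), an \emph{arrival} (transferred in from the unique signaled neighbor $m$), or a freshly produced \emph{source} entity. The resident--resident case follows from (iii) and the inductive hypothesis. The resident--arrival case uses (i): residents sit at orthogonal distance $\ge 3\d$ from $\Side(i,m)$ at $\vx_S$ and move by at most $\v < \l$, while arrivals are reset to the inner boundary of the transfer region, at orthogonal distance $\l$ from $\Side(i,m)$; the $3\d$ margin is sized precisely so that, after subtracting the $\v$ shift and the $\l$ offset, the centers remain at least $2\l + \rs$ apart. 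The arrival--arrival case uses that on $m$ the entities were $\ge 2\l + \rs$ apart, that the motion vector $\MoveVector(m,i)$ is orthogonal to $\Side(i,m)$ (\assref{projectionProperty}) so along-side coordinates are preserved, and that \assref{transferFeasibility} makes the reset onto $i$'s inner transfer boundary well-defined. Source entities are added by construction without violating the gap, against both residents and arrivals.

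The main obstacle I anticipate is the arrival--arrival case: controlling the pairwise separation of several entities that cross $\Side(i,m)$ in the same round and are then reset onto the inner transfer boundary. Here I must argue carefully that the reset---an orthogonal projection to distance $\l$, applied to entities confined by $\v < \l$ to a thin strip near the side---together with \assref{transferFeasibility} preserves the $2\l + \rs$ spacing inherited from $m$. This is exactly the step where the precise choice of the $3\d$ safety region and the velocity bound $\v < \l$ are indispensable, and it is the geometric crux on which the theorem ultimately rests.
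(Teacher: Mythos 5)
Your overall route is the paper's: induction over executions, the $\act{fail}$ case dismissed as trivial, and the $\act{update}$ case reduced to the state after $\Signal$ using \lmref{safeSignal} (the predicate $H$), \lmref{safeV}, \invref{onecolor}, and \invref{disjoint}, followed by a pairwise case analysis. However, your resident--arrival step contains a concrete arithmetic gap. You budget a shift of $\v$ for residents of $i$ toward $\Side(i,m)$ and claim the margin $3\d - \v - \l$ still clears the safety threshold $2\l + \rs$. Since $\d = \rs + \l$, that inequality reads $3\rs + 2\l - \v \geq 2\l + \rs$, i.e.\ $\v \leq 2\rs$ --- which does \emph{not} follow from the paper's only velocity assumption $\v < \l$, because $\rs$ may be arbitrarily small relative to $\l$ and $\v$. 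The idea you are missing is the paper's contrapositive use of \lmref{safeV}: because a transfer from $m$ into $i$ actually occurred this round, it cannot be that both $\vx.\signali = m$ and $\vx.\signal_m = i$ held; since $\vx.\signali = m$ was necessary for the transfer, we get $\vx.\signal_m \neq i$, so cell $i$ had no permission to move toward $m$ and its residents do not approach $\Side(i,m)$ at all. This yields the unconditional margin $3\d - \l = 3\rs + 2\l \geq 2\l + \rs$, with no hidden constraint tying $\v$ to $\rs$. Your fact (iv) states \lmref{safeV} only as a statement about two-cycles of signals; you never deploy it in the resident--arrival case, which is exactly where the paper needs it.

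On your anticipated obstacle: you are right that the arrival--arrival case is the delicate one, and it is worth knowing that the paper's proof does not treat it at all --- it argues residents against a single transferred entity and then asserts the conclusion for every pair in $\vx'.\Membersi$. But your proposed resolution (that the reset ``preserves the $2\l+\rs$ spacing inherited from $m$'') does not follow as stated: two entities crossing $\Side(i,m)$ in the same round may differ by up to $\v$ in orthogonal distance to the side, and resetting both onto the inner transfer boundary collapses that offset to zero, leaving only their along-side separation, which can be as small as $\sqrt{(2\l+\rs)^2 - \v^2} < 2\l + \rs$. So this case needs a genuinely new argument (e.g., that the geometry of $\Move$ plus the signaling discipline precludes such near-simultaneous crossings, or a per-cell safety notion robust to the reset); flagging it as ``the crux'' is correct, but the preservation claim you lean on is precisely what fails.
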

\begin{proof}
The proof is by standard induction over the length of any execution of $\System$.
The base case is satisfied by the assumption that initial states $\vx \in Q_0$ satisfy $\Safe(\vx)$.
For the inductive step, consider any reachable states $\vx$, $\vx'$ and an action $a \in A$ such that $\vx \arrow{a} \vx'$.
Fix $i \in \ID$ and assuming $\Safei(\vx)$, we show that $\Safei(\vx')$.
If $a = \act{fail}_i$, then $\Safei(\vx')$ since no entities move.

For $a = \act{update}$, there are two cases to consider by~\invref{disjoint}.
First, $\vx'.\Membersi \subseteq \vx.\Membersi$, that is, no new entities were added to $i$, but some may have transfered off $i$.
There are two sub-cases: if $\vx'.\Membersi = \vx.\Membersi$, then all entities in $\vx.\Members$ move identically and the spacing between two distinct entities $p$, $q \in \vx'.\Membersi$ is unchanged.
Let $j = \nextic$ where $c = \etypei$ by~\invref{onecolor}.
That is, $\forall p, q \in \vx.\Membersi$, $\forall p', q' \in \vx'.\Membersi$ such that $p' = p$ and $q' = q$ and where $p \neq q$, $\norm{(p_x',p_y') - (q_x',q_y')} = \norm{(p_x,p_y) + \v \MoveVectorij, (q_x,q_y) + \v \MoveVectorij}$ (\figref{movement}, \lnref{algo:movement}).
It follows by the inductive hypothesis that $\norm{(p_x',p_y') - (q_x',q_y')} \geq d$.
The second sub-case arises if $\vx'.\Membersi \subsetneq \vx.\Membersi$, then $\Safei(\vx')$ is either vacuously satisfied or it is satisfied by the same argument just stated.

The second case is when $\vx'.\Membersi \nsubseteq \vx.\Membersi$, that is, there was at least one entity transfered to $i$.
Consider any such transferred entity $p' \in \vx'.\Membersi$ where $p' \notin \vx.\Membersi$.
There are two sub-cases.
The first sub-case is when $p'$ was added to $\vx'.\Membersi$ because $i$ is a source, that is, $i \in \SID$.
In this case, the specification of the source cells states that the entity $p'$ was added to $\vx'.\Membersi$ without violating $\Safei(\vx')$, and the proof is complete.
Otherwise, $p'$ was added to $\vx'.\Membersi$ by some neighbor $j \in \vx.\Nbrsi$, so $p' \in \vx.\Membersj$ but $p' \notin \vx.\Membersi$, and $p' \in \vx'.\Membersi$ but $p' \notin \vx'.\Membersj$.
From~\lnref{algo:reset} of~\figref{movement}, we have that that $(p'_x, p'_y) = \ResetEntity(p, i, j)$.
The fact that $p'$ transferred from $\Cellj$ in $\vx$ to $\Celli$ in $\vx'$ implies that $\vx.\nextj = i$ and $\vx.\signali = j$---these are necessary conditions for the transfer by~\figref{signal},~\lnref{signalSafe}.
Thus, applying the predicate $H(\vx)$ at state $\vx$ and by~\lmref{safeSignal}, it follows that for every $q \in \vx.\Membersi$, $(q_x, q_y) \notin \FaceRegion(i,j)$.
It must now be established that if $p'$ is transfered to $\vx'.\Membersi$, then every $q' \in \vx'.\Membersi$, where $q' \neq p'$ satisfies $(q'_x, q'_y) \notin \FaceRegion(i,j)$, which means that any entity $q$ already on $i$ did not move toward the transfered entity $p$ that is now on $i$.
This follows by application of~\lmref{safeV}, which states that if entities on adjacent cells move towards one another simultaneously, then a transfer of entities cannot occur.
This implies that the discs of all entities $q'$ in $\vx'.\Membersi$ are farther than $\rs$ of the borders of any transfered entity $p'$, implying $\Safei(\vx')$.
Finally, since $i$ was chosen arbitrarily, $\Safe(\vx')$.
\end{proof}
\thmref{safety} shows that $\System$ is safe in spite of failures.

\subsection{Stabilization of Spanning Routing Trees}
\sslabel{routing}
Next, we show under some additional assumptions, that once new failures cease to occur, $\System$ recovers to a state where each non-faulty cell with a feasible path to its target computes a route toward it.
This route stabilization is then used in showing that any entity on a non-faulty cell with a feasible path to its target makes progress toward it.
Our analysis relies on the following assumptions on cell failures and the placement of new entities on source cells.
The first assumption states that no target cells fail, and is reasonable and necessary because if any target cell did fail, entities of that color obviously cannot make progress.
\begin{assumption}
No target cells $t \in \TID$ may fail.
%
%
\asslabel{noTargetFailures}
\end{assumption}

\sayan{$\ast$: moved next assumption discussion out of statement}
The next assumption ensures source cells place entities fairly so that they may not perpetually prevent any neighboring cell or any color-shared cell from making progress.
The assumption is needed because it provides a specification of how the source cells behave, which has not been done so far.
The assumption is reasonable because it essentially says that traffic is not produced perpetually without any break.
\begin{assumption}
(Fairness): Source cells place new entities without perpetually blocking either 
\begin{inparaenum}[(i)]
\item any of their nonempty non-faulty neighbors, or
\item any cell $i \in \CSC(\vx, c)$, where $c$ is the color of source $s$.
\end{inparaenum}
\asslabel{fairness}
\end{assumption}
Formally, the first fairness condition states, for any execution $\alpha$ of $\System$, for any color $c \in C$, for any source cell $\sidc$, if there exists an $i \in \Nbrss$, such that for every state $\vx$ in $\alpha$ after a certain round, $i \in \vx.\NEPrevs$, then eventually $\signals$ becomes equal to $i$ in some round of $\alpha$.
The second fairness condition states, for any execution $\alpha$ of $\System$, for any state $\vx \in \alpha$, for any color $c \in C$, for any source cell $\sidc$, if there exists an $i \in \NF(\vx)$ such that $i \in \CSC(\vx, c)$, and for every state $\vx$ in $\alpha$ after a certain round, if cell $i$ is nonempty, then eventually $\signalj$ becomes equal to $i$ in some round of $\alpha$, where $j$ is a neighbor of $i$.
Such conditions can be ensured if we suppose some oracle placing entities on source cells follows the same round-robin like scheme defined in the $\Signal$ subroutine in~\figref{signal}.
Scenarios where each of these cases can arise are illustrated in~\figref{exampleFairness}.

A fault-free execution fragment $\alpha$ be a sequence of states starting from $\vx$ and along which no $\act{fail}(i)$ transitions occur.
That is, a fault-free execution fragment is an execution fragment with no new failure actions, although there may be existing failures at the first state $\vx$ of $\alpha$, so $F(\vx)$ need not be empty.
Throughout the remainder of this section, we will consider fault-free executions that satisfy~\assreftwo{noTargetFailures}{fairness}.

\begin{lemma}
\lemlabel{dist} 
Consider any reachable state $\vx$ of $\System$, any color $c \in C$, and any $i \in \TC(\vx, c) \setminus \{\tidc\}$.
Let $h = \rhoc(\vx,i)$.
Any fault-free execution fragment $\alpha$ starting from $\vx$ stabilizes within $h$ rounds to a set of states $S$ with all elements satisfying:
\begin{align*}
	\distic & = h, \ \mbox{and}  \\
	\nextic & = i_n, \ \mbox{where} \ \rhoc(\vx, i_n) = h - 1.
\end{align*}
\end{lemma}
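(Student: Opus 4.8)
The plan is to prove the lemma by induction on the target distance $h$, exploiting that along a fault-free fragment $\alpha$ the set $\NF$, the neighbor relation, and hence the map $\rhoc(\vx,\cdot)$ are all frozen at their values in $\vx$; write $\rho(m) := \rhoc(\vx,m)$ and let $\vx_k$ denote the state reached after $k$ $\act{update}$ rounds of $\alpha$ (so $\vx_0=\vx$). Since entities play no role in $\Route$, the evolution of the $\dist_{m,c}$ and $\next_{m,c}$ fields is exactly a synchronous min-plus (Bellman--Ford) iteration with the target pinned, $\vx_k.\dist_{\tidc,c}=0$ for all $k$ (by \figref{route}; note $i\in\TC(\vx,c)$ already forces $\tidc$ non-faulty at $\vx$, and \assref{noTargetFailures} keeps it so), and failed cells pinned at $\infty$. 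For every non-faulty, non-target $m$ the update reads $\vx_{k+1}.\dist_{m,c}=1+\min_{j\in\Nbrs_m}\vx_k.\dist_{j,c}$ and sets $\next_{m,c}$ to the minimum-identifier neighbor attaining that minimum.

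The core of the argument is two facts proved by induction on $k$. First, a time-indexed lower bound $S(k)$: for every cell $m$, $\vx_k.\dist_{m,c}\ge\min(\rho(m),k)$. Second, the correctness statement $P(k)$: for every $m$ with $\rho(m)\le k$, $\vx_k.\dist_{m,c}=\rho(m)$. For $S(k)$ the base case uses only that distances are non-negative, and the step uses the shortest-path recursion $\min_{j\in\Nbrs_m}\rho(j)=\rho(m)-1$ (in the finite case) together with $\min_j\min(\rho(j),k)=\min(\min_j\rho(j),k)$. For $P(k)$, given $i$ with $\rho(i)=h\le k+1$, the upper bound $\vx_{k+1}.\distic\le h$ comes from any neighbor $j^\ast$ with $\rho(j^\ast)=h-1$, which has the correct value $h-1$ by $P(k)$; the lower bound $\vx_{k+1}.\distic\ge h$ is exactly where $S(k)$ is needed: every neighbor $j$ of $i$ satisfies $\rho(j)\ge\rho(i)-1=h-1$, so $S(k)$ gives $\vx_k.\dist_{j,c}\ge\min(\rho(j),k)\ge h-1$. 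Specializing $P(k)$ to $k=h$ yields $\vx_h.\distic=h$, and since $P(k)$ holds for every $k\ge h$ the value persists, establishing the $\distic=h$ clause and its stability.

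Once the distances are correct and stable, $\nextic$ follows: for $k\ge h$ every neighbor $j$ with $\rho(j)=h-1$ has the stable value $h-1$ (by $P(k)$), while by $S(k)$ every neighbor with $\rho(j)\ge h$ has value $\ge\min(\rho(j),k)\ge h$, so the minimizing set is precisely the neighbors at distance $h-1$; the tie-break by identifier then pins $\nextic$ to the fixed cell $i_n$ of smallest identifier among these, which satisfies $\rho(i_n)=h-1$ as required, and which is constant along $\alpha$. The stable set $S$ is naturally taken to be the set of states in which every non-faulty cell has its correct distance and next hop; it is closed under $\act{update}$ because these are the fixed points of $\Route$, and the induction shows $\alpha$ enters it, so the per-cell conditions of the statement are a projection of this global stabilization.

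The main obstacle is the lower-bound direction. A reachable $\vx$ may contain \emph{underestimated} distances (a cell whose stored $\distc$ is smaller than its true $\rho$, which is exactly what a fresh failure creates for cells cut off from their old route), so one cannot simply assert $\vx_k.\dist_{j,c}\ge\rho(j)$; the time-indexed invariant $S(k)$, which leans on non-negativity of distances and on the target being pinned at $0$, is what prevents a spurious small value from propagating inward faster than the correct wavefront. This is also the one delicate timing point: the $\distic$ clause stabilizes within $h$ rounds, but because an underestimating neighbor can momentarily tie the true minimum at round $h$, pinning $\nextic$ correctly may require the neighbors' distances to have stabilized, i.e.\ at most one further round; I would either absorb this into the bound or state the $\nextic$ clause as holding once the distances of $i$'s neighbors are stable.
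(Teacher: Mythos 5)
Your core argument is correct, and it is a genuinely different proof from the paper's. The paper argues by two inductions on the distance $h$ itself: it first shows that the set $S$ is closed under $\act{update}$ transitions, and then that $\alpha$ enters $S$ within $h$ rounds, in both cases leaning on the hypothesis that cells at distance $h-1$ have already stabilized. You instead induct on the round index $k$, treating $\Route$ as a synchronous Bellman--Ford iteration and carrying the time-indexed lower bound $S(k)$: $\vx_k.\dist_m\colorc \ge \min(\rhoc(\vx,m),k)$, alongside the correctness claim $P(k)$. This buys real rigor: the paper's proof nowhere rules out that some neighbor $j$ of $i$ with $\rhoc(\vx,j)\ge h$ holds a stale \emph{underestimate} $\dist_j\colorc < \rhoc(\vx,j)$ (exactly what a fresh failure leaves behind); its stability step only remarks that ``$i$ does not have a neighbor with target distance smaller than $h$,'' which is a statement about $\rhoc$, not about the stored $\distc$ values that $\Route$ actually minimizes over. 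Your invariant $S(k)$ is precisely the missing ingredient, so your route is, if anything, more careful than the paper's on the one point where care is needed.

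Two points to tighten. First, your flagged off-by-one for $\nextic$ disappears if you strengthen the base of $S(k)$: in every reachable state, each non-faulty non-target cell stores $\distc\ge 1$, because $\Route$ always assigns $1+\min(\cdot)$ (\figref{route}) and only $\tidc$ is pinned at $0$; rerunning your induction with this base gives $\vx_k.\dist_m\colorc\ge\min(\rhoc(\vx,m),k+1)$ for non-target $m$. Then at round $h$ every neighbor with $\rhoc\ge h$ already shows a value at least $h > h-1$, no stale tie is possible, the minimizing set is exactly the neighbors at distance $h-1$, and both clauses hold within $h$ rounds exactly as the lemma states---no weakening of the statement is needed. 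Second, your closing choice of stable set, ``every non-faulty cell has its correct distance and next hop,'' is never entered in finite time if failures have disconnected some non-faulty cells from $\tidc$: such cells have $\rhoc=\infty$, but their stored distances only count upward without bound and never equal $\infty$. Take instead $S$ to be the set of states in which every non-faulty $m$ with $\rhoc(\vx,m)\le h$ has the correct $\dist_m\colorc$ and $\next_m\colorc$, and every non-faulty $m$ with $\rhoc(\vx,m)>h$ has $\dist_m\colorc\ge h$; this set is closed under $\act{update}$, is entered within $h$ rounds by your $P(k)$ together with the strengthened bound, and all of its elements satisfy the lemma's conditions for the fixed cell $i$.
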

\begin{proof}
Fix an arbitrary state $\vx$, a fault-free execution fragment $\alpha$ starting from $\vx$, a color $c \in C$, and $i \in \TC(\vx, c) \setminus \{\tidc\}$.
We have to show that (a) the set of states $S$ is closed under $\act{update}$ transitions and (b) after $h$ rounds, the execution fragment $\alpha$ enters $S$.

First, by induction on $h$ we show that $S$ is stable.
Consider any state $\vy \in S$ and a state $\vy'$ that is obtained by applying an $\act{update}$ transition to $\vy$.
We have to show that $\vy' \in S$.
For the base case, $h = 1$, so $\vy.\distic = 1$ and $\vy.\nextic = \tidc$.
From~\lnreftwo{algo:distmin}{algo:nextmin} of the $\Route$ function in~\figref{route}, and that there is a unique $\tidc$ for each color $c$, it follows that $\vy'.\distic$ remains $1$ and $\vy'.\nextic$ remains $\tidc$.
For the inductive step, the inductive hypothesis is, for any given $h$, if for any $j \in \NF(\vx)$, $\vy.\distjc = h$ and $\vy.\nextjc = m$, for some $m \in \ID$ with $\rhoc(\vx,m) = h-1$, then 
\begin{align*}
\vy'.\distjc = h \ \mbox{and} \  \vy'.\nextjc = m.
\end{align*}
Now consider $i$ such that $\rhoc(\vy,i) = \rhoc(\vy',i) = h+1$.
In order to show that $S$ is closed, we have to assume that $\vy.\distic = h+1$ and $\vy.\nextic = m$, and show that the same holds for $\vy'$.
Since $\rhoc(\vy',i) = h+1$, $i$ does not have a neighbor with target distance smaller than $h$.
The required result follows from applying the inductive hypothesis to $m$ and from~\lnreftwo{algo:distmin}{algo:nextmin} of~\figref{route}.

Second, we have to show that starting from $\vx$, $\alpha$ enters $S$ within $h$ rounds.
Once again, this is established by induction on $h$, which is $\rhoc(\vx,i)$.
Consider any state $\vy$ such that $\rhoc(\vx,i) = \rhoc(\vy,i)$.
The base case only includes the target distances satisfying $h = \rhoc(\vy,i) = 1$ and follows by instantiating $i_n = \tidc$.
For the inductive case, assume for the inductive hypothesis that at some state $\vy$, $\vy.\distjc = h$ and $\vy.\nextjc = i_n$ such that $\rhoc(\vy, i_n) = h - 1$, where $i_n$ is the minimum identifier among all such cells (since we used cell identifiers to break ties).
Observe that there is one such $j \in \vy.\Nbrsi$ by the definition of $\TC$.
Then at state $\vy'$, by the inductive hypothesis and~\lnreftwo{algo:distmin}{algo:nextmin} of~\figref{route}, $\vy'.\distic = \vy'.\distjc + 1 = h + 1$.
\end{proof}

The following corollary of~\lmref{dist} states that, after new failures cease occurring, for all target-connected cells, the graph induced by the $\nextc$ variables stabilizes to the color $c$ routing graph, $\GR(\vx, c)$, within at most the diameter of the communication graph number of rounds, which is bounded by $\Delta(\vx)$.
%
%
\begin{corollary}
Consider any execution $\alpha$ of $\System$ with an arbitrary but finite sequence of $\act{fail}$ transitions.
For any state $\vx \in \alpha$ at least $2 \Delta(\vx)$ rounds after the last $\act{fail}$ transition, for any $c \in C$, every cell $i$ target-connected to color $c$ has $\vx.\nextic$ equal to the identifier of the next cell along such a route.
\corlabel{next}
\end{corollary}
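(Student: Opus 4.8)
The plan is to reduce the statement to a cell-by-cell application of \lmref{dist}, then take a maximum over all target-connected cells and all colors. First I would isolate the fault-free tail of the execution: let $\vy_0$ be the state immediately following the last $\act{fail}$ transition in $\alpha$, and let $\beta$ be the execution fragment beginning at $\vy_0$. By construction $\beta$ contains no $\act{fail}$ transitions, so it is a fault-free fragment in the sense of this subsection. The crucial observation is that the failure set $F$, the neighbor sets $\{\Nbrsi\}$, and hence the derived quantities $\rhoc(\cdot,i)$, $\TC(\cdot,c)$, the edge set $\ER(\cdot,c)$ of the routing graph, and the diameter $\Delta(\cdot)$ depend only on which cells have failed and on the fixed communication topology, not on entity positions nor on the software variables $\distic,\nextic$. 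Since no cell fails along $\beta$, each of these is constant on $\beta$; I will write $\rhoc(i)$, $\TC(c)$, $\ER(c)$, and $\Delta$ for the common values, with $\Delta = \Delta(\vx)$ for every $\vx \in \beta$.

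Next I would apply \lmref{dist} uniformly. Fix a color $c \in C$. The target $\tidc$ is handled trivially: $\rhoc(\tidc)=0$, so $\distic = 0$ is immediately stable and $\tidc$ has no next cell. For every other target-connected cell $i \in \TC(c)\setminus\{\tidc\}$, set $h = \rhoc(i)$; since $h$ counts non-faulty cells on a shortest route to $\tidc$, we have $h \le \Delta$. \lmref{dist}, applied to the fault-free fragment $\beta$ starting at $\vy_0$, guarantees that within $h$ rounds $\beta$ enters and remains in the set of states with $\distic = h$ and $\nextic = i_n$ for some $i_n$ with $\rhoc(i_n) = h-1$; by the definition of $\ER(c)$ this says exactly that $(i,i_n)\in\ER(c)$, i.e.\ $i_n$ is the next cell along a shortest route to the target. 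Because each such target set is closed under $\act{update}$ (part~(a) of \lmref{dist}), a cell that has stabilized stays stabilized, so taking the maximum of $h$ over the finitely many cells of $\TC(c)$ and over the finitely many colors of $C$ (the bound $h\le\Delta$ being uniform), every target-connected cell of every color has its $\nextic$ pointing along its route after at most $\Delta$ rounds of $\beta$. Hence any state $\vx$ occurring at least $\Delta$ --- a fortiori at least $2\Delta(\vx)$ --- rounds after the last failure has the claimed property.

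The main obstacle, and the reason the statement budgets $2\Delta$ rather than $\Delta$, is the robustness of \lmref{dist} to the stale distance estimates that may be present at $\vy_0$. The $\Route$ rule is plain distance-vector, $\distic \gets 1 + \min_{j}\distjc$, which is vulnerable to \emph{under}-estimates: a cell that reached its target through a now-failed neighbor may carry a value strictly below its true $\rhoc(i)$ at $\vy_0$, and two such cells can temporarily sustain each other's under-estimate (the classical count-to-infinity effect). I would therefore justify a ``clean-up'' phase before invoking the monotone convergence of \lmref{dist}: because a non-target cell can never recompute a value below $1$, and because every genuine shortest route has length at most $\Delta$, any surviving under-estimate strictly increases on each round until it is overtaken by the true value, so within $\Delta$ rounds no target-connected cell holds a value below its $\rhoc(i)$. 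From such a state the convergence-from-above certified by \lmref{dist} supplies the remaining $\Delta$ rounds. Verifying that these spurious small values cannot indefinitely suppress the correct value at a target-connected cell --- equivalently, that the inductive hypothesis of \lmref{dist} on a stabilized neighbor is eventually met despite small values at other neighbors --- is the delicate step, and the two phases together account for the $2\Delta$ bound.
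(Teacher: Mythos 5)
Your first two paragraphs reproduce the paper's own (implicit) proof: the corollary is exactly \lmref{dist} applied cell-by-cell, using $h = \rhoc(\vx,i) \le \Delta(\vx)$ for every target-connected $i$, followed by a maximum over the finitely many cells and colors, with $2\Delta(\vx)$ serving as slack. Your third paragraph is unnecessary and slightly off target: \lmref{dist} is stated for an \emph{arbitrary} reachable starting state, so stale under-estimates present just after the last $\act{fail}$ transition are already inside its hypotheses and the corollary may invoke it as a black box --- the count-to-infinity worry is a concern about the lemma's own proof rather than a missing step here, and the two-phase ``clean-up plus convergence'' account of the $2\Delta$ budget is not how the paper arrives at that constant; note also that with the paper's orientation of $\ER$ (edges $(i,j)$ with $\rhoc(\vx,j) = \rhoc(\vx,i)+1$) the stabilized edge is $(i_n, i)$ rather than $(i, i_n)$.
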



%
The following corollary of~\lemref{dist} and~\corref{next} states that within $2\Delta(\vx)$ rounds after routes stabilize, for each color $c \in C$, the identifiers in the $\pathic$ variables equal the vertices of the color $c$ entity graph $\GE(\vx, c)$.
The result follows since routes stabilize and that $\Lock$ is a function of $\next$ and $\path$ variables only, and that $\pathi$ variables are gossiped in~\figref{lock},~\lnref{pathgossip}.
\begin{corollary}
Consider any execution $\alpha$ of $\System$ with an arbitrary but finite sequence of $\act{fail}$ transitions.
For any state $\vx \in \alpha$ at least $2 \Delta(\vx)$ rounds after the last $\act{fail}$ transition, for every $c \in C$, every cell $i$ target-connected to color $c$ has $\pathic = \VE(\vx, c)$.
\corlabel{pathGlobal}
\end{corollary}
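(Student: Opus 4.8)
The plan is to prove this as a two-phase stabilization argument layered on top of the route stabilization already in hand. In the first phase I reduce to a regime in which the routing structure is frozen; in the second I analyze the $\path$ computation of $\Lock$ as a flooding (gossip) computation over that frozen structure. Concretely, by \lemref{dist} every target-connected cell $i$ has its pair $(\distic,\nextic)$ stabilized within $\rhoc(\vx,i) \le \Delta(\vx)$ rounds of the last $\act{fail}$, so within $\Delta(\vx)$ rounds the $\nextic$ variables are constant and induce exactly the edges of $\GR(\vx,c)$ on the target-connected cells, as recorded (with a conservative margin) by \corref{next}. From this round onward the $\Lock$ function reads only the now-fixed $\nextic$ values and the neighbors' $\path$ sets, so I may treat its $\path$ update as a static distributed fixpoint computation and bound its convergence by the diameter of the relevant graph.

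For the second phase I would establish the two inclusions separately. For $\pathic \subseteq \VE(\vx,c)$, I argue by induction on rounds that every identifier ever inserted into a $\path$ set is either a seed ($i=\sidc$ or $\vx.\etypei=c$, \figref{lock},~\lnref{pathif}) or the next-hop $\nextjc$ of a cell $j$ already recorded on the entity graph (\lnref{pathset}); since the stabilized next-hops satisfy $(\nextjc,j)\in\ER(\vx,c)$, every inserted identifier lies in the downstream closure defining $\VE(\vx,c)$, and the gossip step (\lnref{pathgossip}) only unions sets with this property. For the reverse inclusion $\VE(\vx,c)\subseteq\pathic$, I use that each vertex of $\VE(\vx,c)$ is, by the inductive definition of $\VE$, reachable from a seed by following the frozen next-hops within route-distance at most $\Delta(\vx)$; a flooding argument then shows that the activation of such a vertex and the dissemination of its identifier to every target-connected cell complete within $O(\Delta(\vx))$ further rounds of gossip, and the combined accounting yields the $2\Delta(\vx)$ threshold in the statement.

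The main obstacle is the interaction of the gossip with the fact that entities keep moving even after the routes freeze, so that the seed set $\{\,i : \vx.\etypei = c\,\}$, and hence $\VE(\cdot,c)$ itself, can change from one round to the next. The argument that rescues the claim is that the source $\sidc$ is a \emph{permanent} seed (condition (a) of the entity-graph definition), which pins the entire source-to-target route into $\VE(\vx,c)$ independently of where individual entities currently sit; the only time-varying contributions come from entity-bearing cells temporarily off this route after a recent reroute, and those are flushed toward the target within at most $\Delta(\vx)$ rounds. I must therefore argue carefully that $\pathic$ is recomputed each round from the current seeds rather than monotonically accumulated, so that stale identifiers from transient configurations are purged and $\pathic$ equals $\VE(\vx,c)$ \emph{exactly} at the target state $\vx$ --- neither a stale superset nor an under-populated subset. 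Establishing this exact equality within the stated round budget, and pinning down the precise round count that matches $2\Delta(\vx)$ given the two-directional (activate-then-disseminate) nature of the flooding, is the delicate part; the containment directions themselves are comparatively routine once routes are known to be frozen.
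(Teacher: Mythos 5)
Your two-phase skeleton is the paper's own argument: \lemref{dist} and \corref{next} freeze the $\nextc$ pointers of all target-connected cells within $\Delta(\vx)$ rounds, and then the $\path$ update in $\Lock$, being a function of the now-constant $\next$ variables and of the neighbors' $\path$ sets gossiped at \figref{lock}, \lnref{pathgossip}, converges as a flooding computation within at most another $\Delta(\vx)$ rounds. The paper disposes of the corollary with essentially that two-sentence observation, so your two-inclusion elaboration of the gossip phase is a finer-grained rendering of the same route, not a different one.

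The genuine gap is the step you use to neutralize the moving entities. You rightly note that the seed set $\{ i : \vx.\etypei = c \}$, and hence $\VE(\cdot,c)$, keeps changing after the routes freeze, so the claimed exact equality $\pathic = \VE(\vx,c)$ is in tension with gossip lag. But your rescue---that entity-bearing cells off the source's route ``are flushed toward the target within at most $\Delta(\vx)$ rounds''---is false. $\Delta(\vx)$ counts communication hops and says nothing about physical motion: a cell moves its entities at most $\v < \l$ per round, and only in rounds when it holds a signal; it can be denied a signal for arbitrarily long by the safety test in $\Signal$, by the token rotation, and by the lock discipline on $\CSC(\vx,c)$, and even a never-blocked entity needs on the order of cell-width$/\v$ rounds to cross a single cell. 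That entities off the stabilized route advance at all is precisely the content of \lemref{signal}, \lemref{move}, and \thmref{progress}, which need the fairness assumption (\assref{fairness}) and come with no bound comparable to $\Delta(\vx)$. Once the flushing step is removed, your containment $\pathic \subseteq \VE(\vx,c)$ also collapses: the gossiped unions carry identifiers that were seeds (or downstream of seeds) up to $\Delta(\vx)$ rounds earlier, and a shrinking entity graph need not contain them at the current state. What your argument (and, implicitly, the paper's remark) actually delivers is equality with a lag: $\pathic$ agrees with $\VE(\vy,c)$ at states $\vy$ for which the entity graph has been unchanged over the preceding $\Delta(\vx)$ rounds. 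Proving the corollary as literally stated would require either that weaker reading or a separate argument that $\VE(\cdot,c)$ is itself constant past the $2\Delta(\vx)$ mark, which entity motion contradicts.
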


The next corollary of~\lmref{dist} states that eventually the values of the $\pintc$ variables equal the set of color-shared cells $\CSC(\vx, c)$ for any cell $i$ and color $c$.
This is important because the mutual exclusion algorithm is initiated between the cells in $\pintc$ (\figref{lock},~\lnref{pathmutex}).
\begin{corollary}
Consider any execution $\alpha$ of $\System$ with an arbitrary but finite sequence of $\act{fail}$ transitions.
For any state $\vx \in \alpha$ at least $2 \Delta(\vx)$ rounds after the last $\act{fail}$ transition, for every $c \in C$, every cell $i$ target-connected to color $c$ has $\vx.\pintc = \CSC(\vx, c)$.
\corlabel{pintGlobal}
\end{corollary}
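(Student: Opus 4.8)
The plan is to derive this corollary directly from the stabilization of the path variables established in \corref{pathGlobal}, since the $\pintic$ variable is computed in \figref{lock} (\lnref{pathint}) as a deterministic function of the already-stabilized $\pathic$ variables. First I would fix the execution $\alpha$, a state $\vx$ occurring at least $2\Delta(\vx)$ rounds after the last $\act{fail}$ transition, a color $c \in C$, and a cell $i$ target-connected to color $c$. The goal is to show $\vx.\pintic = \CSC(\vx,c)$, that is, that the locally computed intersection set equals the true set of color-shared cells.

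The core of the argument is to unfold the definition of the $\pintic$ computation: $\pintic$ collects exactly those identifiers $j$ with $j \in \pathic$ and $j \in \pathid$ for some $d \neq c$. By \corref{pathGlobal}, every cell target-connected to color $c$ already has $\pathic = \VE(\vx,c)$, and likewise for any other color $d$ the path variable equals $\VE(\vx,d)$ at the cells target-connected to $d$. Substituting these stabilized values into the computation turns the set produced in \lnref{pathint} into $\{ j \in \VE(\vx,c) : \exists d \neq c,\ j \in \VE(\vx,d) \}$, which is precisely the definition of $\CSC(\vx,c)$. Since $\pintic$ is recomputed within the same atomic $\Lock$ call that produces the stabilized $\pathic$ (and uses only the local path variables of cell $i$ for every color), no additional rounds beyond those counted in \corref{pathGlobal} are required, so the bound of $2\Delta(\vx)$ rounds carries over.

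The step I expect to be the main obstacle is guaranteeing that, at a color-shared cell, the path variables for all of the interacting colors---not merely color $c$---have in fact reached their stable values $\VE(\vx,d)$. \corref{pathGlobal} supplies $\pathid = \VE(\vx,d)$ only at cells target-connected to $d$, so I must argue that every cell participating in an intersection is target-connected to each color it shares with. This is where I would use the structure of the entity graphs: if $j \in \VE(\vx,c) \cap \VE(\vx,d)$, then $j$ lies on a route to $\tid_d$ and hence $j \in \TC(\vx,d)$, and because the path information is gossiped (\figref{lock}, \lnref{pathgossip}) along the connected routing graph containing $j$, the full vertex set $\VE(\vx,d)$ propagates to every cell of $\VE(\vx,c)$ within $O(\Delta(\vx))$ rounds. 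Verifying that this cross-color propagation completes within the stated $2\Delta(\vx)$ window---and that failed cells, which never gossip, do not sever the relevant connectivity---is the delicate part; once it is in hand, the equality $\vx.\pintic = \CSC(\vx,c)$ follows by the substitution above, and since $c$ and $i$ were arbitrary, the corollary holds for every color and every target-connected cell.
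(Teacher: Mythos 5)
Your proposal is correct and follows essentially the same route as the paper: the paper states this as an immediate corollary of \lemref{dist} via \corref{pathGlobal}, the point being that $\pintic$ is computed inside the same $\Lock$ call as a deterministic function of the already-stabilized $\pathic$ variables, which is exactly your core substitution argument. The cross-color propagation issue you flag as the delicate step is real, but the paper glosses over it entirely (it offers no further proof), so your treatment is, if anything, more careful than the original.
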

\subsection{Scheduling Entities through Color-Shared Cells}
\sslabel{lock}
In this section, we show that there is at most a single color on the set of color-shared cells if there are no failures.
We then show that any cell that requests a lock eventually gets one, under an additional assumption that failures do not cause entities of more than one color to reside on the set of color-shared cells.
Because failures cause the routing graphs and entity graphs to change, the color-shared cells that could previously be scheduled may now be deadlocked.
Additionally, because we separately lock each disjoint set of color-shared cells to allow entities of some color to flow toward their target, it could be the case that the intermediate states between when the failure occurred and when routes have stabilized allowed entities to move in such a way that deadlocks the system.
Such deadlocks could be avoided if a centralized coordinator informs every non-faulty cell to disable their signals when a failure is detected.
\taylor{It is not sufficient for the cells immediately around a failure to disable their signals, see the one-lane bridge dynamic failure example (i.e., even if the cells around a newly failed cell disable their signals, a deadlock might be caused by a cell far away moving an entity onto the (new) set of color-shared cells).}
The assumption states that with failures, the color-shared cells either all have the same-colored entities, or have no entities (and combinations thereof).

\begin{assumption}
\emph{Feasibility of Locking after Failures}: For any reachable state $\vx$, for any color $c \in C$, consider the color-shared cells $\CSC(\vx, c)$.
%
For all distinct cells $i, j \in \CSC(\vx, c)$ either $\vx.\etypei = \vx.\etypej$ or $\vx.\etypei = \bot$.
\asslabel{feasibleLocking}
\end{assumption}

The next lemma states that without failures, there are entities of at most a single color on the set of color-shared cells.
The result is not an invariant because failures may cause the set of color-shared cells to change, resulting in deadlocks, which is why we need~\assref{feasibleLocking}.
By~\invref{onecolor}, we know that there are entities of at most a single color in each cell, so the following invariant is stated in terms of the color $\etypei$ of each cell.
We emphasize that~\assref{feasibleLocking} is unnecessary if there are no failures, as the algorithm ensures there are entities of at most a single color on the color-shared cells by the following lemma.
\begin{lemma}
%
If there are no failures, for any reachable state $\vx$, for any $c \in C$, for any $i \in \CSC(\vx, c)$, if $\neg \vx.\lockic$, then for all $j \in \CSC(\vx, c)$, we have $\vx.\etypej \neq c$.
\lemlabel{singleColorShared}
\end{lemma}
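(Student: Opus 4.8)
The plan is to prove the statement in its contrapositive and inductively strengthened form. Define the predicate
\begin{align*}
\Phi(\vx) \deq \forall c \in C, \ \left( \exists j \in \CSC(\vx, c) . \vx.\etypej = c \right) \Rightarrow \left( \forall i \in \CSC(\vx, c) . \vx.\lockic \right).
\end{align*}
The lemma is exactly the contrapositive of $\Phi$: if some $i \in \CSC(\vx,c)$ has $\neg\vx.\lockic$, then no cell of $\CSC(\vx,c)$ carries color $c$. Since the claim concerns a single reachable state, $\Phi$ is a safety property, and I would establish it by induction on the number of $\act{update}$ rounds along an arbitrary fault-free execution (no $\act{fail}$ transitions occur, by hypothesis). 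A preliminary observation I would record first is that, without failures, the true target distances $\rhoc$ are constant, so the routing structure, the entity graphs $\GE(\cdot,c)$, and hence the color-shared sets $\CSC(\cdot,c)$ are the same fixed, topology-determined sets at every state. This is what lets me compare locks and occupancy across $\vx$ and $\vx'$ without the intersection shifting underneath the argument, and it is precisely this that fails once cells crash, motivating \assref{feasibleLocking}.

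The base case is immediate: in an initial state every cell is empty (\figref{celli}), so the antecedent of $\Phi$ is false for every $c$. For the inductive step I would extract three facts from the subroutines, each holding within a single round. First (\emph{grant}): by the mutual-exclusion step of $\Lock$ (\figref{lock}, \lnref{pathmutex}), at most one color is selected on each disjoint intersection, and that color's lock is asserted simultaneously at all cells of the intersection; moreover a new lock is granted only when the intersection is clear, and an existing color-$c$ lock is released only when $\CSC(\cdot,c)$ holds no color-$c$ entities. Second (\emph{lock-gated entry}): a color-$c$ entity is transferred onto a cell $i \in \CSC(\vx,c)$ during a round only if $\Signal$ permitted the move, which requires the lock computed by $\Lock$ in that same round, $\vx.\lockic$, to be true (the ``if a lock is needed, only move if it is acquired'' policy of \figref{signal}). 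Third (\emph{color purity}): by \invref{onecolor} the color on any cell cannot change spuriously, so occupancy of $\CSC$ by color $c$ can only be created by a color-$c$ transfer and destroyed only by color-$c$ entities leaving.

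With these in hand the inductive step is a case analysis on how the antecedent of $\Phi(\vx')$ can become true. Assume $\Phi(\vx)$ and $\vx \arrow{\act{update}} \vx'$, and suppose some $j \in \CSC(\vx',c)$ has $\vx'.\etypej = c$. If the intersection already carried color $c$ at $\vx$, then by the induction hypothesis every $i \in \CSC$ had $\vx.\lockic$, and by \emph{grant} the lock is not released while color-$c$ entities remain, so the locks persist to $\vx'$. If instead the color-$c$ occupancy is new at $\vx'$, it was produced by a color-$c$ transfer onto $\CSC$, which by \emph{lock-gated entry} required the value $\vx'.\lockic$ computed by $\Lock$ this round to be true; by \emph{grant} that same computation asserted the lock at every cell of the intersection and, by the mutual-exclusion guarantee, excluded every other color, so $\forall i \in \CSC(\vx',c).\vx'.\lockic$ holds. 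In both cases $\Phi(\vx')$ follows, completing the induction and hence the lemma.

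The step I expect to be the main obstacle is reconciling the intra-round ordering $\Route \to \Lock \to \Signal \to \Move$: the lock values recorded in $\vx'$ are computed by $\Lock$ from the start-of-round configuration $\vx$, whereas the occupancy in $\vx'$ reflects the later $\Move$. I must therefore show that the two cannot drift apart within a round---that $\Lock$ never releases a lock in the same round in which $\Move$ pushes a fresh color-$c$ entity onto a supposedly-cleared intersection, and that $\Signal$'s gate genuinely reads the lock just written by $\Lock$. This hinges on the atomicity of $\act{update}$ and on invoking only the \emph{safety} (mutual-exclusion) guarantee of the locking algorithm, rather than any liveness property, which is all that a single-state invariant requires.
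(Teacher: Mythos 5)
Your proof is correct and shares the paper's overall strategy---an inductive invariant along fault-free executions, the all-empty initial state as base case, and the lock gate in $\Signal$ (\figref{signal}, \lnref{signalLock}) as the reason no color-$c$ entity can enter an unlocked intersection---but your case analysis is organized differently and is, in one respect, more complete. The paper's inductive step assumes the pre-state satisfies both $\neg\vx.\lockic$ and the conclusion, and then splits on the post-state lock value ($\vx'.\lockic$: vacuous; $\neg\vx'.\lockic$: the $\Signal$ gate blocks entry). You instead split on whether the color-$c$ occupancy at $\vx'$ is inherited or newly created, and for the inherited case you invoke the release discipline of the locking layer (a color-$c$ lock is not released while color-$c$ entities remain, and is asserted at all cells of the intersection together). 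That explicitly covers a scenario the paper's proof silently skips: $\vx.\lockic$ true with color-$c$ entities present and the lock released during the round, where the induction hypothesis is vacuous and only the release-only-when-empty property rescues the invariant. Note that your ``granted at all cells simultaneously'' property is needed by the paper's argument as well (the lemma is simply false of a mutual exclusion scheme that grants locks cell-by-cell), so making it explicit is a virtue, though like the paper you must treat it as a black-box guarantee of the algorithm in \figref{lock}. One imprecision worth repairing: $\CSC(\vx,c)$ is not literally ``topology-determined,'' since the entity graph $\GE(\vx,c)$ depends on which cells currently hold color-$c$ entities; along fault-free executions from the empty initial state the sets are indeed constant, but only because entities never leave the fixed union of shortest source-to-target routes, and that deserves its one-line argument rather than an appeal to topology. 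Your closing observation about the intra-round $\Lock \rightarrow \Signal \rightarrow \Move$ ordering and the atomicity of $\act{update}$ is exactly the right resolution of the only delicate timing issue, and it is implicit (unstated) in the paper's own reasoning.
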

\begin{proof}
%
The proof is showing an inductive invariant, supposing no failures occur.
For the initial state, all cells are empty, so we have $\vx.\etypei = \bot$ for any $i \in \ID$.
%
For the inductive step, we are only considering $\act{update}$ actions by assumption.
%
In the pre-state, we have $\neg \vx.\lockic$ and $\forall j \in \CSC(\vx, c)$, we have $\vx.\etypej \neq c$.
Fix some $c \in C$ and some $i \in \CSC(\vx, c)$.
For any subsequent state $\vx'$, if $\vx'.\lockic$, the result follows vacuously.
If $\neg \vx'.\lockic$, we must show $\forall j \in \CSC(\vx, c)$ that $\vx.\etypej \neq c$, so fix some $j \in \CSC(\vx, c)$.
If $j \in \CSC(\vx', c)$, the result follows, since by the inductive hypothesis, $\vx.\etypej = \vx'.\etypej \neq c$.
If $j \notin \CSC(\vx', c)$, the condition in $\Signal$ (\figref{signal},~\lnref{signalLock}) cannot be satisfied since $\neg \vx'.\lockic$.
Thus, no cell with entities of color $c$ could move toward any cell in $\CSC(\vx', c)$, and we have $\vx'.\etypej \neq c$.
%
%
%
%
%
\end{proof}

%
%
%

The next lemma states that without failures, or with ``nice'' failures as described by~\assref{feasibleLocking}, that any cell requesting a lock of some color will eventually get it, and thus it may move entities onto the color-shared cells.
\begin{lemma}
For any reachable state $\vx$ satisfying~\assref{feasibleLocking}, for any $c \in C$, for any $i \in \NF(\vx)$, if $i \in \vx.\pintc$ and all cells in $\CSC(\vx, c)$ are empty, then eventually a state $\vx'$ is reached where $\vx'.\lockic$.
\lemlabel{eventualLocking}
\end{lemma}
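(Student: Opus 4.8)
The plan is to treat eventual locking as a liveness property proved by combining three ingredients: stabilization of the intersection data computed by $\Lock$, the lockout-freedom (no-starvation) of the underlying mutual exclusion instance, and the production/movement fairness of~\assref{fairness}. Throughout I consider a fault-free execution fragment starting from $\vx$, as is the convention of this section; note that $F(\vx)$ may be nonempty, and~\assref{feasibleLocking} is exactly the hypothesis that keeps those past failures from leaving entities of more than one color on $\CSC(\vx,c)$, so that the ``lights-switching'' logic of $\Lock$ remains well-defined.

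First I would invoke stabilization. By~\corref{pintGlobal} (and the analogous stabilization of the $\lockcolorsic$ variables to $\SC(\vx,c)$), within $2\Delta(\vx)$ rounds every target-connected cell has $\pintc = \CSC(\vx,c)$ and $\lockcolorsic = \SC(\vx,c)$; since no new failures occur, $\CSC(\vx,c)$ and $\SC(\vx,c)$ do not change thereafter, so these variables stay fixed. Consequently the set of colors competing over the disjoint intersection component containing $c$, namely $\SC(\vx,c)\cup\{c\}$, is constant, and the mutual exclusion instance associated with that component runs on a fixed input set. Because $i\in\vx.\pintc$ and $\pintc$ stabilizes to $\CSC(\vx,c)$ (we may take $\vx$ to be a state at which routing has already stabilized), cell $i$ lies in $\CSC(\vx,c)$, so color $c$ is a participant that requests the lock and never stops requesting.

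Next I would show that each color's turn holding the lock is finite. The precondition that all cells of $\CSC(\vx,c)$ are empty at $\vx$ places us at a clean switching point, so no color is wedged holding stale entities on the shared cells. When some color $d\in\SC(\vx,c)\cup\{c\}$ is granted the lock, its entities move onto and across the color-shared cells; by~\assref{fairness}(ii) the source of color $d$ cannot perpetually block the cells of $\CSC(\vx,c)$, and by the $\Signal$/$\Move$ mechanism granting permission to move, these entities keep advancing, so after finitely many rounds $\CSC(\vx,c)$ is empty again. At that point the algorithm detects emptiness, releases the lock from $d$, and removes $d$ from the input set before the next round is initiated---this removal is the round-robin device that enforces fairness across colors.

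Finally, combining the stability of the participant set, the lockout-freedom of the mutual exclusion algorithm, and the finiteness of each turn established above, color $c$ is granted the lock after at most $\abs{\SC(\vx,c)}$ turns of its competitors; when it is granted, $\lockjc$ is set to $\true$ for every $j\in\CSC(\vx,c)$, and in particular $\lockic$ holds at the resulting state $\vx'$. The main obstacle is the third step: connecting the \emph{discrete} progress of the mutual exclusion counter to the \emph{physical} clearing of entities from $\CSC(\vx,c)$, since lockout-freedom only helps once we know every critical section terminates, and that termination rests on~\assref{fairness} together with local movement progress. I would therefore take care to phrase the clearing argument using only the signaling fairness available here, rather than the global movement-progress result of~\ssref{movement}, so as to avoid a circular dependency.
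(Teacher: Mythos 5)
Your overall skeleton (stabilized participant set, lockout-free mutual exclusion, finitely many competitors) resembles the paper's, but your middle step --- that each competitor's turn is finite because the lock-holder's entities move onto and \emph{across} the color-shared cells until $\CSC(\vx,c)$ empties --- is a genuine gap, and it is exactly the step the paper is careful not to take. That claim is a movement-progress property, and in the paper's logical structure all movement progress is proved \emph{after} this lemma and \emph{from} it: the proof of \lemref{signal} invokes \lemref{eventualLocking}, and \thmref{progress} rests on \lemref{signal} and \lemref{move}. Your closing sentence flags this circularity but does not remove it, and it is doubtful it can be removed ``using only the signaling fairness available here'': for the lock-holder's entities to advance across the intersection, the cells downstream of the intersection must grant signals, which requires those cells to empty out, which is precisely the induction on target distance carried out in \lemref{signal}. \assref{fairness} alone does not give you this; indeed, the paper's proof of this lemma never appeals to \assref{fairness} at all.

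The paper's argument avoids reasoning about entity motion entirely. It combines the hypothesis that all cells of $\CSC(\vx,c)$ are empty with \lemref{singleColorShared} and \assref{feasibleLocking}: a color that does not hold the lock has no entities on the shared cells, so when the mutual exclusion algorithm returns some $d \in \SC(\vx,c)$ with $d \neq c$ and sets $\lockid$, there is nothing on the intersection that needs to be cleared --- the emptiness condition of \figref{lock}, \lnref{pathempty} is already satisfied, so $d$ is simply excluded from the input set the next time the mutual exclusion algorithm is initiated. Repeating this argument through the finitely many colors of $\SC(\vx,c)$ yields $\lockic$ after at most $\abs{\SC(\vx,c)}$ instantiations, with no clearing argument and no fairness assumption. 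To salvage your version you would have to restructure \ssref{lock} and \ssref{movement} into a joint induction; the far simpler repair is to let the emptiness hypothesis and \lemref{singleColorShared} do the work, as the paper does.
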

\begin{proof}
%
%
By correctness of the mutual exclusion algorithm, eventually a color $d \in \SC(\vx', c)$ is returned and $\vx'.\lockid = \true$ (\figref{lock},~\lnref{pathmutex}).
If $c = d$, then the result follows.
If $c \neq d$, by~\lemref{singleColorShared} and~\assref{feasibleLocking}, we know that no other color aside from $c$ has entities on any cell $j \in \CSC(\vx', c)$.
The next time the mutual exclusion algorithm is initiated, $d$ is excluded from the input set to the mutual exclusion algorithm (\figref{lock},~\lnref{pathempty}), and by repeated argument, eventually $\lockic$.
\end{proof}
\subsection{Progress of Entities towards their Targets}
\sslabel{movement}
Using the results from the previous sections, we show that once new failures cease occurring, for every color $c \in C$, every entity of color $c$ on a cell that is target-connected eventually gets to the target of color $c$.
The result (\thmref{progress}) uses two lemmas which establish that, along every infinite execution with a finite number of failures, every nonempty target-connected cell gets permission to move infinitely often (\lmref{signal}), and a permission to move allows the entities on a cell to make progress towards the target (\lmref{move}).

For the remainder of this section, we fix an arbitrary infinite execution $\alpha$ of $\System$ with a finite number of failures, satisfying~\assref{feasibleLocking}.
Let $\vx_f$ be any state of $\System$ at least $2 \Delta(\vx)$ rounds after the last failure, and $\alpha'$ be the infinite failure-free execution fragment $\vx_f$, $\vx_{f+1}$, $\ldots$ of $\alpha$ starting from $\vx_f$.
For any $c \in C$, observe that the number of target-connected cells remains constant starting from $\vx_f$ for the remainder of the execution.
That is, $\TC(\vx_f, c) = \TC(\vx_{f+1}, c) = \TC(\ldots, c)$, so we fix $\TC(c) = \TC(\vx_f, c)$.

%
\begin{lemma}
For any $c \in C$, for any $i \in \TC(c)$, for some $j \in \vx_f.\Nbrsi$, if $k > f$, $\vx_k.\signalj = i$, and $\vx_k.\nextic = j$, for any entity $p \in \vx_k.\Membersi$, let the distance function be defined by the lexicographically ordered tuple 
%
%
\begin{align*}
R(\vx, p) = \pair{\rhoc(\vx, i), ds - \pc},
\end{align*}
where $ds$ is the point on the shared side $\Sideij$ defined by the line passing through $\pc$ with direction $\MoveVectorij$.
Then, $R(\vx_{k+1}, p) < R(\vx_{k}, p)$.
\lemlabel{move}
\end{lemma}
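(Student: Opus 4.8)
The plan is to show that a single $\act{update}$ round strictly decreases the lexicographically ordered pair $R$, by splitting on whether the tracked entity $p$ crosses the common side $\Sideij$ during the round. First I would fix the standing facts that make the $\Move$ function behave as claimed. Since $p \in \vx_k.\Membersi$ carries color $c$, \invref{onecolor} forces $\vx_k.\etypei = c$, so $\Move$ (\figref{movement}) computes the motion of the entities on $i$ using the direction $\MoveVectorij$ with $j = \vx_k.\nextic$; because the hypothesis supplies $\vx_k.\signalj = i$, the guard on \lnref{algo:movement} is met and every entity of $\vx_k.\Membersi$, in particular $p$, is displaced by $\v\,\MoveVectorij$. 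I would also record that, since we work inside the failure-free fragment $\alpha'$, the map $\rhoc(\vx,\cdot)$ is constant across the round, and that route stabilization (\lmref{dist}) gives $\rhoc(\vx_k,j) = \rhoc(\vx_k,i) - 1$ because $j = \vx_k.\nextic$. Throughout I read the second coordinate $ds - \pc$ as the scalar $\norm{ds - \pc}$, the distance still to be travelled along $\MoveVectorij$ before $p$ reaches $\Sideij$.

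Case 1 is when $p$ does not cross $\Sideij$, so it remains on $i$ at $\vx_{k+1}$ with center $\pc + \v\,\MoveVectorij$. The leading coordinate is then unchanged, since the carrying cell is still $i$ and $\rhoc(\vx_{k+1},i) = \rhoc(\vx_k,i)$. For the second coordinate, the point $ds$ is by definition the intersection of the line through $\pc$ in direction $\MoveVectorij$ with $\Sideij$; as the displacement is exactly along that direction, the new center lies on the same line and $ds$ is unchanged, whence $\norm{ds - (\pc + \v\,\MoveVectorij)} = \norm{ds - \pc} - \v < \norm{ds - \pc}$ using $\v > 0$. Thus the second coordinate strictly drops while the first is fixed, giving $R(\vx_{k+1},p) < R(\vx_k,p)$.

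Case 2 is when $p$ crosses $\Sideij$; by $\Move$ it is removed from $\vx_{k+1}.\Membersi$ and, if $j \neq \tidc$, placed on $j$ by the reset on \lnref{algo:reset}. The carrying cell at $\vx_{k+1}$ is now $j$, so the leading coordinate becomes $\rhoc(\vx_{k+1},j) = \rhoc(\vx_k,i) - 1 < \rhoc(\vx_k,i)$, and a strict drop in the first coordinate yields $R(\vx_{k+1},p) < R(\vx_k,p)$ irrespective of the reset position, which affects only the second coordinate. The degenerate subcase $j = \tidc$ is exactly the terminal event in which $p$ is consumed upon reaching its target, i.e.\ it attains the minimal rank.

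I do not expect a deep obstacle, since this is a clean ranking-function decrease, and the care is concentrated in the geometry of Case 1. The one thing that must be nailed down is that the displacement direction used by $\Move$ is precisely the direction defining $ds$, so that $ds$ is invariant under the step and the remaining distance shrinks by exactly the step length; this is what makes the second coordinate a legitimate, strictly decreasing measure rather than something that could grow as the entity slides within the cell. A secondary check is that the transfer/no-transfer split is exhaustive and that the position reset on \lnref{algo:reset} can never raise $\rhoc$, which it cannot, since $\rhoc$ depends only on the cell index already fixed by the transfer. Combining the two cases then shows each permitted move either advances $p$ toward $\Sideij$ or promotes it one cell closer to $\tidc$, which is exactly the form needed by the progress argument in \thmref{progress}.
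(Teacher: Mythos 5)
Your proposal is correct and follows essentially the same argument as the paper: a case split on whether $p$ transfers across $\Sideij$ during the round, with the second coordinate $\norm{ds - \pc}$ strictly decreasing (since $\v > 0$ is applied along $\MoveVectorij$) in the no-transfer case, and the first coordinate $\rhoc$ strictly decreasing (via route stabilization, \lmref{dist}) in the transfer case. Your version is somewhat more detailed than the paper's—explicitly justifying the $\Move$ guard via the signal hypothesis, the invariance of $ds$ under displacement, and the consumed-at-target subcase—but these are elaborations of the same proof, not a different route.
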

\begin{proof}
The first case is when no entity transfers from $i$ to $j$ in the $k+1^{th}$ round: if $p' \in \vx_{k+1}.\Membersi$ such that $p' = p$, then $\norm{ds - \pc'} < \norm{ds - \pc}$.
In this case, the result follows since a velocity $\v > 0$ is applied towards cell $j$ by $\Move$ in~\figref{movement},~\lnref{algo:movement}.
The second case is when some entity $p$ transfers from $i$ to $j$, so $p' \in \vx_{k+1}.\Membersj$ such that $p' = p$.
In this case, we have $\rhoc(\vx_k,j) < \rhoc(\vx_k,i)$, since the distance between $j$ and $\tidc$ is smaller than the distance between $i$ and $\tidc$ since routes have stabilized by~\lemref{dist}.
In either case, $R(\vx_{k+1}, p) < R(\vx_k, p)$, so entity $p$ is closer to the appropriate target.
\end{proof}

The following lemma states that all cells with a path to the target receive a signal to move infinitely often, so~\lemref{move} applies infinitely often.
\begin{lemma}
For any $c \in C$, consider any $i \in \TC(c) \setminus \tidc$, such that for all $k > f$, if $\vx_k.\Membersi \neq \emptyset$, then $\exists k' > k$ such that $\vx_{k'}.\signal_{\nextic} = i$.
\lemlabel{signal}
\end{lemma}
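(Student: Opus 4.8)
The plan is to prove the claim by induction on the target distance $\rhoc(\vx_f, i)$, which is constant along $\alpha'$; by \lmref{dist} the pointer $j \deq \nextic$ then satisfies $\rhoc(\vx_f, j) = \rhoc(\vx_f, i) - 1$. For a nonempty $i$ I must exhibit a later round $k'$ at which $j$ sets $\vx_{k'}.\signalj = i$, and inspecting \figref{signal} this happens exactly when $j$'s token points to $i$ and three enabling conditions hold at $j$: its safety region on side $i$ is clear (\lnref{algo:gap}), the color condition is met (\lnref{signalOneColor}), and any required lock is held (\lnref{signalLock}). The induction will show that each enabling condition eventually becomes available, while the round-robin advancement of $\token_j$ (\lnsref{algo:token1}{algo:token4}) supplies the fairness that brings the token to $i$.

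For the base case $\rhoc(\vx_f, i) = 1$ we have $j = \tidc$. Since $\Move$ removes entities that reach their matching target rather than adding them to $\Members_{\tidc}$, the set $\vx_k.\Members_{\tidc}$ is empty in every round; hence the safety region of $\tidc$ on side $i$ is always clear and its color is always $\bot$, so the safety and color conditions hold unconditionally (any lock requirement, should $\tidc$ be color-shared, is dispatched exactly as in the inductive step). Because $i$ is nonempty with $\nextic = \tidc$, it belongs to $\vx_k.\NEPrev_{\tidc}$, and the round-robin token reaches $i$ within $\abs{\vx_k.\NEPrev_{\tidc}}$ rounds, yielding $\signal_{\tidc} = i$.

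For the inductive step let $\rhoc(\vx_f, i) = h > 1$, so $\rhoc(\vx_f, j) = h - 1$. Applying the induction hypothesis to $j$ together with \lmref{move}, whenever $j$ is nonempty it eventually receives a signal from $\nextjc$ and advances its color-$c$ entities toward $\nextjc$; since $\rhoc(\vx_f, \nextjc) = h - 2 \neq h$, this motion is directed away from side $i$, so $j$'s safety region on side $i$ clears and, as entities entering $j$ from other sides are reset into their own transfer regions and also move toward $\nextjc$, it remains clear until a transfer from $i$ occurs. The color condition follows from \invref{onecolor} and the locking results: if $j \in \CSC(\vx, c)$, then \lmref{singleColorShared} and \lmref{eventualLocking} (under \assref{feasibleLocking}) guarantee that competing colors eventually vacate the color-shared cells and the color-$c$ lock is acquired, after which $j$ is empty or carries color $c$. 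With all three conditions established and $i$ remaining in $\NEPrev_j$ while nonempty, the round-robin token at $j$ reaches $i$, giving $\vx_{k'}.\signalj = i$.

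The main obstacle is the simultaneity demanded in the inductive step: the token mechanism only guarantees that $\token_j = i$ recurs, whereas the safety, color, and lock conditions are each obtained merely eventually, so I must argue that some token visit to $i$ coincides with a round in which all three hold. The resolution is to observe that, once the cells nearer the target are draining (by the induction hypothesis and \lmref{move}) and the color-$c$ lock has been acquired, the three conditions become \emph{persistent} --- they can only be destroyed by a transfer from $i$ into $j$, which is precisely the event a signal would enable --- so they are in force throughout at least one full token cycle and therefore hold at some round with $\token_j = i$. Some care is still needed to confirm, using $\ResetEntity$ and \assref{projectionProperty}, that entities admitted to $j$ across sides other than $i$ do not re-enter the safety region on side $i$ before the token returns there.
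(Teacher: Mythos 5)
Your induction mirrors the paper's own proof: induct on $h = \rhoc(\vx_f,i)$, handle the always-empty target $\tidc$ as the base case, and in the inductive step combine the hypothesis with \lmref{move} to drain $j = \nextic$, use \lmref{singleColorShared}, \lmref{eventualLocking}, and \assref{feasibleLocking} for the lock, and finish with token fairness at $j$. The base case, the locking treatment, and your identification of the central difficulty (a token visit to $i$ must \emph{coincide} with the enabling conditions holding at $j$) are all sound. One small omission: when $j$ is a source cell, entity initialization could itself block the signal forever, and \assref{fairness} is needed to exclude this; the paper invokes it explicitly.

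However, your resolution of the central difficulty has a genuine gap. The persistence claim --- that once the downstream cells are draining and the lock is held, the enabling conditions ``can only be destroyed by a transfer from $i$ into $j$'' --- is false. During a token cycle at $j$, cell $j$ grants signals to its \emph{other} nonempty in-neighbors $d \neq i$, and each grant can transfer entities from $d$ into $j$. A transferred entity is reset tangent to $\Side(d,j)$ on the inner side of $j$'s transfer region, and nothing prevents that point from simultaneously lying within orthogonal distance $3\d$ of $\Side(i,j)$: take an entity entering near the corner shared by those two sides. Its subsequent motion toward $\next_j\colorc$ need not remove it from the safety region on side $i$ either, so that region can be occupied precisely when the token returns to $i$. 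The caveat you defer at the end (``some care is still needed\ldots'') is therefore not a detail to be checked but the exact point where the argument fails, and it cannot be repaired as a geometric fact. The paper closes the hole with an algorithmic property you underuse: by \figref{signal}, \lnref{algo:tokenSame}, when $\token_j = i$ and the safety check of \lnref{algo:gap} fails, the signal is set to $\bot$ \emph{and the token remains at $i$}. While the token is parked at $i$ with signal $\bot$, no neighbor of $j$ at all can move entities into $j$; the only remaining dynamics are $j$'s own entities draining toward $\next_j\colorc$ by the inductive hypothesis and \lmref{move}. Hence $j$ eventually empties, the safety region on side $i$ clears, and the signal is granted to $i$ --- no coincidence of independently ``eventual'' conditions is needed. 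Replacing your persistence argument with this blocking (``sticky token'') argument makes the proof go through.
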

\begin{proof}
Fix some $c \in C$.
Since $i \in \TC(c)$, there exists $h < \infty$ such that for all $k > f$, $\rhoc(\vx_k,i) = h$.
We prove the lemma by inducting on $h$.
The base case is $h=1$.
Fix $i$ and instantiate $k' = f + \ns{\tidc}$.
By~\lmref{dist}, for any $t \in \TID$, for all non-faulty $i \in \Nbrs_{t}$, $\vx_{f}.\nextic = t$ since $k > f$.
For all $k > f$, if $\vx_k.\Membersi \neq \emptyset$, then $\signal_{\tidc}$ changes to a different neighbor with entities every round.
It is thus the case that $\abs{\vx_k.\NEPrev_{\tidc}} \leq \ns{\tidc}$ and since $\Members_{\tidc} = \emptyset$ always, exactly one neighbor satisfies the conditional of~\figref{signal},~\lnref{algo:gap} in any round, then within $\ns{\tidc}$ rounds, $\signal_{\tidc} = i$.

For the inductive case, let $k_s = k + h$ be the step in $\alpha$ after which all non-faulty $a \in \Nbrsi$ have $\vx_{k_s}.\next_a\colorc = i$ by~\lemref{dist}.
Also by~\lemref{dist}, $\exists m \in \Nbrsi$ such that $\vx_{k_s}.\dist_m < \vx_{k_s}.\disti$, implying that after $k_s$, $\abs{\vx_{k_s}.\NEPrevi} \leq \nsi$ since $\vx_{k_s}.\nexti = m$ and $\vx_{k_s}.\next_m \neq i$.
By the inductive hypothesis, $\vx_{k_s}.\signal_{\nextic} = i$ infinitely often.
If $i \in \SID$, then entity initialization does not prevent $\vx_k.\signali = a$ from being satisfied infinitely often by the second assumption introduced in~\ssref{routing}.
It remains to be established that $\signali = a$ infinitely often.
Let $a \in \vx_{k_s}.\NEPrevi$ where $\rhoc(\vx_{k_s}, a) = h+1$.

In any of the following cases, if $i \in \vx_{k_s}.\pintc$ and all cells $j \in \CSC(\vx_{k_s}, c)$ are empty, then by \lemref{eventualLocking}, eventually $\lockic$.
%
If $\abs{\vx_{k_s}.\NEPrevi} = 1$, then since the inductive hypothesis satisfies $\signal_{\nextic} = i$ infinitely often, then~\lmref{move} applies infinitely often, and thus $\Membersi = \emptyset$ infinitely often, finally implying that $\signali = a$ infinitely often.

If $\abs{\vx_{k_s}.\NEPrevi} > 1$, there are two sub-cases.
The first sub-case is when no entity enters $i$ from some $d \neq a \in \vx_{k_s}.\NEPrevi$, which follows by the same reasoning used in the $\abs{\vx_{k_s}.\NEPrevi} = 1$ case.
The second sub-case is when a entity enters $i$ from $d$, in which case it must be established that $\signali = a$ infinitely often.
This follows since if $\vx_{k'}.\tokeni = a$ where $k' > k_t > k_s$ and $k_t$ is the round at which an entity entered $i$ from $d$, and the appropriate case of~\lemref{safeSignal} is not satisfied, then $\vx_{k'+1}.\signali = \bot$ and $\vx_{k'+1}.\tokeni = a$ by~\figref{signal},~\lnref{algo:tokenSame}.
This implies that no more entities enter $i$ from either cell $d$ satisfying $d \neq a$.
Thus $\tokeni = a$ infinitely often follows by the same reasoning $\abs{\vx_{k_s}.\NEPrevi} = 1$ case.
\end{proof}

The final theorem establishes that entities on any cell in $\TC(c)$ eventually reach the target in $\alpha'$.
\begin{rtheorem}
For any $c \in C$, consider any $i \in \TC(c)$, $\forall k > f$, $\forall p \in \vx_k.\Membersi$, $\exists k' > k$ such that $p \in \vx_{k'}.\Members_{\nextic}$.
\thmlabel{progress}
\end{rtheorem}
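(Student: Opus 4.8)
The plan is to close the progress argument by welding together the two preceding lemmas with a well-founded descent on the rank $R$ introduced in \lmref{move}. Here \lmref{signal} supplies the scheduling guarantee (a nonempty target-connected cell is granted permission to move infinitely often) and \lmref{move} supplies the progress-per-move guarantee (each granted move strictly lowers $R$); the only remaining work is to argue that infinitely many strict decreases of a bounded, well-founded quantity are impossible, which forces the transfer.

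Concretely, I would fix $c \in C$, a cell $i \in \TC(c)$, a round $k > f$, and an entity $p \in \vx_k.\Membersi$. The target case $i = \tidc$ is handled separately (entities are consumed there), so assume $i \neq \tidc$ and put $j := \vx_k.\nextic$. Because $\vx_k$ is at least $2\Delta(\vx)$ rounds past the final failure, \corref{next} fixes $j$ for all rounds $\geq f$ and gives $\rhoc(\cdot,j) = \rhoc(\cdot,i) - 1$. I then suppose, toward a contradiction, that $p$ never leaves $i$, i.e. $p \in \vx_{k''}.\Membersi$ for every $k'' \geq k$. Then $\vx_{k''}.\Membersi \neq \emptyset$ at all these rounds, so \lmref{signal} yields infinitely many rounds $k' > k$ with $\vx_{k'}.\signal_{\nextic} = i$. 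Each such round meets the hypotheses of \lmref{move}, whence $R(\vx_{k'+1},p) < R(\vx_{k'},p)$ in the lexicographic order.

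Since $p$ is assumed to stay on $i$ and routes are stable past $\vx_f$, the first component $\rhoc(\cdot,i)$ of $R$ is constant, so every one of these strict decreases must lie in the second component, the distance from $\pc$ to $\Sideij$ measured along $\MoveVectorij$. By the $\Move$ function this distance drops by the cell's constant positive velocity (at most $\v$) on each granted move, yet it is nonnegative and bounded above by the diameter of the compact cell $\partitioni$. A nonnegative quantity bounded above cannot decrease by a fixed positive step infinitely often, so the standing assumption fails and $p$ must transfer off $i$; by the $\Move$ function entities leave $i$ only toward $\nextic$, so $p \in \vx_{k'}.\Members_{\nextic}$ for some $k' > k$, as required.

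The main obstacle is exactly this descent on the \emph{continuous} second component: in contrast to the integer first component, ruling out infinitely many granted moves needs the geometric inputs that the per-round advance toward $\Sideij$ is at least a fixed positive quantity and that the within-cell distance to $\Sideij$ is finite, which are guaranteed by the constancy of the velocity with $\v > 0$, the projection property of \assref{projectionProperty}, and the compactness of $\partitioni$. A secondary subtlety is the boundary case $j = \tidc$, where the transferring entity is consumed by the target and never appears in $\vx_{k'}.\Members_{\tidc}$; this case must be read as the entity reaching its target, and reachability of the target from every starting cell then follows by iterating this one-step result, with the integer component $\rhoc$ strictly decreasing at each transfer until it reaches $0$.
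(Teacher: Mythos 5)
Your proposal is correct and takes essentially the same route as the paper's proof: it combines \lmref{signal} (permission to move is granted infinitely often) with \lmref{move} (each granted move strictly decreases the rank $R$), and concludes that the bounded second component of $R$ cannot absorb infinitely many fixed positive decreases, forcing the transfer. The paper leaves this final descent step and the consumed-at-target boundary case implicit, whereas you make them explicit, but the decomposition and the key lemmas are identical.
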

\begin{proof}
Fix $c \in C$, $i \in \TC(c)$, a round $k > f$ and $p \in \vx_k.\Membersi$.
Let $h = \max_{i \in \TC(c)} \rhoc(\vx_f, i)$ which is finite.
By~\lmref{dist}, at every round after $k_s = k + h$ for any $i \in \TC(c)$, the sequence of identifiers $\beta = i$, $\vx_{k_s}.\nextic$, $\vx_{k_s}.\next_{\nextic}\colorc$, $\ldots$ forms a fixed path to $\tidc$.
Applying~\lmref{signal} to $i \in \TC(c)$ shows that there exists $k_m \geq k_s$ such that $\vx_{k_m}.\signal_{\nextic} = i$.
Now applying~\lmref{move} to $\vx_{k_m}$ establishes movement of $p$ towards $\vx_{k_s}.\nextic$, which is also $\vx_{k_m}.\nextic$.
\lmref{signal} further establishes that this occurs infinitely often, thus there is a round $k' > k_m$ such that $p$ gets transferred to $\vx_{k_m}.\Members_{\nextic}$.
\end{proof}
By an induction on the sequence of identifiers in the path $\beta$, it follows that entities on any cell in $\TC(c)$ eventually get consumed by the target.

\subsubsection*{Summary of Results}
In this section, we establish several invariant properties culminating in proving safety of the system, which meant that entities never collide, in spite of failures.
Next, we proved that the routing algorithm used to construct paths to the destinations is self-stabilizing in spite of arbitrary crash failures.
We next showed under an assumption that failures do not introduce deadlock scenarios that the locking algorithm allows multi-color flows to mutual exclusively take control of intersections (color-shared cells).
Finally, under a fairness assumption, we established the main progress property through two results, that any cell gets permission to move infinitely often, and that any cell with a permission to move decreases the distance of any entities on it from its destination.
\section{Simulation Experiments}
\seclabel{sim}
We have performed several simulation studies of the algorithm for evaluating its throughput performance.
In this section, we discuss the main findings with illustrative examples taken from the simulation results.
We implemented the simulator in Matlab, and all the partition figures displayed in the paper are created using it.

\begin{figure}[t]
	\centering%
	\begin{minipage}[t]{0.475\linewidth}
		\centering
		\includegraphics[width=\columnwidth]{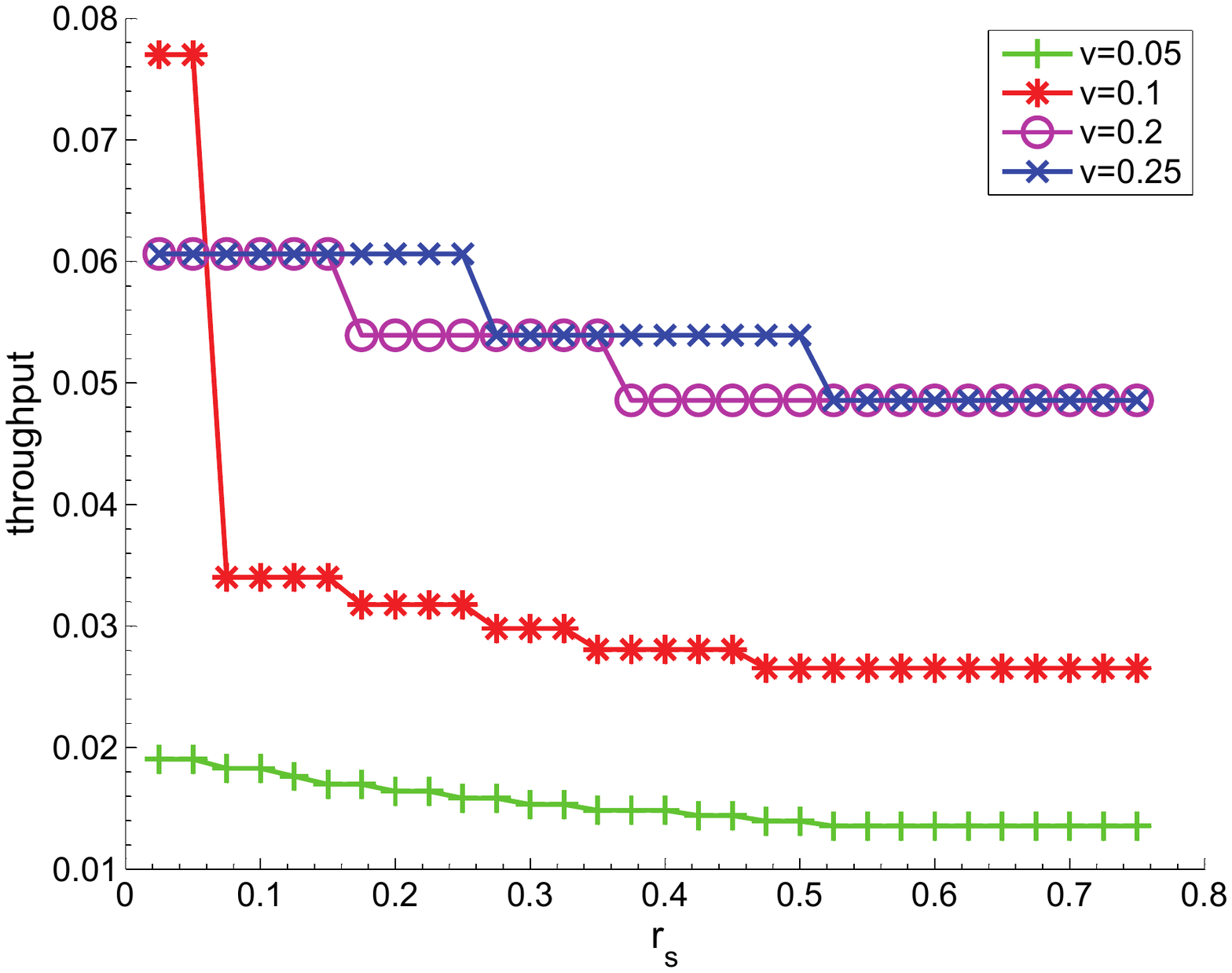}
		\caption{Throughput versus safety spacing $\rs$ for several values of $\v$, for $K = 2500$, $\l = 0.25$ for $\System$ with an $8 \times 8$ unit square tessellation.}
		\figlabel{rsplot}
	\end{minipage}
	\hspace{0.01\linewidth}
	\begin{minipage}[t]{0.475\linewidth}
		\centering
		\includegraphics[width=\columnwidth]{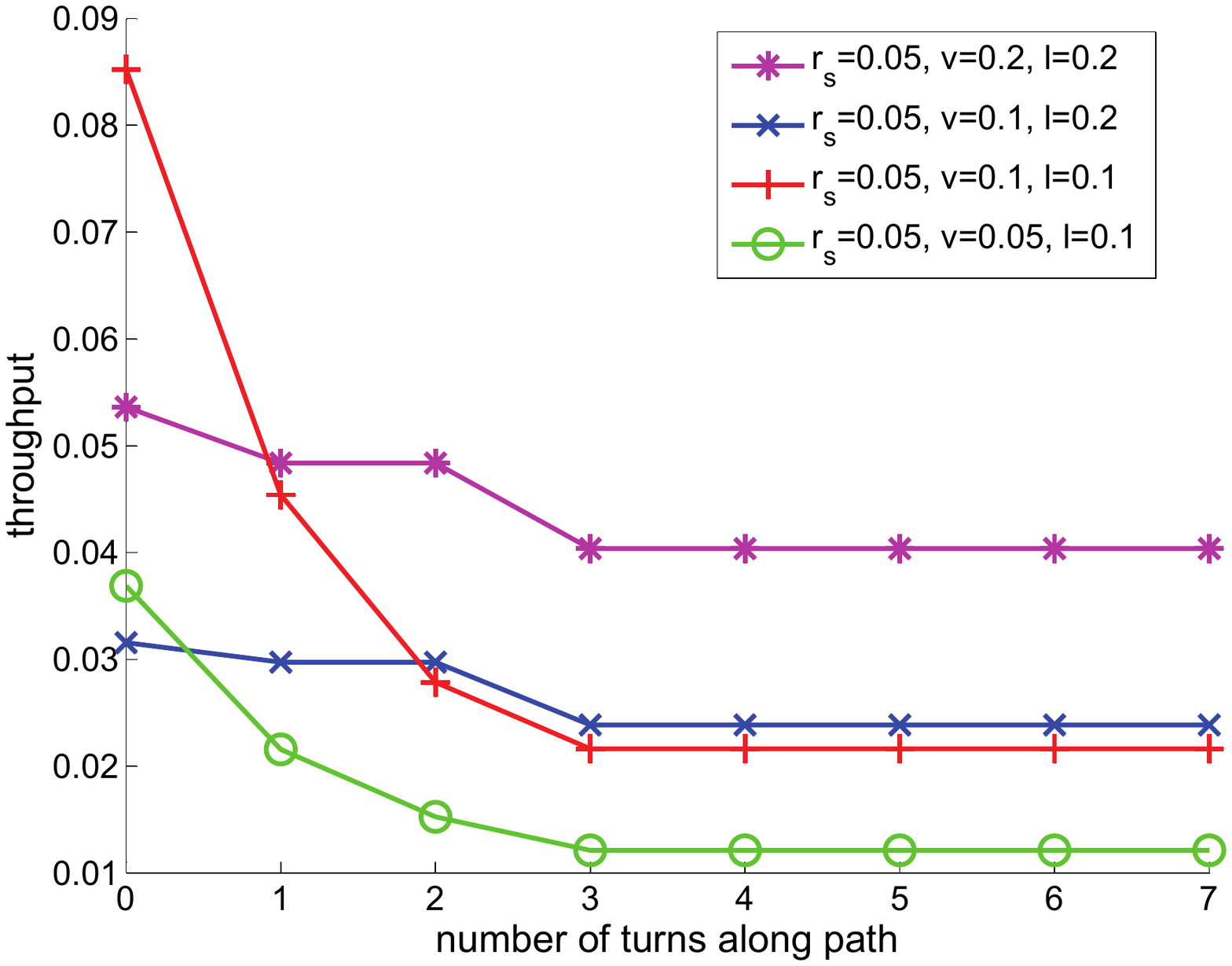}
		\caption{Throughput versus number of turns along a path, for a path of length $8$, where $K = 2500$, $\rs = 0.05$, and each of $\l$ and $\v$ are varied for $\System$ with an $8 \times 8$ unit square tessellation.}
		\figlabel{turnsplot}
	\end{minipage}
\end{figure}

Let the {\em $K$-round throughput\/} of $\System$ be the total number of entities arriving at the target over $K$ rounds, divided by $K$.
We define the {\em average throughput\/} (henceforth throughput) as the limit of $K$-round throughput for large $K$.
All simulations start at a state where all cells are empty and subsequently entities are added to the source cells.

\paragraph*{Single-color throughput without failures as a function of $\rs$, $\l$, $\v$}
Rough calculations show that throughput should be proportional to cell velocity $\v$, and inversely proportional to safety distance $\rs$ and entity radius $\l$. 
\figref{rsplot} shows throughput versus $\rs$ for several choices of $\v$ for an $8 \times 8$ unit square tessellation instance of $\System$ with a single entity color.
The parameters are set to $\l=0.25$ and $K=2500$.
%
%
The entities move along a line path where the source is the bottom left corner cell and the target is the top left corner cell.
For the most part, the inverse relationship with $v$ holds as expected: all other factors remaining the same, a lower velocity makes each entity take longer to move away from the boundary, which causes the predecessor cell to be blocked more frequently, and thus fewer entities reach $\tid$ from any element of $\SID$ in the same number of rounds.
In cases with low velocity (for example $\v = 0.1$) and for very small $\rs$, however, the throughput can actually be greater than that at a slightly higher velocity.
We conjecture that this somewhat surprising effect appears because at very small safety spacing, the potential for safety violation is higher with faster speeds, and therefore there are many more blocked cells per round.
We also observe that the throughput saturates at a certain value of $\rs$ ($\approx 0.55$).
This situation arises when there is roughly only one entity in each cell.

\paragraph*{Single-color throughput without failures as a function of the path}
For a sufficiently large number of rounds $K$, throughput is independent of the length of the path.
This of course varies based on the particular path and instance of $\System$ considered, but all other variables fixed, this relationship is observed.
More interesting however, is the relationship between throughput and path complexity, measured in the number of turns along a path.
\figref{turnsplot} shows throughput versus the number of turns along paths of length $8$.
This illustrates that throughput decreases as the number of turns increases, up to a point at which the decrease in throughput saturates.
This saturation is due to signaling and indicates that there is only one entity per cell.

\paragraph*{Single-color throughput under failure and recovery of cells}
Finally, we considered a random failure and recovery model in which at each round each non-faulty cell fails with some probability $p_f$ and each faulty cell recovers with some probability $p_r$~\cite{deville2009sss}.
A \emph{recovery} sets $\failedi = \false$ and in the case of $\tid$ also resets $\dist_{\tid} = 0$, so that eventually $\Route$ will correct $\nextj$ and $\distj$ for any $j \in \TC$.
Intuitively, we expect that throughput will decrease as $p_f$ increases and increase as $p_r$ increases.
\figref{singleTargetFailPlot} demonstrates this result for $0.01 \leq p_f \leq 0.05$ and $0.05 \leq p_r \leq 0.2$.
There is a diminishing return on increasing $p_r$ for a fixed $p_f$, in that for a fixed $p_f$ increasing $p_r$ results in smaller throughput gains.

\begin{figure}[t]
	\centering%
	\begin{minipage}[t]{0.475\linewidth}
		\centering
		\includegraphics[width=\columnwidth]{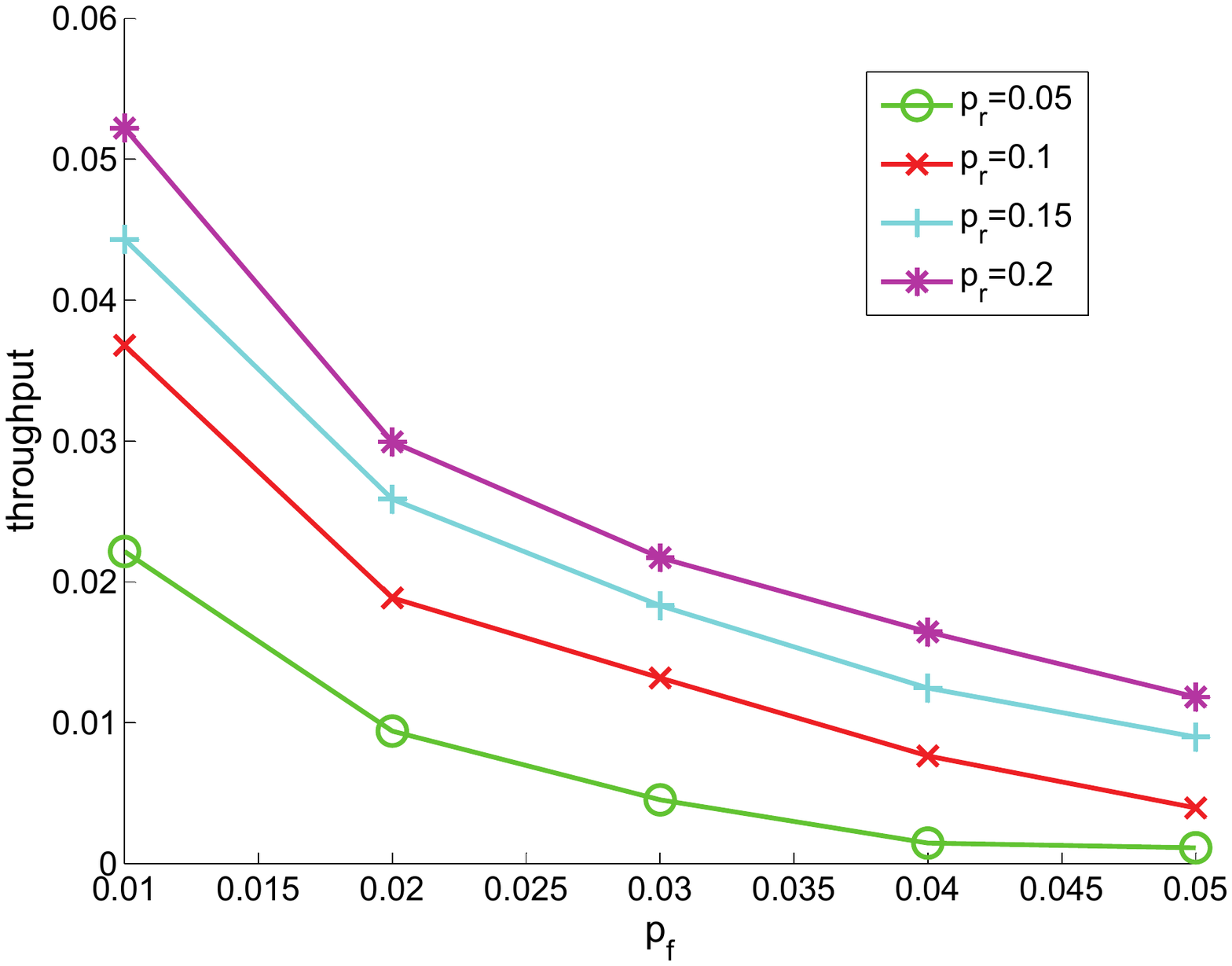}
		\caption{Throughput versus failure rate $p_f$ for several recovery rates $p_r$ with an initial path of length $8$, where $K = 20000$, $\rs = 0.05$, $\l = 0.2$, and $\v = 0.2$ for $\System$ with an $8 \times 8$ unit square tessellation.}
		\figlabel{singleTargetFailPlot}
	\end{minipage}
	\hspace{0.01\linewidth}
	\begin{minipage}[t]{0.475\linewidth}
		\centering
		\includegraphics[width=\columnwidth]{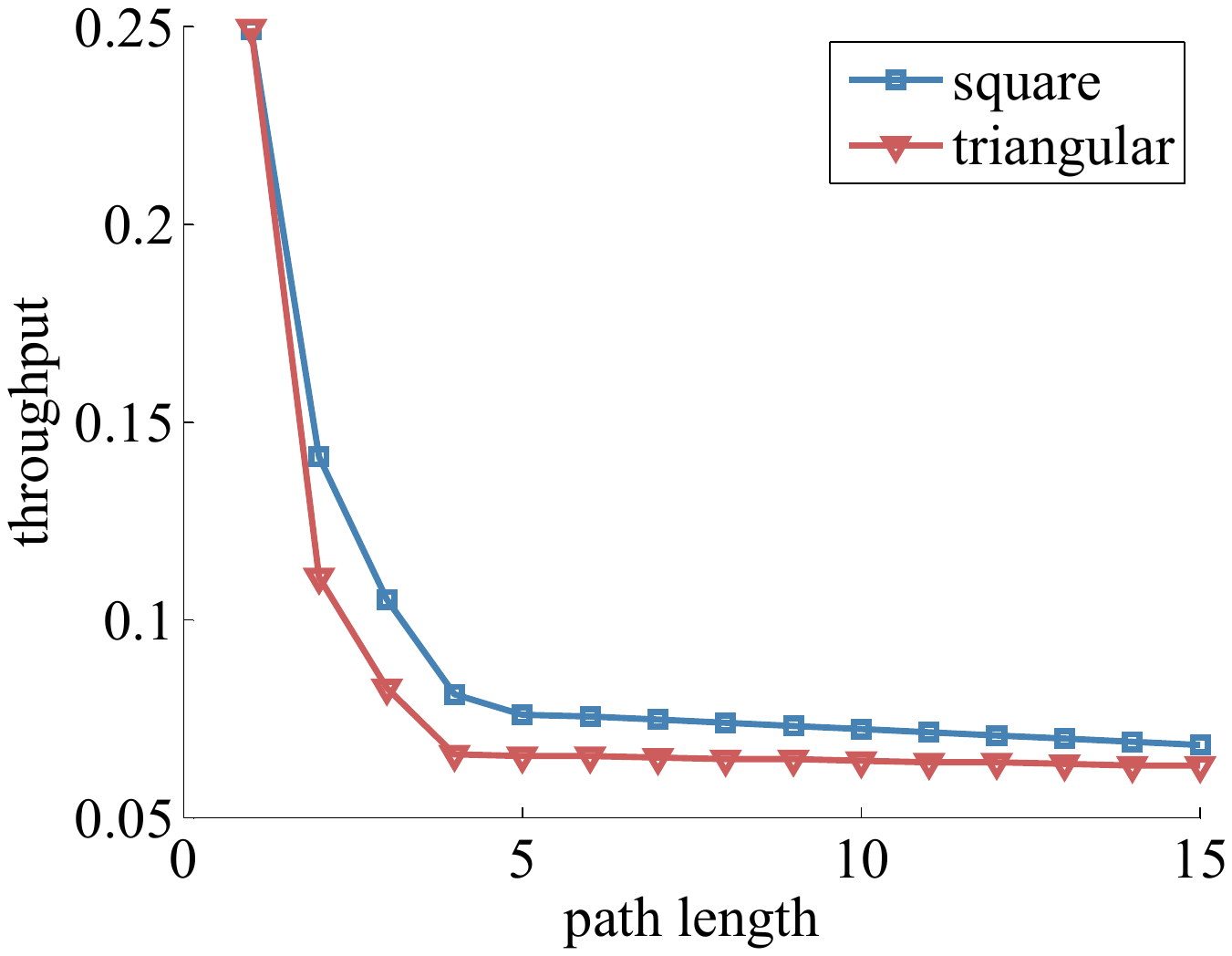}
		\caption{Throughput versus increasing path length of square (blue) and equilateral triangular (red) partitions.}
		\figlabel{pathLength}
	\end{minipage}
\end{figure}

\paragraph*{Multi-color throughput as a function of the number of intersecting cells}
Now we discuss the influence of multi-color throughput.
In the case where the paths between different sources and targets do not overlap, all the results from the single-color simulation results apply.
In the case where the paths do overlap, the mutual exclusion algorithm runs to ensure no deadlocks occur.
This additional control logic will have an influence on the throughput.
For the multi-color cases, we consider the summed throughput, which is the sum of the throughputs for each color.

\figref{throughputPathOverlap} shows the roughly exponential decrease in throughput as the fraction of overlapping paths increases for two colors with path length $8$ and no turns.
The fraction of overlapping paths is defined as the number of vertices in the color-shared cells $\CSC(\vx, c)$.
As the fraction increases, the paths lie completely on top of one another, so in this case with path length $8$, we have no overlap, $1$ cell overlap, etc.

\paragraph*{Multi-color throughput as a function of the number of intersecting colors}
Intersections (that is, having at least one color-shared cell) have a fixed cost on throughput.
Specifically, the summed throughput of there being two overlapping colors on a cell is the same as the summed throughput of three or more.
\figref{throughputColorOverlap} shows this fixed decrease in throughput as the number of overlapping colors increases for a fixed path of length $3$ with $3$ color-shared cells, where the decrease in throughput from having no overlaps to having one color overlapping is about $4.5$ times.
Once there are two colors, all additional colors do not decrease throughput.
This observation agrees with intuition---the decrease in throughput due to an intersection is independent of the number of destinations for the entities that must pass through that intersection.

\begin{figure}[t]
	\centering%
	\begin{minipage}[t]{0.475\linewidth}
		\centering
		\includegraphics[width=\columnwidth]{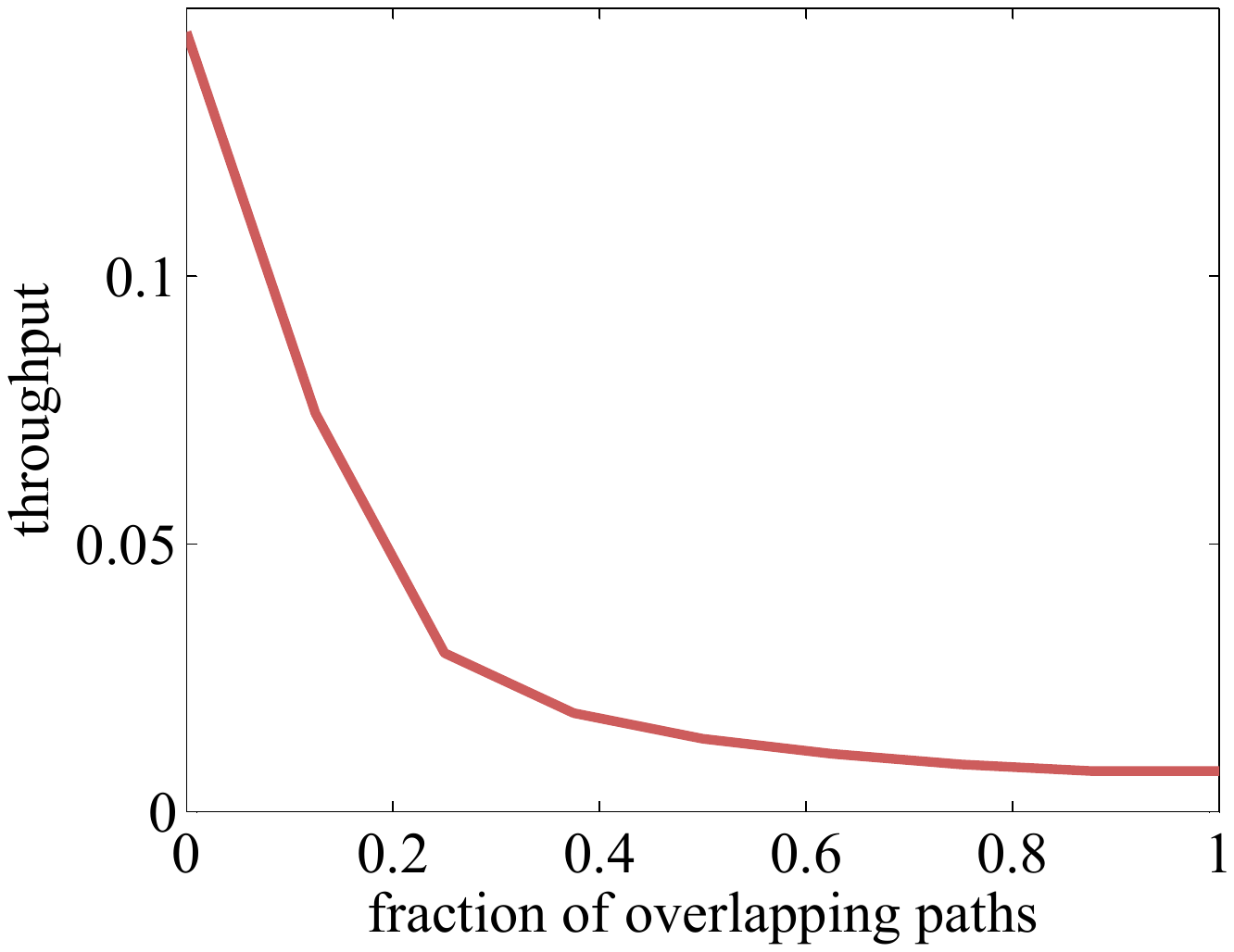}
		\caption{Throughput versus fraction of path overlap for two colors on a $1 \times 16$ unit square tessellation.}
		\figlabel{throughputPathOverlap}
	\end{minipage}
	\hspace{0.01\linewidth}
	\begin{minipage}[t]{0.475\linewidth}
		\centering
		\includegraphics[width=\columnwidth]{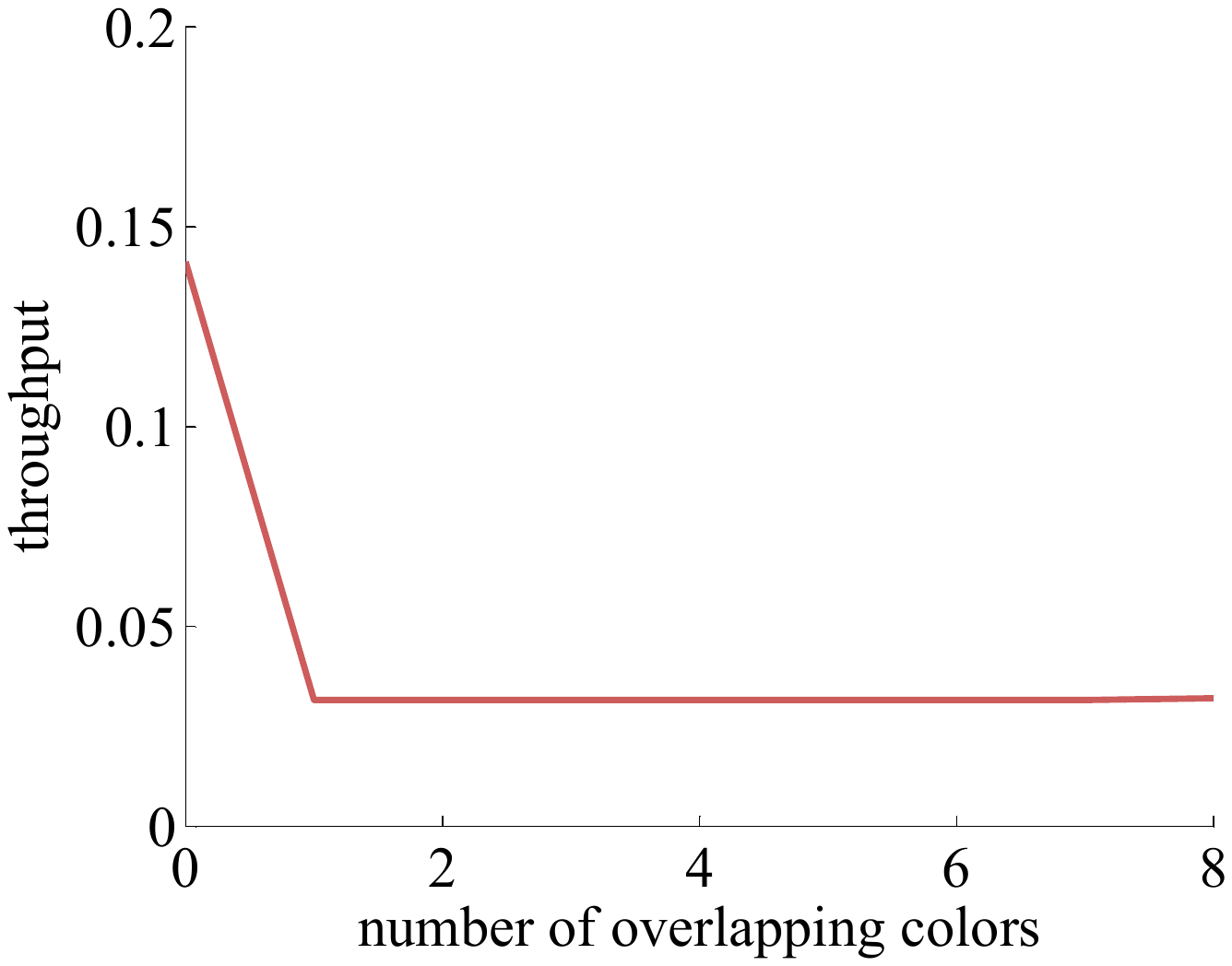}
		\caption{Throughput versus number of overlapping colors on a $1 \times 3$ unit square tessellation.}
		\figlabel{throughputColorOverlap}
	\end{minipage}
\end{figure}

%

%
\section{Related Work}
\seclabel{rw}
There is a large amount of work on traffic control in transportation systems (see, e.g.,~\cite{nolan1994book,ioannou1997book}) and robotics (see, e.g.,~\cite{bullo2009book}).
We briefly summarize some of the more related work, but highlight that we are presenting a formal model of an example of such systems.
%
%
Distributed air and automotive traffic control have been studied in many contexts.
Human-factors issues are considered in~\cite{leveson2001atm,prevot2002} to ensure collision avoidance between the coordination of numerous pilots and a supervisory controller modeling the semi-centralized air traffic control components.
The Small Aircraft Transportation Protocol (SATS) is semi-distributed air traffic control protocol designed for small airports without radar, so pilots and their aircraft coordinate among themselves to land after being assigned a landing sequence order by an automated system at the airport~\cite{abbott2004tr}.
SATS has been formally modeled and analyzed using a combination of model checking and automated theorem proving~\cite{munoz2006}.
SATS and this paper share an abstraction: the physical environment is a priori partitioned into a set of regions of interest, and properties about the whole system are proved using compositional analysis.
Safe conflict resolution maneuvers for distributed air traffic control are designed in~\cite{tomlin1998}.
A formal model of the traffic collision avoidance system (TCAS) is developed and analyzed for safety in~\cite{livadas1999rtss}.
TCAS is a system deployed on aircraft that alerts pilots when other aircraft are in close proximity and guides them along safe trajectories.

%
%
%
%
%
%
A distributed algorithm (executed by entities, vehicles in this case) for controlling automotive intersections without any stop signs is presented in~\cite{kowshik2008hscc}.
%
%
Some methods for ensuring liveness for automotive intersections are presented in~\cite{au2011aaai}.
A method to detect the mode of a hybrid system control model of an autonomous vehicle in intersections is developed in~\cite{verma2011}, and is used to reduce conservatism of the maximally controlled invariant set (the set of collision-free controls).
Efficient distributed intersection control algorithms are developed in~\cite{colombo2012hscc}.
There is a large amount of work on flocking~\cite{olfati2006} and platooning~\cite{varaiya1993,dolginova1997hart,swaroop1999,johnson2011jnsa}.
Only a few works consider failures in such systems, like the arbitrary failures considered in~\cite{gupta2006cdc,franceschelli2008acc}, the actuator failures considered in~\cite{johnson2011jnsa}, or in synchronization of swarm robot systems in~\cite{christensen2009}.

Distributed robot coordination on discrete abstractions like~\cite{wurman2008,gilbert2009taas,kloetzer2010tr,roozbehani2011,durham2011tr,ding2011ram,chen2012tr} can be viewed as traffic control.
For instance,~\cite{gilbert2009taas} establishes a formal connection between the continuous and the discrete parts of these protocols, and also presents a self-stabilizing algorithm with similar analysis to the analysis in this paper.
These works also decompose the continuous problem into a discrete abstraction by partitioning the environment, but all these works allow at most a single entity (robot) in each partition, while our framework allows numerous entities in each partition.
If several entities are to visit some destination in~\cite{kloetzer2010tr,ding2011ram,chen2012tr}, like our targets here, that destination is represented as the union of a set of partitions and each entity must reside in one of these partitions.
%
%
%

The Kiva Systems robotic warehouse~\cite{wurman2008} is a robotic traffic control system on square partitions, and can be described in our framework by allowing a single entity per cell.
In these warehouse systems, there is a central coordinator scheduling tasks, but the robots are responsible for path planning using an A$^*$-like search algorithm~\cite{wurman2008}.
However, several deadlock scenarios are identified when performing such path planning~\cite{roozbehani2011}.
The \emph{Adaptive Highways Algorithm} presented in~\cite{roozbehani2011} for scheduling entities relies on using the tentative trajectories of other robots collected by the central controller.
Deadlocks are also observed in other distributed robotics path-planning algorithms on discrete partitions in~\cite{luna2010iros}.
%
%
Deadlock scenarios can also arise without a discrete abstraction, such as in the doorways considered in~\cite{perez2008iros}, the path formation algorithms of~\cite{nouyan2008}, or the warehouse automation system of~\cite{kamagaew2011icara}.

Lastly, we mention that most of these works on traffic control from aviation, automotive, swarm robotics, and warehouse automation applications can be modeled within the framework of spatial computing~\cite{zambonelli2005,beal2006is,bachrach2010}.
%

\section{Discussion}
\seclabel{discussion}
In this section, we discuss some ways to generalize assumptions used in the paper and some alternative methods.
In this paper, we presented a distributed traffic control algorithm for the partitioned plane, which moves entities without collision to their destinations, in spite of failures.
While our algorithm is presented for two-dimensional partitions, an extension to some three-dimensional partitions (e.g., cubes and tetrahedra) follows in an obvious way.
An extension to the more general case where there are multiple sources and multiple targets of each color---and entities of each color move toward the nearest target of that color---is straightforward, but complicates notation.

\paragraph*{Self-Stabilizing Mutual Exclusion and Distributed Snapshot Algorithms}
There are a variety of mutual exclusion algorithms that could be used to determine locks (\figref{lock},~\lnref{pathmutex}).
For this paper, we require the overall system to be stabilizing and therefore the locking algorithm itself should be stabilizing.
To this end, any of the following algorithms could be adapted to our framework: the token circulation algorithm~\cite{colette1997da}, mutual exclusion~\cite{datta2000dc}, group mutual exclusion~\cite{beauquier2002icpads}, snap-stabilizing propagation of information with feedback (PIF) algorithm~\cite{bui2007dc}, or $k$-out-of-$l$ mutual exclusion~\cite{datta2009ipdps}.
A self-stabilizing distributed snapshot algorithm (see~\cite[Ch. 5]{dolev2000book}) can be used to determine if all $c$ color-shared cells are empty, after having had some entity of color $c$ (\figref{lock},~\lnref{pathempty}).
If all cells are empty, then another round of mutual exclusion commences, excluding color $c$ from the input set.

\paragraph*{General Triangulations and Affine Dynamics}
We assumed in~\secref{model} that the partitions satisfy several geometric assumptions for feasibility of entity transfers.
%
We considered using vector fields generated by a discrete abstraction like those presented in~\cite{belta2005tr,belta2006tac,habets2006tac,kloetzer2008tac}.
The affine vector fields generated on simplices in~\cite{belta2005tr,kloetzer2008tac} can be used to move an entity (with potentially nonholonomic or nonlinear dynamics) through any side of a cell in a triangulation (simplex)~\cite{belta2005tr,habets2006tac} or rectangle~\cite{belta2006tac}.
However, it turns out that it is impossible to maintain our notion of safety for such vector fields without additional collision avoidance mechanisms implemented on each entity.
This is due to a simple geometric observation---moving entities through a shorter side than the side they entered through may require the entities to come closer together.
%
%
For example, if a cell in the triangulation has an obtuse angle, then the vector field generated by~\cite{belta2005tr} flowing from the longest edge to the shortest edge has negative divergence.
Furthermore, a vector field having negative divergence implies the flow corresponding to any two distinct points starting in that field come closer together, hence safety cannot be maintained.
%
%
The distributed problems using these discrete abstractions~\cite{kloetzer2010tr,ding2011ram,chen2012tr} avoid this by requiring at most one entity in any (triangular) partition at a time.

We also mention a simple condition to ensure that triangulations have the required geometric partition properties (\assreftwo{projectionProperty}{transferFeasibility}).
If all the triangles in the triangulation are non-obtuse, then the triangulation satisfies these assumptions.
We also note that restricting allowable triangulations of an environment to ones without obtuse angles is not restrictive, since any polygon can be efficiently partitioned into a triangulation with non-obtuse~\cite{baker1988,bern1995} or acute~\cite{maehara2002} angles.

\paragraph*{Insufficiency of Disjoint Paths}
Finding disjoint paths, such as by using the algorithms from~\cite{mohanty1986,ogier1993,lee2001icc,marina2001icnp}, could be another approach to solving the multi-color problem, but the locking mechanism used here solves a more general problem.
Even without failures, there are many environments and choices of sources and targets for which there are no disjoint paths between sources and targets.
One such environment is shown in~\figref{exampleSystemSnubSquare}, where for two distinct colors $c$ and $d$, the paths between the respective sources and targets necessarily overlap, so an algorithm for finding disjoint paths cannot be used as there are no disjoint paths between sources and targets.
However, there are disjoint paths in some cases, so no scheduling would be necessary if these are found, but our routing algorithm does not necessarily find these, as the disjoint paths may not be shortest distance.
A self-stabilizing algorithm for finding disjoint paths on planar graphs would be an enhancement to our algorithm, as it would increase throughput in the case that paths need not overlap.

\begin{figure}[t]
	\centering%
	\begin{minipage}[t]{0.525\linewidth}
		\vspace{0pt} 
		\centering
		\includegraphics[width=\columnwidth]{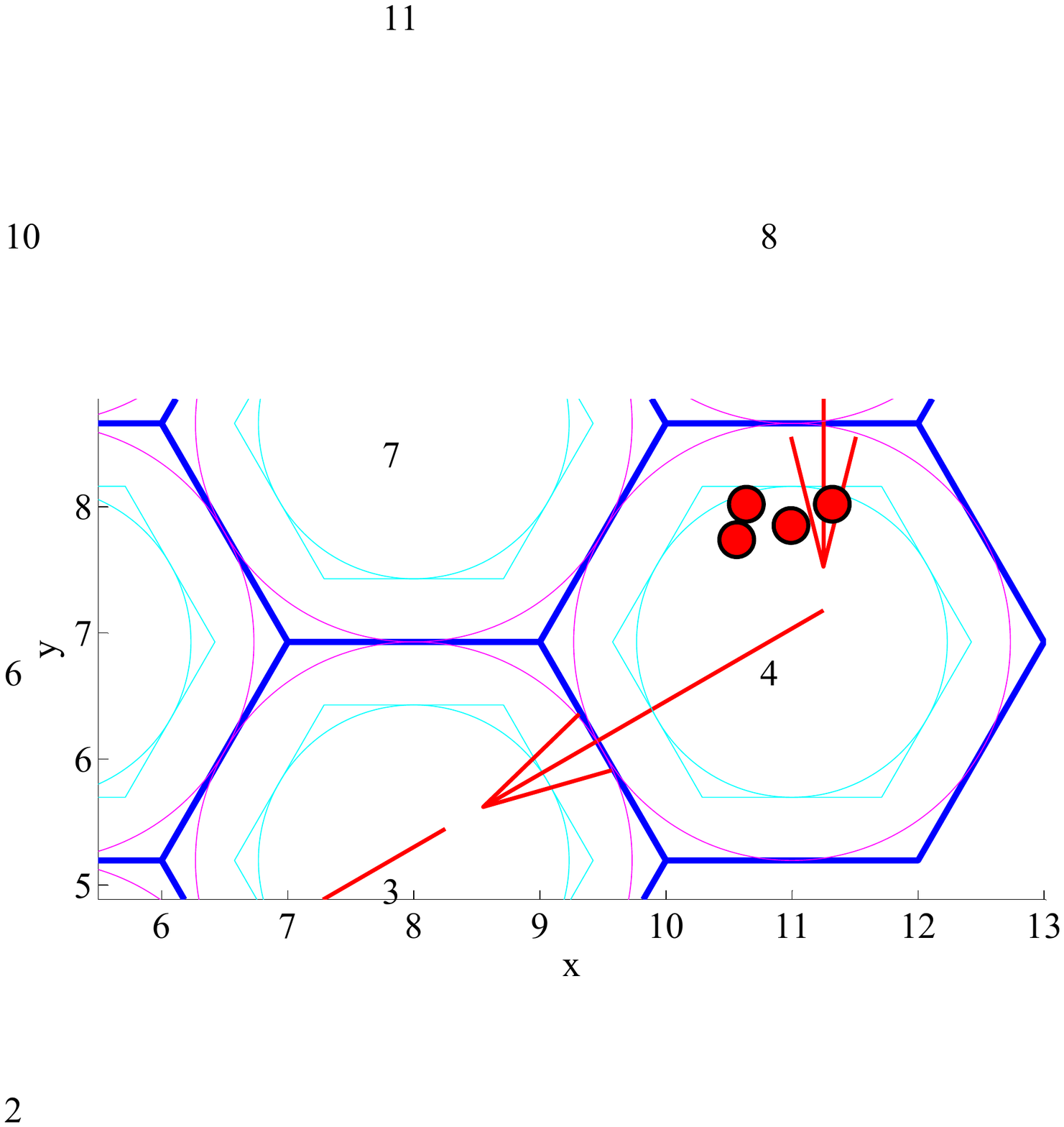}
		\caption{Hexagonal partition that does not satisfy the projection property (\assref{projectionProperty}).  An extension to allow such partitions would require enlarging the transfer region and receiving a signal from all of the potential next neighbors, which would require cells $3$ and $7$ both to signal cell $4$ to move.}
		\figlabel{exampleSystemHexagon}
	\end{minipage}
	\hspace{0.01\linewidth}
	\begin{minipage}[t]{0.425\linewidth}
		\vspace{0pt} 
		\centering
		\includegraphics[width=\columnwidth]{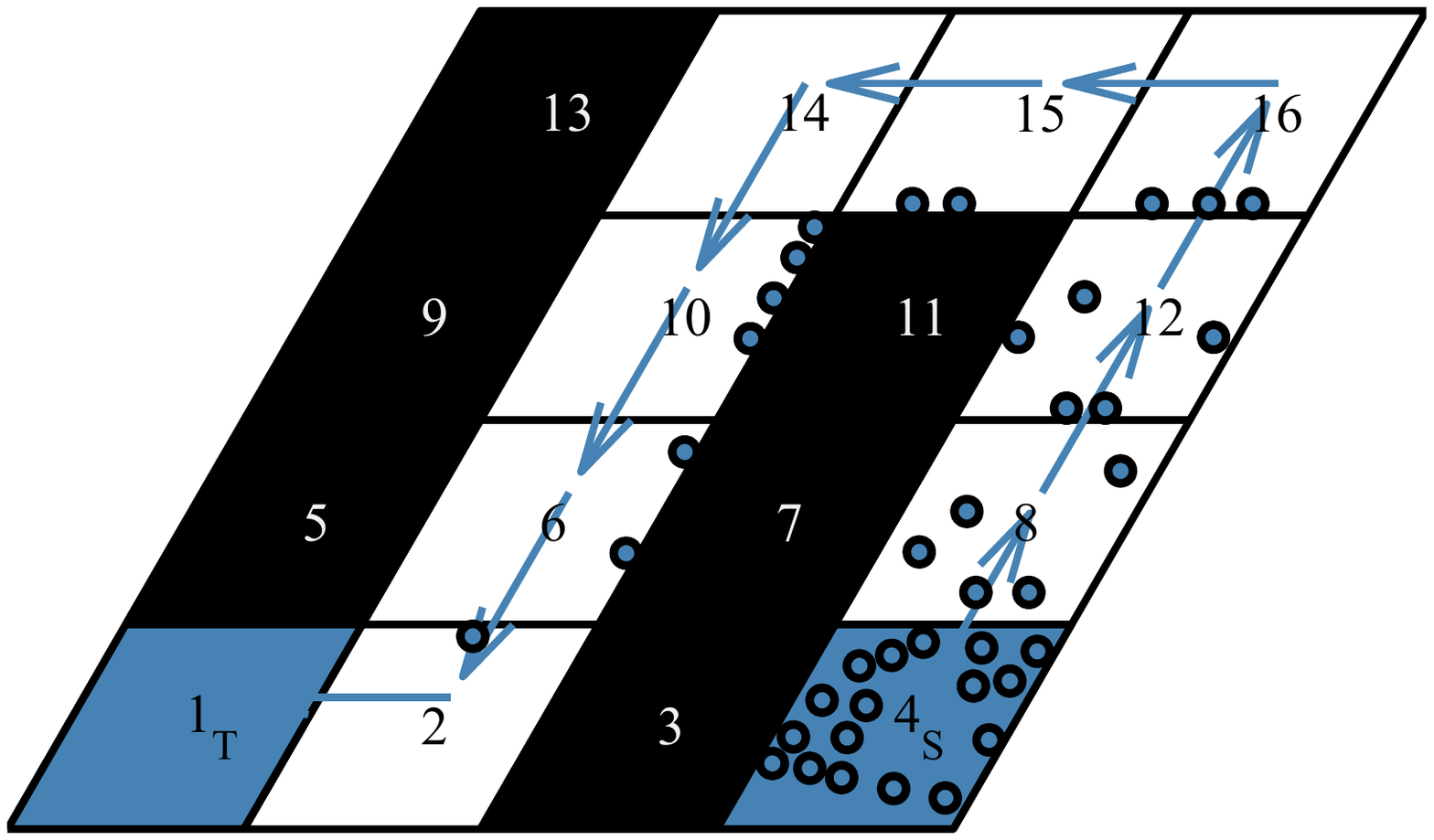}
		\caption{Example system on a parallelogram partition with failed cells in black.  The several turns along the path from the source to the target cause a saturation of entities on cells $6$, $10$, and $14$.  The movement vector $\MoveVectorij$ is defined as the unit vector parallel to the $x$ axis for movement between horizontal neighbors, and the unit vector parallel to the vertical sides of the parallelograms between vertical neighbors.}
		\figlabel{exampleSystemParallelogram}
	\end{minipage}
\end{figure}

%

%

%
\paragraph*{Back-Pressure and Wormhole Routing}
%
Back-pressure routing~\cite{tassiulas1992,awerbuch1993focs} is an algorithm for dynamically routing traffic over an underlying graph using congestion gradients.
If we view the color of each entity as its intended address and consider this problem from the perspective of queuing theory, one might think back-pressure routing could provide a throughput-optimal solution for the problem.
%
%
%
However, our physical motion model is incompatible with back-pressure routing.
For a given cell, our model does not allow arbitrary choice of the next neighbor for each entity on that cell.
In particular, when one cell moves its entities toward a neighboring cell, all entities sufficiently near the shared side between the two neighbors would transfer.

Wormhole routing~\cite{ni1993} is a flow control policy over a fixed underlying graph for determining when packets move to the node on the graph.
Addresses in wormhole routing are very short and come at the beginning of a packet, so a packet can be subdivided into pieces or \emph{flits} and begin being forwarded after the address is received, yielding a snake-like sequence of flits in transfer.
One could also view the sequence of entities on a path toward the appropriately-colored target (see~\figref{exampleSystemParallelogram}) sequence of flits flowing to a destination in wormhole routing.
While similar deadlock scenarios can arise in our system and wormhole routing, wormhole routing is incompatible with our system due to the motion model just like back-pressure routing.


%
%

%
\section{Conclusion}
\seclabel{conclusion}
We presented a self-stabilizing distributed traffic control protocol for the partitioned plane, where each partition controls the motion of all entities within that partition.
The algorithm guarantees separation between entities in the face of crash failures of the software controlling a partition.
Once new failures cease occurring, it guarantees progress of all entities that are neither isolated by (a) failed partitions, nor (b) cells with entities of other colors that become deadlocked due to failures, to the respective targets. 
Through simulations, we presented estimates of throughput as a function of velocity, minimum separation, single-target path complexity, failure-recovery rates, and multi-target path complexity.

%
%
%
It would be interesting to develop strategies allowing entities of different colors on a single cell.
Our strategy of preventing entities of different colors from residing on a single cell simplified some analysis, but it also complicated some analysis, by making it harder to prove progress because deadlock scenarios may frequently arise.
It would be interesting to develop algorithms allowing mixing and sorting of colors using different types of motion coupling.
It would also be interesting to design algorithms that can allow relaxing the assumption on what failures may occur to ensure liveness.
We believe this would require a more complex routing algorithm to temporarily move entities of some colors off the color shared cells, thus allowing some other color on the color shared cells to make progress.
%

\section{Acknowledgments}
%
The authors thank Zhongdong Zhu for helping develop the current version of the simulator, Karthik Manamcheri for helping develop an earlier version of the simulator, and Nitin Vaidya for helpful feedback.
We also thank the anonymous reviewers who helped improve the earlier version of this paper.
%





\seclabel{references}
\bibliographystyle{elsarticle-num}
\bibliography{IEEEabrv,master}

\begin{thebibliography}{10}
\expandafter\ifx\csname url\endcsname\relax
  \def\url#1{\texttt{#1}}\fi
\expandafter\ifx\csname urlprefix\endcsname\relax\def\urlprefix{URL }\fi
\expandafter\ifx\csname href\endcsname\relax
  \def\href#1#2{#2} \def\path#1{#1}\fi

\bibitem{helbing1998}
D.~Helbing, M.~Treiber, Jams, waves, and clusters, Science

\bibitem{kerner1998}
B.~S. Kerner, Experimental features of self-organization in traffic flow, Phys.
  Rev. Lett.

\bibitem{daganzo1999}
C.~Daganzo, M.~Cassidy, R.~Bertini, Possible explanations of phase transitions
  in highway traffic, Transportation Research A

\bibitem{nolan1994book}
M.~Nolan, Fundamentals of air traffic control,

\bibitem{borgonovo2003mac}
F.~Borgonovo, L.~Campelli, M.~Cesana, L.~Coletti, Mac for ad hoc inter-vehicle
  network: services and performance, in: IEEE Vehicular Technology Conf.,
  Vol.~5, 2003,

\bibitem{karpiriski2006}
M.~Karpiriski, A.~Senart, V.~Cahill, Sensor networks for smart roads, in:
  Pervasive Computing and Communications Workshops, 2006. PerCom Workshops
  2006. Fourth Annual IEEE International Conference on, 2006,

\bibitem{manvi2009}
S.~S. Manvi, M.~S. Kakkasageri, J.~Pitt, Multiagent based information
  dissemination in vehicular ad hoc networks, Mob. Inf. Syst.

\bibitem{azimi2011}
S.~R. Azimi, G.~Bhatia, R.~R. Rajkumar, P.~Mudalige, Vehicular networks for
  collision avoidance at intersections, SAE International Journal of Passenger
  Cars - Mechanical Systems

\bibitem{thrun2007}
S.~Thrun, M.~Montemerlo, H.~Dahlkamp, D.~Stavens, A.~Aron, J.~Diebel, P.~Fong,
  J.~Gale, M.~Halpenny, G.~Hoffmann, K.~Lau, C.~Oakley, M.~Palatucci, V.~Pratt,
  P.~Stang, S.~Strohband, C.~Dupont, L.-E. Jendrossek, C.~Koelen, C.~Markey,
  C.~Rummel, J.~van Niekerk, E.~Jensen, P.~Alessandrini, G.~Bradski, B.~Davies,
  S.~Ettinger, A.~Kaehler, A.~Nefian, P.~Mahoney, {Stanley: The Robot That Won
  the DARPA Grand Challenge}, in: M.~Buehler, K.~Iagnemma, S.~Singh (Eds.), The
  2005 DARPA Grand Challenge, Vol.~36 of Springer Tracts in Advanced Robotics,
  Springer Berlin / Heidelberg, 2007,

\bibitem{urmson2008}
C.~Urmson, J.~Anhalt, D.~Bagnell, C.~Baker, R.~Bittner, M.~N. Clark, J.~Dolan,
  D.~Duggins, T.~Galatali, C.~Geyer, M.~Gittleman, S.~Harbaugh, M.~Hebert,
  T.~M. Howard, S.~Kolski, A.~Kelly, M.~Likhachev, M.~McNaughton, N.~Miller,
  K.~Peterson, B.~Pilnick, R.~Rajkumar, P.~Rybski, B.~Salesky, Y.-W. Seo,
  S.~Singh, J.~Snider, A.~Stentz, W.~R. Whittaker, Z.~Wolkowicki, J.~Ziglar,
  H.~Bae, T.~Brown, D.~Demitrish, B.~Litkouhi, J.~Nickolaou, V.~Sadekar,
  W.~Zhang, J.~Struble, M.~Taylor, M.~Darms, D.~Ferguson, Autonomous driving in
  urban environments: Boss and the urban challenge, Journal of Field Robotics

\bibitem{yang2004mobi}
X.~Yang, L.~Liu, N.~Vaidya, F.~Zhao, A vehicle-to-vehicle communication
  protocol for cooperative collision warning, in: Mobile and Ubiquitous
  Systems: Networking and Services. MOBIQUITOUS. The First Annual International
  Conference on, 2004,

\bibitem{misener2005}
J.~Misener, R.~Sengupta, H.~Krishnan, Cooperative collision warning: Enabling
  crash avoidance with wireless technology, in: 12th World Congress on
  Intelligent Transportation Systems, 2005,

\bibitem{hoh2008mobisys}
B.~Hoh, M.~Gruteser, R.~Herring, J.~Ban, D.~Work, J.-C. Herrera, A.~M. Bayen,
  M.~Annavaram, Q.~Jacobson, Virtual trip lines for distributed
  privacy-preserving traffic monitoring, in: MobiSys '08: Proceeding of the 6th
  International Conference on Mobile Systems, Applications, and Services, ACM,
  New York, NY, USA, 2008,

\bibitem{girard2001cdc}
A.~Girard, J.~de~Sousa, J.~Misener, J.~Hedrick, A control architecture for
  integrated cooperative cruise control and collision warning systems, in:
  Decision and Control. Proceedings of the 40th IEEE Conference on, Vol.~2,
  2001,

\bibitem{mamei2003isads}
M.~Mamei, F.~Zambonelli, L.~Leonardi, Distributed motion coordination with
  co-fields: a case study in urban traffic management, in: Autonomous
  Decentralized Systems. ISADS. The Sixth International Symposium on, 2003,

\bibitem{abbott2004tr}
T.~S. Abbott, K.~M. Jones, M.~C. Consiglio, D.~M. Williams, C.~A. Adams, Small
  aircraft transportation system, higher volume operations concept: Normal
  operations, Tech. Rep. NASA/TM-2004-213022, NASA

\bibitem{kelly2007}
M.~Kelly, G.~Di~Marzo~Serugendo, A decentralised car traffic control system
  simulation using local message propagation optimised with a genetic
  algorithm, in: S.~Brueckner, S.~Hassas, M.~Jelasity, D.~Yamins (Eds.),
  Engineering Self-Organising Systems, Vol. 4335 of Lecture Notes in Computer
  Science, Springer, 2007,

\bibitem{kowshik2008hscc}
H.~Kowshik, D.~Caveney, P.~R. Kumar, Safety and liveness in intelligent
  intersections, in: Hybrid Systems: Computation and Control (HSCC), 11th
  International Workshop, Vol. 4981 of LNCS, 2008,

\bibitem{dresner2008jair}
K.~Dresner, P.~Stone, A multiagent approach to autonomous intersection
  management, Journal of Artificial Intelligence Research

\bibitem{weiss1999}
P.~Weiss, Stop-and-go science, Science News

\bibitem{omniwheel2008}
Kornylak, \href{http://www.kornylak.com/images/pdf/omni-wheel.pdf.}{Omniwheel
  brochure}
\newline\urlprefix\url{http://www.kornylak.com/images/pdf/omni-wheel.pdf.}

\bibitem{an2011iccps}
K.~An, A.~Trewyn, A.~Gokhale, S.~Sastry, Model-driven performance analysis of
  reconfigurable conveyor systems used in material handling applications, in:
  Cyber-Physical Systems (ICCPS), 2011 IEEE/ACM International Conference on,
  Vol.~2, IEEE, 2011,

\bibitem{gilbert2009taas}
S.~Gilbert, N.~Lynch, S.~Mitra, T.~Nolte, Self-stabilizing robot formations
  over unreliable networks, ACM Trans. Auton. Adapt. Syst.

\bibitem{dolev2005podc}
S.~Dolev, L.~Lahiani, S.~Gilbert, N.~Lynch, T.~Nolte, Virtual stationary
  automata for mobile networks, in: PODC '05: Proceedings of the twenty-fourth
  annual ACM symposium on Principles of distributed computing, ACM, New York,
  NY, USA, 2005,

\bibitem{nolte2007icdcs}
T.~Nolte, N.~Lynch, A virtual node-based tracking algorithm for mobile
  networks, in: Distributed Computing Systems, International Conference on
  (ICDCS), IEEE Computer Society, Los Alamitos, CA, USA, 2007,

\bibitem{arora1993}
A.~Arora, M.~Gouda, Closure and convergence: A foundation of fault-tolerant
  computing, {IEEE} Trans. Softw. Eng.

\bibitem{dolev2000book}
S.~Dolev, Self-stabilization,

\bibitem{loos2011fm}
S.~M. Loos, A.~Platzer, L.~Nistor, Adaptive cruise control: Hybrid,
  distributed, and now formally verified, in: M.~Butler, W.~Schulte (Eds.),
  Formal Methods, LNCS,

\bibitem{platzer2011hscc}
A.~Platzer, Quantified differential invariants, in: Proc. of the 14th ACM Intl.
  Conf. on Hybrid Systems: Computation and Control, ACM, 2011,

\bibitem{johnson2012iccps}
T.~T. Johnson, S.~Mitra, Parameterized verification of distributed
  cyber-physical systems: An aircraft landing protocol case study, in: ACM/IEEE
  3rd International Conference on Cyber-Physical Systems,

\bibitem{johnson2012forte}
T.~T. Johnson, S.~Mitra, A small model theorem for rectangular hybrid automata
  networks, in: Proceedings of the IFIP International Conference on Formal
  Techniques for Distributed Systems, Joint 14th Formal Methods for Open
  Object-Based Distributed Systems and 32nd Formal Techniques for Networked and
  Distributed Systems (FORTE-FMOODS), Vol. 7273 of LNCS,

\bibitem{johnson2010icdcs}
T.~T. Johnson, S.~Mitra, K.~Manamcheri, Safe and stabilizing distributed
  cellular flows, in: Proceedings of the 30th {IEEE} International Conference
  on Distributed Computing Systems (ICDCS),

\bibitem{deville2009sss}
R.~E.~L. DeVille, S.~Mitra, Stability of distributed algorithms in the face of
  incessant faults, in: Proceedings of 11th International Symposium on
  Stabilization, Safety, and Security of Distributed Systems ({SSS}), Springer,
  2009,

\bibitem{ioannou1997book}
P.~A. Ioannou, Automated Highway Systems,

\bibitem{bullo2009book}
F.~Bullo, J.~Cort\'es, S.~Mart{\'\i}nez, Distributed Control of Robotic
  Networks, Applied Mathematics Series, Princeton University Press, 2009,

\bibitem{leveson2001atm}
N.~Leveson, M.~de~Villepin, J.~Srinivasan, M.~Daouk, N.~Neogi, E.~Bachelder,
  J.~Bellingham, N.~Pilon, G.~Flynn, A safety and human-centered approach to
  developing new air traffic management tools, in: Proceedings Fourth
  USA/Europe Air Traffic Management R\&D Seminar, 2001,

\bibitem{prevot2002}
T.~Prevot, Exploring the many perspectives of distributed air traffic
  management: The multi aircraft control system (macs), in: Proceedings of the
  HCI-Aero, 2002,

\bibitem{munoz2006}
C.~Mu\~{n}oz, V.~Carre\~{n}o, G.~Dowek, Formal analysis of the operational
  concept for the small aircraft transportation system, in: M.~Butler,
  C.~Jones, A.~Romanovsky, E.~Troubitsyna (Eds.), Rigorous Development of
  Complex Fault-Tolerant Systems, Vol. 4157 of LNCS, Springer Berlin /
  Heidelberg, 2006,

\bibitem{tomlin1998}
C.~Tomlin, G.~Pappas, S.~Sastry, Conflict resolution for air traffic
  management: a study in multiagent hybrid systems, {IEEE} Trans. Autom.
  Control

\bibitem{livadas1999rtss}
C.~Livadas, J.~Lygeros, N.~A. Lynch, High-level modeling and analysis of
  {TCAS}, in: Proceedings of the 20th IEEE Real-Time Systems Symposium
  (RTSS'99), 1999,

\bibitem{au2011aaai}
T.-C. Au, N.~Shahidi, P.~Stone, Enforcing liveness in autonomous traffic
  management, in: Proceedings of the Twenty-Fifth Conference on Artificial
  Intelligence,

\bibitem{verma2011}
R.~Verma, D.~Vecchio, Semiautonomous multivehicle safety, Robotics Automation
  Magazine, IEEE

\bibitem{colombo2012hscc}
A.~Colombo, D.~D. Vecchio, Efficient algorithms for collision avoidance at
  intersections, in: Hybrid Systems: Computation and Control (HSCC),

\bibitem{olfati2006}
R.~Olfati-Saber, Flocking for multi-agent dynamic systems: algorithms and
  theory, {IEEE} Trans. Autom. Control

\bibitem{varaiya1993}
P.~Varaiya, Smart cars on smart roads: Problems of control, {IEEE} Trans.
  Autom. Control

\bibitem{dolginova1997hart}
E.~Dolginova, N.~Lynch, Safety verification for automated platoon maneuvers: A
  case study, in: HART'97 (International Workshop on Hybrid and Real-Time
  Systems), Vol. 1201 of LNCS,

\bibitem{swaroop1999}
D.~Swaroop, J.~K. Hedrick, Constant spacing strategies for platooning in
  automated highway systems, Journal of Dynamic Systems, Measurement, and
  Control

\bibitem{johnson2011jnsa}
T.~T. Johnson, S.~Mitra, Safe flocking in spite of actuator faults using
  directional failure detectors, Journal of Nonlinear Systems and Applications

\bibitem{gupta2006cdc}
V.~Gupta, C.~Langbort, R.~Murray, On the robustness of distributed algorithms,
  in: Decision and Control. 45th IEEE Conference on, 2006,

\bibitem{franceschelli2008acc}
M.~Franceschelli, M.~Egerstedt, A.~Giua, Motion probes for fault detection and
  recovery in networked control systems, in: American Control Conference, 2008,
  2008,

\bibitem{christensen2009}
A.~Christensen, R.~O'Grady, M.~Dorigo, From fireflies to fault tolerant swarms
  of robots, IEEE Transactions on Evolutionary Computation

\bibitem{wurman2008}
P.~R. Wurman, R.~D'Andrea, M.~Mountz, Coordinating hundreds of cooperative,
  autonomous vehicles in warehouses, AI Magazine

\bibitem{kloetzer2010tr}
M.~Kloetzer, C.~Belta, Automatic deployment of distributed teams of robots from
  temporal logic motion specifications, Robotics, IEEE Transactions on

\bibitem{roozbehani2011}
H.~Roozbehani, R.~D'Andrea, Adaptive highways on a grid, in: C.~Pradalier,
  R.~Siegwart, G.~Hirzinger (Eds.), Robotics Research, Vol.~70 of Springer
  Tracts in Advanced Robotics, Springer, 2011,

\bibitem{durham2011tr}
J.~W. Durham, R.~Carli, P.~Frasca, F.~Bullo, Discrete partitioning and coverage
  control for gossiping robots, Robotics, IEEE Transactions on

\bibitem{ding2011ram}
X.~C. Ding, M.~Kloetzer, Y.~Chen, C.~Belta, Automatic deployment of robotic
  teams, Robotics Automation Magazine, IEEE

\bibitem{chen2012tr}
Y.~Chen, X.~C. Ding, A.~Stefanescu, C.~Belta, Formal approach to the deployment
  of distributed robotic teams, Robotics, IEEE Transactions on

\bibitem{luna2010iros}
R.~Luna, K.~Bekris, Network-guided multi-robot path planning in discrete
  representations, in: Intelligent Robots and Systems (IROS). IEEE/RSJ
  International Conference on, 2010,

\bibitem{perez2008iros}
D.~Herrero-Perez, H.~Matinez-Barbera, Decentralized coordination of autonomous
  agvs in flexible manufacturing systems, in: Intelligent Robots and Systems.
  IROS. IEEE/RSJ International Conference on, 2008,

\bibitem{nouyan2008}
S.~Nouyan, A.~Campo, M.~Dorigo, Path formation in a robot swarm: Self-organized
  strategies to find your way home, Swarm Intelligence

\bibitem{kamagaew2011icara}
A.~Kamagaew, J.~Stenzel, A.~Nettstrater, M.~ten Hompel, Concept of cellular
  transport systems in facility logistics, in: Automation, Robotics and
  Applications (ICARA). 5th International Conference on, 2011,

\bibitem{zambonelli2005}
F.~Zambonelli, M.~Mamei, Spatial computing: An emerging paradigm for autonomic
  computing and communication, in: M.~Smirnov (Ed.), Autonomic Communication,
  Vol. 3457 of Lecture Notes in Computer Science, Springer Berlin / Heidelberg,
  2005,

\bibitem{beal2006is}
J.~Beal, J.~Bachrach, Infrastructure for engineered emergence on
  sensor/actuator networks, Intelligent Systems, IEEE

\bibitem{bachrach2010}
J.~Bachrach, J.~Beal, J.~McLurkin, Composable continuous-space programs for
  robotic swarms, Neural Computing \& Applications

\bibitem{colette1997da}
C.~Johnen, G.~Alari, J.~Beauquier, A.~Datta, Self-stabilizing depth-first token
  passing on rooted networks, in: M.~Mavronicolas, P.~Tsigas (Eds.),
  Distributed Algorithms, Vol. 1320 of Lecture Notes in Computer Science,
  Springer Berlin / Heidelberg, 1997, pp. 260--274,

\bibitem{datta2000dc}
A.~K. Datta, C.~Johnen, F.~Petit, V.~Villain, Self-stabilizing depth-first
  token circulation in arbitrary rooted networks, Distributed Computing

\bibitem{beauquier2002icpads}
J.~Beauquier, S.~Cantarell, A.~Datta, F.~Petit, Group mutual exclusion in tree
  networks, in: Parallel and Distributed Systems, 2002. Proceedings. Ninth
  International Conference on, IEEE Computer Society, 2002,

\bibitem{bui2007dc}
A.~Bui, A.~Datta, F.~Petit, V.~Villain, Snap-stabilization and pif in tree
  networks, Distributed Computing

\bibitem{datta2009ipdps}
A.~Datta, S.~Devismes, F.~Horn, L.~Larmore, Self-stabilizing k-out-of-l
  exclusion on tree networks, in: Parallel Distributed Processing, 2009. IPDPS
  2009. IEEE International Symposium on, 2009,

\bibitem{belta2005tr}
C.~Belta, V.~Isler, G.~Pappas, Discrete abstractions for robot motion planning
  and control in polygonal environments, Robotics, IEEE Transactions on

\bibitem{belta2006tac}
C.~Belta, L.~Habets, Controlling a class of nonlinear systems on rectangles,
  Automatic Control, IEEE Transactions on

\bibitem{habets2006tac}
L.~Habets, P.~Collins, J.~van Schuppen, Reachability and control synthesis for
  piecewise-affine hybrid systems on simplices, Automatic Control, IEEE
  Transactions on

\bibitem{kloetzer2008tac}
M.~Kloetzer, C.~Belta, A fully automated framework for control of linear
  systems from temporal logic specifications, Automatic Control, IEEE
  Transactions on

\bibitem{baker1988}
B.~Baker, E.~Grosse, C.~Rafferty, Nonobtuse triangulation of polygons, Discrete
  \& Computational Geometry

\bibitem{bern1995}
M.~Bern, S.~Michell, J.~Ruppert, Linear-size nonobtuse triangulation of
  polygons, Discrete \& Computational Geometry

\bibitem{maehara2002}
H.~Maehara, Acute triangulations of polygons, European Journal of Combinatorics

\bibitem{mohanty1986}
H.~Mohanty, G.~P. Bhattacharjee, A distributed algorithm for edge-disjoint path
  problem, in: Proceedings of the Sixth Conference on Foundations of Software
  Technology and Theoretical Computer Science, Springer-Verlag, London, UK, UK,
  1986,

\bibitem{ogier1993}
R.~Ogier, V.~Rutenburg, N.~Shacham, Distributed algorithms for computing
  shortest pairs of disjoint paths, Information Theory, IEEE Transactions on

\bibitem{lee2001icc}
S.-J. Lee, M.~Gerla, Split multipath routing with maximally disjoint paths in
  ad hoc networks, in: Communications. ICC. IEEE International Conference on,
  Vol.~10, 2001,

\bibitem{marina2001icnp}
M.~Marina, S.~Das, On-demand multipath distance vector routing in ad hoc
  networks, in: Network Protocols. Ninth International Conference on, 2001,

\bibitem{tassiulas1992}
L.~Tassiulas, A.~Ephremides, Stability properties of constrained queueing
  systems and scheduling policies for maximum throughput in multihop radio
  networks, Automatic Control, IEEE Transactions on

\bibitem{awerbuch1993focs}
B.~Awerbuch, T.~Leighton, A simple local-control approximation algorithm for
  multicommodity flow, in: Foundations of Computer Science. Proceedings., 34th
  Annual Symposium on, IEEE, 1993,

\bibitem{ni1993}
L.~Ni, P.~McKinley, A survey of wormhole routing techniques in direct networks,
  Computer

\end{thebibliography}







\end{document}